\newtheorem{definition}{Definition}
\newtheorem{proposition}{Proposition}
\tikzstyle{block} = [rectangle, fill, fill=teal!10, 
\tikzstyle{empty} = [rectangle, fill, fill=white, 
\tikzstyle{line} = [draw, -latex']
\tikzset{
    algorithm/.style={%
        draw, rounded corners,
        minimum width=0cm,
        minimum height=1cm,
        font={\fontsize{10pt}{12}\sffamily},
        fill=white,
        text=black,
        },
    ours/.style={
        draw, rounded corners,
        minimum width=0cm,
        minimum height=1cm,
        font={\fontsize{10pt}{12}\sffamily},
        fill=gray,
        text=white,
    },
    module/.style={%
        draw, rounded corners,
        minimum width=1.5cm,
        minimum height=0.5cm,
        font={\fontsize{7pt}{12}\sffamily},
        fill=teal,
        text=white
        },
    input/.style={
        draw, rounded corners,
        minimum width=1.5cm,
        minimum height=0.5cm,
        font={\fontsize{7pt}{12}\sffamily},
        fill=violet,
        text=white
    },
    output/.style={
        draw, rounded corners,
        minimum width=1.5cm,
        minimum height=0.5cm,
        font={\fontsize{7pt}{12}\sffamily},
        fill=brown,
        text=white
    },
    fixed/.style={
        draw, rounded corners,
        minimum width=1.5cm,
        minimum height=0.5cm,
        font={\fontsize{7pt}{12}\sffamily},
        fill=gray,
        text=white
    },
}
\newcounter{experimentcounter}
\newcommand{\normtwo}[1]{\left\lVert#1\right\rVert_2}
\newcommand{\mR}{\mathcal{R}} 
\newcommand{\mRxy}[2]{\mathcal{R}_{#1,#2}}
\newcommand{\mO}{\mathcal{O}}
\newcommand{\mC}{\mathcal{C}}
\newcommand{\mH}{\mathcal{H}}
\newcommand{\mD}{\mathcal{D}}
\newcommand{\mB}{\mathcal{B}}
\newcommand{\mS}{\mathcal{S}}
\newcommand{\mP}{\mathcal{P}}
\newcommand{\mQ}{\mathcal{Q}}
\newcommand{\mJ}{\mathcal{J}}
\newcommand{\mM}{\mathcal{M}}
\newcommand{\mI}{\mathcal{I}}
\newcommand{\vp}{\mathbf{p}}
\newcommand{\vv}{\mathbf{v}}
\newcommand{\vf}{\mathbf{f}}
\newcommand{\vd}{\mathbf{d}}
\newcommand{\vDelta}{\boldsymbol{\Delta}}
\newcommand{\vx}{\mathbf{x}}
\newcommand{\vg}{\mathbf{g}}
\newcommand{\vD}{\mathbf{D}}
\newcommand{\vzero}{\mathbf{0}}
\newcommand{\vc}{\mathbf{c}}
\newcommand{\vone}{\mathbf{1}}
\newcommand{\vlambda}{\boldsymbol{\lambda}}
\newcommand{\vP}{\mathbf{P}}
\newcommand{\vr}{\mathbf{r}}
\newcommand{\ie}{\emph{i.e.,} } 
\newcommand{\eg}{\emph{e.g.,} } 
\begin{document}

\title{DREAM: Decentralized Real-time Asynchronous Probabilistic Trajectory Planning for Collision-free Multi-Robot Navigation in Cluttered Environments}

\author{Bask{\i}n \c{S}enba\c{s}lar and
        Gaurav S. Sukhatme
\thanks{Bask{\i}n \c{S}enba\c{s}lar (corresponding author) is with NVIDIA and Gaurav S. Sukhatme is with the Department of Computer Science, University of Southern California, Los Angeles, CA, USA. GSS holds concurrent appointments as a Professor at USC and as an Amazon Scholar. This paper describes work performed at USC when B\c{S} was a PhD student at USC and is not associated with Amazon or NVIDIA. Email: \{baskin.senbaslar, gaurav\}@usc.edu.}
}

\markboth{}%
{}

\IEEEpubid{}

\maketitle

\begin{abstract}

Collision-free navigation in cluttered environments with static and dynamic obstacles is essential for many multi-robot tasks.
Dynamic obstacles may also be interactive, \ie their behavior varies based on the behavior of other entities.
We propose a novel representation for interactive behavior of dynamic obstacles and a decentralized real-time multi-robot trajectory planning algorithm allowing inter-robot collision avoidance as well as static and dynamic obstacle avoidance.
Our planner simulates the behavior of dynamic obstacles, accounting for interactivity.
We account for the perception inaccuracy of static and prediction inaccuracy of dynamic obstacles.
We handle asynchronous planning between teammates and message delays, drops, and re-orderings.
We evaluate our algorithm in simulations using 25400 random cases and compare it against three state-of-the-art baselines using 2100 random cases. 
Our algorithm achieves up to 1.68x success rate using as low as 0.28x time in single-robot, and up to 2.15x success rate using as low as 0.36x time in multi-robot cases compared to the best baseline.
We implement our planner on real quadrotors to show its real-world applicability.\looseness=-1
\end{abstract}

\begin{IEEEkeywords}
collision avoidance, multi-robot systems, motion and path planning, probabilistic trajectory planning
\end{IEEEkeywords}
\section*{Supplemental Video}
\begin{center}
\url{https://youtu.be/ct8okY5pmgI}
\end{center}

\section {Introduction}

Collision-free mobile robot navigation in cluttered environments is a foundational problem in settings such as autonomous driving~\cite{campbell2010autonomous}, autonomous last-mile delivery~\cite{li2020lastmile}, and warehouse automation~\cite{inam2018warehouse}.
In such environments, obstacles can be static or dynamic. Further, dynamic obstacles may be interactive, \ie changing their behavior according to the behavior of other entities.
There can be multiple mobile robots explicitly cooperating with each other to avoid collisions.
Here, we present DREAM: a \underline{d}ecentralized \underline{re}al-time \underline{a}synchronous probabilistic trajectory planning algorithm for \underline{m}ulti-robot teams (Fig.~\ref{Figure:ClutteredEnvironment}).\looseness=-1

Each robot uses onboard sensing to perceive its environment and classifies objects into three sets: static obstacles, dynamic obstacles, and teammates.
It produces a probabilistic representation of static obstacles, in which each static obstacle has an existence probability.
Each robot uses an onboard prediction system to predict the behaviors of dynamic obstacles and assigns realization probabilities to each behavior.
The perception system provides the current shapes of teammates, i.e., we require geometry-only sensing for teammates and do not require estimation/communication of higher-order derivatives, e.g., velocities or accelerations.
Each robot computes discretized separating hyperplane trajectories (DSHTs)~\cite{senbaslar2022async} between itself and teammates, and uses DSHTs during decision-making for inter-teammate collision avoidance, allowing safe operation under asynchronous planning and imperfect communication.\looseness=-1

\begin{figure}
    \centering
    \includegraphics[width=\linewidth]{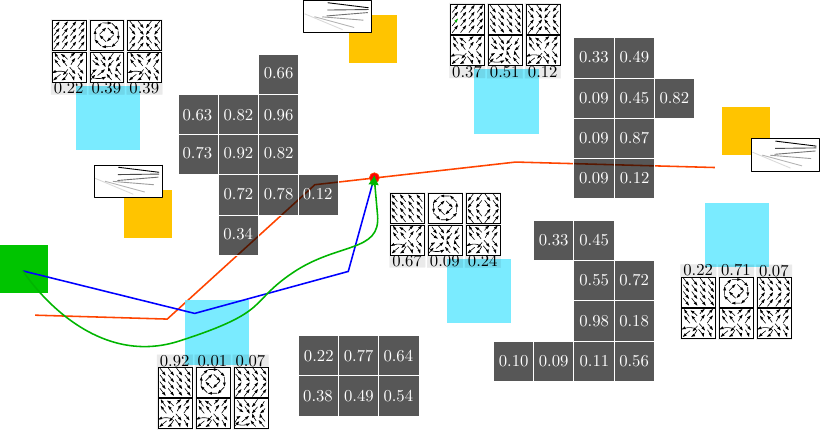}
    \caption{Static obstacles ({\color{darkgray}gray}) are modeled using their shapes and existence probabilities. Dynamic obstacles ({\color{cyan}cyan}) are modeled using their shapes, current positions, and a probability distribution over their behavior models, each of which comprises a movement and an interaction model. Teammates ({\color{Peach}orange}) are modeled using discretized separating hyperplane trajectories (DSHTs). The planner selects a goal position on a desired trajectory ({\color{red}red}), plans a spatiotemporal discrete path ({\color{blue}blue}) to the goal position while minimizing the probability of collision with static and dynamic obstacles and DSHT violations, and solves a quadratic program to fit a smooth trajectory ({\color{ForestGreen}green}) to the discrete plan while preserving the collision probabilities computed and DSHT hyperplanes not violated during search.}
    \label{Figure:ClutteredEnvironment}
    \vspace{-0.3in}
\end{figure}

Using these uncertain representations of static and dynamic obstacles and DSHTs for teammates, each robot generates dynamically feasible polynomial trajectories in real-time by primarily minimizing the probabilities of collisions with static and dynamic obstacles and DSHT violations, while minimizing distance, duration, rotation, and energy usage as secondary objectives using our planner.
A DSHT hyperplane is said to be violated when the robot is not fully contained in the safe side of the hyperplane.
During decision-making, we consider interactive behaviors of dynamic obstacles in response to robot actions.
The planner runs in a receding horizon fashion, in which the planned trajectory is executed for a short duration and a new trajectory is planned from scratch.
The planner can be guided with desired trajectories, therefore it can be used in conjunction with offline planners that perform longer horizon decision-making.\looseness=-1

DREAM utilizes a three-stage widely used pipeline~\cite{senbaslar2023rlss, senbaslar2021rlss, tordesillas2020mader,kondo2023rmader,liu2017planning, richter2013planning,chen2016planning,gao2018fast}, differing in specific operations from prior work at each stage:\looseness=-1
\begin{enumerate}
\item \textbf{Goal Selection:} Choose a goal position on the desired trajectory to plan to and the time at which the goal position should be (or should have been) reached,\looseness=-1
\item \textbf{Discrete Search:} Find a discrete spatiotemporal path to the goal that minimizes the probability of collision with static and dynamic obstacles, DSHT violations, and total duration, distance, and the number of rotations,\looseness=-1
\item \textbf{Trajectory Optimization:} Solve a quadratic program (QP) to safely fit a dynamically feasible trajectory to the discrete plan while preserving i) the collision probabilities computed and ii) DSHT elements not violated during search.\looseness=-1
\end{enumerate}


The contributions of our work are as follows:\looseness=-1
\begin{itemize}
    \item We introduce a simple representation for interactive behaviors of dynamic obstacles that can be used within a planner, enabling efficient forward simulation of multiple futures.\looseness=-1
    \item We propose a decentralized real-time trajectory planning algorithm for multi-robot navigation in cluttered environments that produces dynamically feasible trajectories avoiding static and (interactive) dynamic obstacles and teammates that plan asynchronously. \looseness=-1
    Our algorithm handles message delays, drops, and re-orderings between teammates.
    It explicitly accounts for sensing uncertainty with static obstacles and prediction uncertainty with dynamic obstacles.
    \item We evaluate our algorithm extensively in simulations to show its performance under different environments and configurations using 25400 randomly generated runs.
    We compare its performance to three state-of-the-art multi-robot navigation decision-making algorithms using 2100 randomly generated runs, and show that our algorithm achieves up to 1.68x success rate using as low as 0.28x time in the single-robot case, and 2.15x success rate using as low as 0.36x time in multi-robot scenarios compared to the best baseline.
    We implement our algorithm for physical quadrotors and show its feasibility in the real world.\looseness=-1
\end{itemize}

\section{Related Work}\label{Section:RelatedWork}

\textbf{Static and dynamic obstacle avoidance and accounting for uncertainty:} Various approaches for avoiding static and dynamic obstacles and integrating uncertainty associated with several sources (\eg unmodeled system dynamics, state estimation inaccuracy, perception noise, or prediction inaccuracies) have been proposed. 
\cite{tordesillas2020mader} proposes a polynomial trajectory planner to avoid static obstacles and dynamic obstacles given their predicted trajectories along with a maximum prediction error. 
\cite{qi2023unstruc} combines motion primitive search with spline optimization for static and dynamic obstacle avoidance. Chance constrained RRT (CC-RRT)~\cite{luders2010chance} plans trajectories to avoid static and dynamic obstacles, limiting the probability of collisions under Gaussian system and prediction noise.
\cite{aoude2013probabilistically} performs trajectory prediction using Gaussian mixture models to estimate motion models of dynamic obstacles, and uses these models within an RRT variant to predict their trajectories as a set of trajectory particles within CC-RRT to compute and limit collision probabilities.
\cite{zhu2019chance} proposes a chance-constrained MPC formulation for static and dynamic obstacle avoidance where uncertainty stems from Gaussian system model and state estimation noise, and dynamic obstacle model noise where dynamic obstacles are modeled using constant velocities with Gaussian acceleration noise.
RAST~\cite{chen2023rast} is a risk-aware planner that does not require segmenting obstacles into static and dynamic, but uses a particle-based occupancy map in which each particle is associated with a predicted velocity; and~\cite{nair2022collision} an MPC-based collision avoidance method where uncertainty stems from system noise of the robot and prediction noise for dynamic obstacles.
\cite{janson2018monte} uses a Monte Carlo sampling to compute collision probabilities of trajectories under system uncertainty.\looseness=-1

Prior decentralized decision-making approaches have been proposed for the cooperative navigation of multiple robots, in which each robot computes its own plan, cooperating with others during decision-making for collision avoidance.
We classify them into two groups: short and medium horizon decision-making algorithms, where the algorithms in the former output a single action to execute, while the algorithm in the latter output medium horizon trajectories, e.g., trajectories that are $2-10$ seconds long, in a receding horizon fashion.
Our approach falls into the latter category.\looseness=-1

\textbf{Short horizon multi-robot decision making:}~\cite{van2011reciprocal} presents optimal reciprocal collision avoidance (ORCA), a velocity obstacle-based approach, which outputs velocity commands.
\cite{wang2017safety} utilizes safety barrier certificates (SBC) for collision avoidance, which outputs acceleration commands.
GLAS~\cite{riviere2020glas} combines a learned network trained to imitate a global planner~\cite{honig2018quadswarms} with a safety module to generate safe actions.
\cite{nn2021batra} proposes using a learned network to control the thrusts of quadrotor propellers.
Several approaches to solve multi-agent path finding problems on grids using learned networks with direction outputs have also been proposed~\cite{damani2020primal2,li2020gnn}.\looseness=-1

\textbf{Medium horizon multi-robot decision making:}
\cite{zhou2017bvc} utilizes buffered Voronoi cells (BVC) within a model predictive control (MPC) framework, where each robot stays within its cell in each planning iteration.
BVC requires position-only sensing and does not depend on inter-robot communication.
~\cite{wang2021dpmc} presents a distributed model predictive control (DMPC) scheme that requires full state sensing  between robots.
Utilizing a DMPC scheme with full plan communication is also proposed~\cite{luis2019dmpc,luis2020dmpc}.
Accounting for asynchronous planning between robots becomes essential when planning durations increase.
\cite{senbaslar2022async} introduces discretized separating hyperplane trajectories (DSHTs) as a constraint generation mechanism to account for asynchronous planning under imperfect inter-robot communication, and extends BVC planner with the DSHTs to adopt it to asynchronous planning scenarios.
Differential flatness~\cite{murray1995differential} of the underlying systems is utilized to plan in the output space instead of the input space by many planners, which allows planning continuous splines with limited derivative magnitudes to ensure dynamic feasibility.
RTE~\cite{senbaslar2018rte} uses buffered Voronoi cells in a spline optimization framework and combines the optimization with discrete planning to locally resolve deadlocks.
Obstacle avoidance is ensured using safe navigation corridors (SNC) during optimization.
RLSS~\cite{senbaslar2023rlss} uses support vector machines instead of Voronoi diagrams to support robots with any convex shape and ensures kinematic feasibility of the generated problem.
MADER~\cite{tordesillas2020mader} combines discrete planning with spline optimization, treating SNC constraints as decision variables in a non-linear optimization problem.
It explicitly accounts for asynchronous planning using communication, while assuming instantaneous perfect communication between robots.
RMADER~\cite{kondo2023rmader} extends MADER to handle communication delays with known bounds between teammates.
RSFC~\cite{park2020rsfc,park2021rsfc} plans for piecewise splines with B\'ezier curve~\cite{farouki2012bernstein} pieces where safety between robots is ensured by making sure that their relative trajectories are in a safe set, where trajectories between robots are shared with instantaneous perfect communication.
LSC~\cite{park2022lsc} extends RSFC by using linear safety constraints without slack variables, which may cause the final solution to be unsafe in RSFC.
Ego-swarm~\cite{zhou2021ego} formulate collision avoidance as a cost function, which they optimize using gradient-based local optimization.
TASC~\cite{toumieh2022tasc,toumieh2023tasc} uses SNCs, which it computes between the communicated plans of other robots and the last plan of the planning robot.
TASC accounts for bounded communication delays.\looseness=-1

\textbf{Prediction of dynamical systems:} Predicting future states of dynamical systems is studied extensively and many recent approaches have been developed in the autonomous vehicle domain. 
\cite{wiest2012pred} uses Gaussian mixture models to estimate a Gaussian distribution over the future states of a vehicle given its past states.
\cite{lee2017desire} predicts future trajectories of dynamic obstacles by learning a posterior distribution over future dynamic obstacle trajectories given past trajectories.
Multi-modal prediction for vehicles to tackle bias against unlikely future trajectories during training is also investigated~\cite{kim2022diverse}.
\cite{bartoli2018context} presents a method for human movement prediction using context information modeling human-human and human-static obstacle interactions.
\cite{zhou2023dyn} generates multi-modal pedestrian predictions utilizing and modeling social interactions between humans and human intentions. 
State-of-the-art approaches that predict future trajectories of dynamic obstacles given past observations, potentially in a multi-modal way, use relatively computationally heavy approaches making them hard to re-query to model interactivity between the robot and dynamic obstacles during decision-making. 
In this paper, we propose \emph{policies} that are fast to query as prediction outputs instead of future trajectories.
Policies model intentions of the dynamic obstacles (movement models) as well as the interaction between dynamic obstacles and the robot (interaction models) as vector fields of velocities.\looseness=-1

\textbf{Novelty.} Compared to the listed multi-robot planning literature, DREAM is the only planner that explicitly models and accounts for interactivity of dynamic obstacles during decision making. In addition, it is the only approach that explicitly models multiple behavior hypotheses for dynamic obstacle behaviors and accounts for uncertainty across them.
Compared to short horizon approaches~\cite{alonsomora2013orca,wang2017safety,riviere2020glas,nn2021batra}, DREAM allows superior deadlock resolution as it reasons about longer horizon. In Section~\ref{Section:BaselineComparison} we show that this is the case for~\cite{wang2017safety}.
Compared to existing medium horizon approaches, DREAM allows dynamic obstacle avoidance unlike~\cite{zhou2017bvc, wang2021dpmc, luis2019dmpc, luis2020dmpc, senbaslar2018rte, senbaslar2023rlss, senbaslar2022async}. 
It explicitly accounts for static obstacle sensing uncertainty unlike all existing medium horizon approaches.
DREAM utilizes DSHTs introduced in~\cite{senbaslar2022async} for multi-robot collision avoidance which allows inter-robot collision avoidance under asynchronous planning, unbounded communication delays and message drops, which no other algorithm except DREAM and~\cite{senbaslar2022async} can provide.
Compared to~\cite{senbaslar2022async}, DREAM allows static and dynamic obstacle avoidance, and models sensing uncertainty of static and prediction uncertainty of dynamic obstacles.
We compare DREAM against existing medium horizon approaches in Section~\ref{Section:BaselineComparison}.\looseness=-1

\section{Problem Definition}\label{Section:Problem}

Consider a team of $\#^R$ robots.
Let $\mR_i^{robot}: \mathbb{R}^d \rightarrow P(\mathbb{R}^d)$ be the convex set-valued collision shape function of robot $i$, where $i \in \{1, \ldots, \#^R\}$ and $\mR_i^{robot}(\vp)$ is the space occupied by the robot when placed at position $\vp$.
Here, $d \in \{2,3\}$ is the ambient dimension that the robots operate in, and $P(\mathbb{R}^d)$ is the power set of $\mathbb{R}^d$.
We assume that the robots are rigid, and the collision shape functions are defined as $\mR_i^{robot}(\vp) = \mRxy{i}{\vzero}^{robot} \oplus \{\vp\}$ where $\mRxy{i}{\vzero}^{robot}$ is the shape of robot $i$ when placed at the origin $\vzero$ and $\oplus$ is the Minkowski sum operator.\looseness=-1

We assume that the robots are differentially flat~\cite{murray1995differential}, i.e., their states and inputs can be expressed in terms of their output trajectories and their finite derivatives, and the output trajectory is the Euclidean trajectory that the robot follows.
When a system is differentially flat, its dynamics can be accounted for by imposing output trajectory continuity up to the required degree of derivatives and imposing constraints on maximum derivative magnitudes.
Many existing systems like quadrotors~\cite{mellinger2011snap} or car-like robots~\cite{murray1993CarLike} are differentially flat.
Each robot $i$ requires output trajectory continuity up to degree $c_i$, and has maximum derivative magnitudes $\gamma_i^k$ for derivative degrees $k \in \{1, \ldots, K_i\}$ where $K_i$ is the degree up to which $i^{th}$ robot has a derivative magnitude limit.\looseness=-1

Each robot $i$ detects objects and classifies them into three sets: static obstacles $\mO_i$, dynamic obstacles $\mD_i$, and teammates $\mC_i$.
Static obstacles do not move.
Dynamic obstacles move with or without interaction with the robot.
Teammates are other robots that navigate executing the output of our planner.\looseness=-1

Each static obstacle $j \in \mO_i$ has a convex shape $\mQ_{i, j} \subset \mathbb{R}^d$, and has an existence probability $p_{i, j}^{stat} \in [0, 1]$.
Many existing data structures including occupancy grids~\cite{homm2010efficient} and octrees~\cite{hornung2013octomap} support storing obstacles in this form, in which each occupied cell is considered a separate obstacle with its existence probability.
Each perceived teammate $j\in \mC_i$ has a convex shape $\mS_{i, j}$ sensed by robot $i$.\looseness=-1

Each dynamic obstacle $j\in\mD_i$ is modeled using i) its current position $\vp^{dyn}_{i, j}$, ii) its convex set valued collision shape function $\mR_{i, j}^{dyn}: \mathbb{R}^d \rightarrow P(\mathbb{R}^d)$ where $\mR_{i,j}^{dyn}(\vp) = \mR_{i,j,\vzero}\oplus \{\vp\}$ and $\mR_{i, j, \vzero}$ is the shape of obstacle $j$ when placed at the origin, and iii) a probability distribution over its $\#_{i,j}^B$ predicted behavior models $\mB_{i, j, k}$, $k\in\{1, \ldots, \#_{i,j}^B\}$, where each behavior model is a 2-tuple $\mB_{i, j, k} = (\mM_{i, j, k}, \mI_{i, j, k})$ such that 
$\mM_{i, j, k}$ is the movement and $\mI_{i, j, k}$ is the interaction model of the dynamic obstacle.
$p^{dyn}_{i,j,k}$ is the probability that dynamic obstacle $j$ moves according to behavior model $\mB_{i, j, k}$ such that $\sum_{k=1}^{\#_{i,j}^B}p^{dyn}_{i, j, k} \leq 1$ for all $j \in \mD_i$.\looseness=-1

A movement model $\mM: \mathbb{R}^d \rightarrow \mathbb{R}^d$ is a function from the dynamic obstacle's position to its desired velocity, describing its intent.
An interaction model $\mI: \mathbb{R}^{4d} \rightarrow \mathbb{R}^d$ is a function describing robot-dynamic obstacle interaction of the form $\vv^{dyn} = \mI(\vp^{dyn}, \tilde{\vv}^{dyn}, \vp^{robot}, \vv^{robot})$.
Its arguments are 4 vectors: position $\vp^{dyn}$ of the dynamic obstacle, desired velocity $\tilde{\vv}^{dyn}$ of the dynamic obstacle (which is obtained from the movement model, i.e., $\tilde{\vv}^{dyn} = M(\vp^{dyn})$), and position $\vp^{robot}$ and velocity $\vv^{robot}$ of robot.
It outputs the velocity $\vv^{dyn}$ of the dynamic obstacle.
Notice that interaction models do not model interactions between multiple dynamic obstacles or interactions with multiple teammates, i.e., the velocity $\vv^{dyn}$ of a dynamic obstacle does not depend on the position or velocity of other dynamic obstacles or the other teammates from the perspective of a single teammate.
This is an accurate representation in sparse environments where moving objects are not in close proximity to each other but an inaccurate assumption in dense environments.
We choose to model interactions this way for computational efficiency as well as non-reliance on perfect communication: modeling interactions between multiple dynamic obstacles would result in a combinatorial explosion of possible dynamic obstacle behaviors since we support multiple hypotheses for each dynamic obstacle, and modeling interactions of dynamic obstacles with multiple teammates would require joint planning for all robots, requiring perfect communication\footnote{One could also define a single joint interaction model for all dynamic obstacles and perform non-probabilistic decision making with respect to them if inter-dynamic obstacle interactions exist and single dynamic obstacle models are insufficient at describing the dynamic obstacle behaviors.}.
While using only position and velocity to model robot-dynamic obstacle interaction is an approximation of reality, we choose this model because of its simplicity.
This simplification allows us to use interaction models to update the behavior of dynamic obstacles during discrete search efficiently\footnote{During planning, we evaluate movement and interaction models sequentially to compute the velocity of dynamic obstacles.
One could also combine movement and interaction models and have a single function to describe the dynamic obstacle behavior for planning.
We choose to model them separately to allow separate predictions of these models.}.\looseness=-1

We model sensing uncertainty of static obstacles using existence probabilities, while we only model prediction uncertainty of dynamic obstacles and not their sensing uncertainty.
The reason stems from the practicality of using different uncertainty representations for different types of obstacles.
Modeling sensing uncertainty of static obstacles using existence probabilities is readily provided in many spatial data structures, including octrees and occupancy grids.
Reasoning about dynamic obstacles require frame-by-frame tracking.
This typically requires segmentation of dynamic obstacles from the environment and estimating their states. The uncertainty of their state is typically represented using state covariances.
There is generally no question of whether a dynamic obstacle exists or not: It exists, but we are not certain what its state is.
Therefore, utilizing the existence probability model for dynamic obstacles is not useful.
Utilizing state covariance of dynamic obstacles for sensing uncertainty is problematic for our discrete search stage.
For each state with uncertainty, both the movement and interaction models would result in different velocity vectors for each possibility, making the collision checks and collision probability computation considerably more expensive even under discretization of the state space; and the probabilistic next state computation would be intractable unless movement and interaction models have limiting structures.
Hence, we choose not to model sensing uncertainty of dynamic obstacles in this work.
This allows easy introduction of multiple dynamic obstacle behavior model prediction algorithms and integration of them to decision making without requiring any structure in their outputs.\looseness=-1

Each robot $i$ has a state estimator that estimates its output derivatives up to derivative degree $c_i$, where degree $0$ corresponds to position, degree $1$ corresponds to velocity, and so on.
If state estimation accuracy is low, the trajectories computed by the planner can be used to compute the expected derivatives in an open-loop fashion assuming perfect execution.
The $k^{th}$ derivative of robot $i$'s current position is denoted with $\vp^{self}_{i,k}$ where $k \in \{0, \ldots, c_i\}$.\looseness=-1

Each robot $i$ is tasked with following a desired trajectory $\vd_i(t): [0, T_i] \rightarrow \mathbb{R}^d$ with duration $T_i$ without colliding with obstacles.
The desired trajectory $\vd_i(t)$ can be computed by a global planner using potentially incomplete prior knowledge about obstacles.
It does not need to be collision-free with respect to static or dynamic obstacles.
If no such global planner exists, it can be set to a straight line from a start position to a goal position.\looseness=-1

\section{Preliminaries}

\subsection{Discretized Separating Hyperplane Trajectories (DSHTs)}

We utilize DSHTs~\cite{senbaslar2022async} as constraints for inter-robot collision avoidance, which allows us to enforce safety when planning is asynchronous, i.e., robots start and end planning at different time points, and the communication medium is imperfect.
We briefly reiterate the theory behind DSHTs next.\looseness=-1

Let $\Omega$ be a commutative deterministic separating hyperplane computation algorithm:
it computes a separating hyperplane between two linearly separable sets, and each call to it with the same pair of arguments results in the same hyperplane. We use hard-margin support vector machines (SVM) as $\Omega$.\looseness=-1

Let $\vf_i(t):[0, T_{cur}] \rightarrow \mathbb{R}^d$ and $\vf_j(t):[0, T_{cur}]\rightarrow \mathbb{R}^d$ be the trajectories robots $i$ and $j$ executed from navigation start time $0$ to current time $T_{cur}$, respectively.
The separating hyperplane trajectory $\mH_{i,j}:[0, T_{cur}]\rightarrow \mH^d$ between robots $i$ and $j$ induced by $\Omega$ is defined as $\mH_{i,j}(t) = \Omega(\mR_i^{robot}(\vf_i(t)), \mR_j^{robot}(\vf_j(t)))$ where $\mH^d$ is the set of all hyperplanes in $\mathbb{R}^d$.\looseness=-1

Each robot $i$ stores a tail time point variable $T^{tail}_{i, j}\leq T_{cur}$ for each other robot $j$ denoting the time point after which the hyperplanes in $\mH_{i, j}$ should be used to constrain robot $i$'s plan against robot $j$.
If robot $i$ starts planning at $T_{cur}$, it uses all hyperplanes $\mH_{i, j}(t)$ where $t\in [T^{tail}_{i, j}, T_{cur}]$ to constrain itself against robot $j$ by enforcing its trajectory to be in the safe side of each hyperplane.
When robot $i$ successfully finishes a planning iteration that started at $T_{i, start}$, meaning that it is now constrained by hyperplanes from $T^{tail}_{i, j}$ to $T_{i, start}$ on $\mH_{i, j}$ against each other robot $j$, it broadcasts its identity $i$ and $T_{i, start}$.
Robots $j$ receiving the message update their tail time points against robot $i$ by setting $T^{tail}_{j, i}=T_{i, start}$, discarding constraints, and those that do not receive it do not update their tail points, over-constraining themselves against robot $i$.
As shown in~\cite{senbaslar2022async}, this constraint discarding and over-constraining mechanism ensures that active trajectories of each pair of robots share a constraining hyperplane at all times under asynchronous planning, message delays, drops and re-orderings.\looseness=-1

Let $\mH^{active}_{i,j} = \{\mH_{i,j}(t)\ |\ t\in [T^{tail}_{i,j}, T_{cur}]\}$ be the active set of separating hyperplanes of robot $i$ against robot $j$.
There are infinitely many hyperplanes in $\mH^{active}_{i,j}$ when $T^{tail}_{i,j} < T_{cur}$.
We sample hyperplanes in $\mH^{active}_{i, j}$ using a sampling step in the time domain shared among all teammates.
Let $\tilde{\mH}^{active}_{i, j}$, which is the active DSHT of robot $i$ against robot $j$, be the finite sampling of $\mH^{active}_{i, j}$, and $\tilde{\mH}^{active}_i = \{\mH \in\tilde{\mH}^{active}_{i, j}\ |\ j\in\{1, \ldots, \#^R\}\setminus \{i\}\}$ be the set of all hyperplanes that should constraint robot $i$ on a planning iteration that starts at time $T_{cur}$.
We use hyperplanes $\tilde{\mH}^{active}_i$ during planning to enforce safety against robot teammates.\looseness=-1

Use of DSHTs for inter-robot collision avoidance entail cooperative teammates, i.e., all teammates maintain and use DSHTs for inter-robot collision avoidance.
In our case, all robots run the same algorithm, utilizing DSHTs for inter-robot collision avoidance.
Apart from utilization of DSHTs for collision avoidance, there is no other interaction between robots.
DSHTs enable all pairs of robots to share a mutually excluding separating hyperplane constraint at all times.\looseness=-1

\subsection{Cost Algebraic $\text{A}^*$ Search}\label{Preliminaries:CostAlgebraic}

In the discrete search stage, we utilize the cost algebraic $\text{A}^*$ search~\cite{edelkamp2005cost}.
Cost algebraic $\text{A}^*$ is a generalization of standard $\text{A}^*$ to a richer set of cost systems, namely cost algebras.
Here, we summarize the formalism of cost algebras from the original paper~\cite{edelkamp2005cost}.
The reader is advised to refer to the original paper for a detailed and complete description of concepts.

\begin{definition}
Let $A$ be a set and $\times: A\times A \rightarrow A$ be a binary operator. A monoid is a tuple $(A, \times, \vzero)$  if the identity element $\vzero \in A$ exists, $\times$ is associative, and $A$ is closed under $\times$.\looseness=-1
\end{definition}

\begin{definition}
Let $A$ be a set. A relation $\preceq\ \subseteq A \times A$ is a total order if it is reflexive, anti-symmetric, transitive, and total.
The least operation $\sqcup$ gives the least element of the set according to a total order, i.e., $\sqcup A = c \in A$ such that $c \preceq a\ \forall a\in A$, and the greatest operation $\sqcap$ gives the greatest element of the set according to the total order, i.e., $\sqcap A = c\in A$ such that $a \preceq c\ \forall a \in A$.\looseness=-1
\end{definition}

\begin{definition}
A set $A$ is isotone if $a \preceq b$ implies both $a \times c \preceq b \times c$ and $c \times a \preceq c \times b$ for all $a,b,c\in A$.
$a \prec b$ is defined as $a \preceq b \wedge a \neq b$.
A set $A$ is strictly isotone if $a \prec b$ implies both $a \times c \prec b \times c$ and $c\times a \prec c \times b$ for all $a,b,c\in A, c\neq \vone$ where $\vone = \sqcap A$.\looseness=-1
\end{definition}

\begin{definition}
A cost algebra is a 6-tuple $(A, \sqcup, \times, \preceq, \vone, \vzero)$ such that $(A, \times, \vzero)$ is a monoid, $\preceq$ is a total order, $\sqcup$ is the least operation induced by $\preceq$, $\vone = \sqcap A$, and $\vzero = \sqcup A$, i.e. the identity element is the least element.\looseness=-1
\end{definition}

Intuitively, $A$ is the set of cost values, $\sqcup$ is the operation used to select the best among the values, $\times$ is the operation to cumulate the cost values, $\preceq$ is the operator to compare the cost values, $\vone$ is the greatest and $\vzero$ is the least cost value as well as the identity cost value under $\times$.\looseness=-1


To support multiple objectives during search, the prioritized Cartesian product of cost algebras is defined as follows.\looseness=-1

\begin{definition}\label{Definition:PrioritizedCartesianProductOfCostAlgebras}
    The prioritized Cartesian product of cost algebras $C_1 = (A_1, \sqcup_1, \times_1, \preceq_1, \vone_1, \vzero_1)$ and $C_2 =  (A_2, \sqcup_2, \times_2, \preceq_2, \vone_2, \vzero_2)$, denoted by $C_1 \times_p C_2$ is a tuple $(A_1 \times A_2, \sqcup, \times, \preceq, (\vone_1, \vone_2), (\vzero_1, \vzero_2))$ where $(a_1, a_2) \times (b_1, b_2) = (a_1 \times_1 b_1, a_2 \times_2 b_2)$, $(a_1,a_2) \preceq (b_1,b_2)$ iff $a_1 \prec_1 b_1 \vee (a_1 = b_1 \wedge a_2 \preceq_2 b_2)$, and $\sqcup$ is induced by $\preceq$.\looseness=-1
\end{definition}

Note that, $\preceq$ in Def.~\ref{Definition:PrioritizedCartesianProductOfCostAlgebras} induces lexicographical ordering among cost algebras $C_1$ and $C_2$.\looseness=-1

\begin{proposition}\label{Proposition:CartesianProductOfCostAlgebras}
If $C_1$ and $C_2$ are cost algebras, and $C_1$ is strictly isotone, then $C_1 \times_p C_2$ is also a cost algebra.
If, in addition, $C_2$ is strictly isotone, $C_1 \times_p C_2$ is also strictly isotone.\looseness=-1
\begin{proof}
Given in~\cite{edelkamp2005cost}.
\end{proof}
\end{proposition}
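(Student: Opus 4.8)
The plan is to verify directly that the six-tuple $C_1 \times_p C_2 = (A_1 \times A_2, \sqcup, \times, \preceq, (\vone_1,\vone_2), (\vzero_1,\vzero_2))$ satisfies every clause of the cost-algebra definition, reducing each clause to the corresponding coordinatewise property of the factors $C_1$ and $C_2$. The monoid and total-order clauses are mechanical; the genuine content is the second claim, strict isotonicity of the product, where I expect the only real difficulty to lie. The left- and right-multiplication versions are symmetric throughout, so I would argue only one of each.

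First I would establish the monoid structure. Closure of $A_1 \times A_2$ under the componentwise product, and associativity of $\times$, follow immediately from the closure and associativity of $\times_1$ and $\times_2$ applied in each coordinate. That $(\vzero_1,\vzero_2)$ is a two-sided identity follows from $(\vzero_1,\vzero_2) \times (a_1,a_2) = (\vzero_1 \times_1 a_1, \vzero_2 \times_2 a_2) = (a_1,a_2)$, using that $\vzero_1,\vzero_2$ are the identities of $\times_1,\times_2$. Next I would check that the lexicographic relation $\preceq$ is a total order: reflexivity reduces to $a_2 \preceq_2 a_2$; for antisymmetry, if $(a_1,a_2) \preceq (b_1,b_2)$ and $(b_1,b_2) \preceq (a_1,a_2)$ both hold, the $\prec_1$-disjuncts are ruled out by antisymmetry of $\preceq_1$, forcing $a_1 = b_1$ and then $a_2 = b_2$ by antisymmetry of $\preceq_2$; transitivity and totality follow by the standard case split on whether the primary coordinates agree, invoking transitivity and totality of $\preceq_1$ and $\preceq_2$. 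Since $\sqcup$ is defined as the least operation induced by $\preceq$, it is consistent by construction. Finally $(\vone_1,\vone_2)$ is greatest and $(\vzero_1,\vzero_2)$ least: for any $(a_1,a_2)$ either $a_1 \prec_1 \vone_1$, so the first disjunct gives $(a_1,a_2) \preceq (\vone_1,\vone_2)$, or $a_1 = \vone_1$ and then $a_2 \preceq_2 \vone_2$ closes it; the least-element claim is dual, and $(\vzero_1,\vzero_2)$ being simultaneously identity and least matches the required identity-is-least condition. This would complete the first claim.

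For the second claim I would assume $(a_1,a_2) \prec (b_1,b_2)$ and $(c_1,c_2) \neq (\vone_1,\vone_2)$, and show $(a_1 \times_1 c_1, a_2 \times_2 c_2) \prec (b_1 \times_1 c_1, b_2 \times_2 c_2)$. The hypothesis splits into two cases. If $a_1 = b_1$ and $a_2 \prec_2 b_2$, the primary coordinates of the products agree, $a_1 \times_1 c_1 = b_1 \times_1 c_1$, and strict isotonicity of $C_2$ yields $a_2 \times_2 c_2 \prec_2 b_2 \times_2 c_2$, giving the result through the tie-breaking disjunct. If instead $a_1 \prec_1 b_1$, strict isotonicity of $C_1$ yields $a_1 \times_1 c_1 \prec_1 b_1 \times_1 c_1$, which already decides the lexicographic comparison in the primary coordinate. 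Both steps invoke strict isotonicity, which is precisely why $C_1$ (and, for the stronger conclusion, $C_2$) must be strictly isotone rather than merely isotone.

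The hard part will be the boundary where multiplication by $c_1$ or $c_2$ destroys the strict order in its coordinate, which can occur when that multiplier equals its top element $\vone_1$ or $\vone_2$ (a top element may be absorbing). The difficulty is that $(c_1,c_2) \neq (\vone_1,\vone_2)$ only rules out both coordinates being top simultaneously: in the first case above it does not by itself guarantee $c_2 \neq \vone_2$, and in the second it does not guarantee $c_1 \neq \vone_1$, so the clean applications of strict isotonicity are not immediately licensed. Managing this interface — showing that whenever the relevant coordinate collapses the strict comparison is correctly inherited so that overall strictness is preserved — is the crux of the argument and the source of all the subtlety in the proposition. I would therefore first isolate a lemma describing exactly how $\times_1$ and $\times_2$ interact with $\vone_1$ and $\vone_2$, and only then assemble the two-case analysis above into the full proof.
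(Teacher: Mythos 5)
Your routine verifications (componentwise monoid, lexicographic total order, $(\vone_1,\vone_2)$ greatest, $(\vzero_1,\vzero_2)$ least and identity) are correct, and you have located the crux accurately --- but the proposal stops exactly there, and the lemma you defer to cannot exist in the form you need. Nothing in the axioms constrains $a \times \vone$, and in the very cost algebras this paper instantiates, $(\mathbb{R}_{\geq 0}\cup\{\infty\},\min,+,\leq,\infty,0)$ and its $\mathbb{N}$ analogue, the top element is absorbing. Concretely, take $C_1 = C_2 = (\mathbb{R}_{\geq 0}\cup\{\infty\},\min,+,\leq,\infty,0)$. With $a=(0,1) \prec b=(0,2)$ and $c=(0,\infty) \neq (\infty,\infty)$ you get $a\times c = (0,\infty) = b\times c$, so your tie-breaking case fails; with $a=(1,0)\prec b=(2,0)$ and $c=(\infty,0)$ you get $a\times c=(\infty,0)=b\times c$, so your primary-coordinate case fails; and $a=(1,5)\prec b=(2,0)$, $c=(\infty,0)$ gives $a\times c=(\infty,5) \succ (\infty,0)=b\times c$, which even breaks plain isotonicity of the product. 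Since the definition of strict isotonicity for $C_1 \times_p C_2$ excludes only the single multiplier $(\vone_1,\vone_2)$, these are admissible multipliers, and no amount of case analysis on "how $\times_1,\times_2$ interact with $\vone_1,\vone_2$" will restore strictness: the statement read against this paper's literal restatement of the definitions is not provable, so a plan that promises to close the boundary by a lemma is pursuing a dead end.

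What makes the proposition sound in its intended reading (and in the cited source, which is all the "proof" this paper gives --- the proposition is proved by reference to Edelkamp et al., so there is no in-paper argument to match) is that the top-element exception must be taken coordinatewise: strictness is asserted for multipliers $c=(c_1,c_2)$ with $c_1 \neq \vone_1$ \emph{and} $c_2 \neq \vone_2$, equivalently one works where no coordinate is the absorbing top. Under that reading your two-case analysis closes immediately, with no auxiliary lemma: strict isotonicity of $C_1$ handles $a_1 \prec_1 b_1$, and strict isotonicity of $C_2$ handles the tie case. This is harmless in the paper's application, since all edge costs and heuristic values are finite, so a top coordinate never occurs as a multiplier during search. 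One further observation you should have made: under this paper's restated definition a cost algebra is not required to be isotone, and accordingly your first paragraph never uses the hypothesis that $C_1$ is strictly isotone --- the first claim comes out trivially true and the hypothesis idle. That is a symptom that the restatement dropped the isotonicity clause present in the original definition; with isotonicity included, proving the product isotone needs strict isotonicity of $C_1$ and runs into exactly the same boundary issue, with exactly the same coordinatewise fix.
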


Proposition~\ref{Proposition:CartesianProductOfCostAlgebras} allows one to take the Cartesian product of any number of strictly isotone cost algebras and end up with a strictly isotone cost algebra.\looseness=-1

Given a cost algebra $C = (A, \sqcup, \times, \preceq, \vone, \vzero)$, cost algebraic A* finds a lowest cost path according to $\sqcup$ between two nodes in a graph where edge costs are elements of set $A$, which are ordered according to $\preceq$ and combined with $\times$ where the lowest cost value is $\vzero$ and the largest cost value is $\vone$.
Cost algebraic A* uses a heuristic for each node of the graph, and cost algebraic A* with re-openings finds cost-optimal paths only if the heuristic is admissible.
An admissible heuristic for a node is a cost $h\in A$, which underestimates the cost of the lowest cost path from the node to the goal node according to $\preceq$.\looseness=-1

The implementation of cost algebraic $\text{A}^*$ is identical to the standard $\text{A}^*$ with overloaded comparison and addition operations between costs; and overloaded largest and lowest cost values.\looseness=-1

\section{Approach}

To follow the desired trajectories $\vd_i$ as closely as possible while avoiding collisions, we propose a decentralized real-time planner executed in a receding horizon fashion.\looseness=-1


It is assumed that perception, prediction, and state estimation systems are executed independently from the planner and produce the information described in Sec.~\ref{Section:Problem}.
DSHT computation is done asynchronously and independently from the planner, maintaining tail time points $T_{i,j}$ and providing $\tilde{\mH}^{active}_i$ to the planner, which allow robots to share a mutually exluding separating hyperplane constraint under asynchronous planning and communication imperfections.
The inputs from these systems to the planner in robot $i$ are (Fig.~\ref{Figure:PlanningPipeline}):\looseness=-1
\begin{itemize}
\item \textbf{Static obstacles}: Convex shapes $\mQ_{i, j}$ with their existence probabilities such that $p^{stat}_{i,j}$ is the probability that obstacle $j \in \mO_i$ exists.\looseness=-1
\item \textbf{Dynamic obstacles}: Set $\mD_i$ of dynamic obstacles where each dynamic obstacle $j\in \mD_i$ has the current position $\vp^{dyn}_{i,j}$, collision shape function $\mR^{dyn}_{i,j}$, and behavior models $\mB_{i, j, k}$ with corresponding realization probabilities $p^{dyn}_{i, j, k}$ where $k\in\{1,\ldots,\#^B_{i, j}\}$.\looseness=-1
\item \textbf{Active DSHTs}: Set $\tilde{H}_i^{active}$ of separating hyperplanes against all other robots.\looseness=-1
\item \textbf{Self state}: The state $\{\vp^{self}_{i,0}, \ldots, \vp^{self}_{i,c_i}\}$ of the robot.\looseness=-1
\end{itemize}

\begin{figure*}[t]
\centering
\begin{tikzpicture}[]

    \node[input] (StaticObstacles) {\shortstack{Static\\Obstacles}};
    \node[input, right=0.2cm of StaticObstacles] (DynamicObstacles) {\shortstack{Dynamic\\Obstacles}};
    \node[input, below=0.2cm of StaticObstacles] (ActiveDSHTs) {\shortstack{Active\\DSHTs}};
    \node[input, right=0.2cm of ActiveDSHTs] (SelfState) {\shortstack{Self\\State}};
    \node[module, right=1cm of SelfState, yshift=0.5cm] (GoalSelection) {\shortstack{Goal\\Selection}};
    \node[module, right=1cm of GoalSelection] (DiscreteSearch) {\shortstack{Discrete\\Search}};
    \node[module, right=1cm of DiscreteSearch] (TrajectoryOptimization) {\shortstack{Trajectory\\Optimization}};
    \node[output, right=1cm of TrajectoryOptimization] (Output) {Trajectory};
    
    \draw[->] (GoalSelection) -- (DiscreteSearch);
    \draw[->] (DiscreteSearch) -- (TrajectoryOptimization);
    \draw[->] (TrajectoryOptimization) -- (Output);
    \draw[dashed, teal] ($(GoalSelection.south west)+(-0.2,-0.2)$) rectangle ($(TrajectoryOptimization.north east)+(0.2, 0.2)$);
    \draw[dashed, violet] ($(ActiveDSHTs.south west)+(-0.1,-0.1)$) rectangle ($(DynamicObstacles.north east)+(0.1, 0.1)$);
    \draw[->] ($(SelfState.north east)+(0.1, 0.125)$) -- ($(GoalSelection.west)-(0.2,0.0)$);




\end{tikzpicture}
\vspace{-0.1in}
\caption{\textbf{Planning pipeline.} {\color{violet}Inputs (purple)}, {\color{brown}outputs (brown)}, and {\color{teal}stages (teal)} of our planning algorithm are shown. All stages are run in every planning iteration with updated inputs and a new trajectory is produced. Our algorithm is executed in a receding horizon fashion, in which, a long trajectory is planned, executed for a short duration, and a new trajectory is planned from scratch. In our experiments, we run our algorithm in $\approx \SI{5}{Hz} - \SI{20}{Hz}$.}
\label{Figure:PlanningPipeline}
\vspace{-0.2in}
\end{figure*}
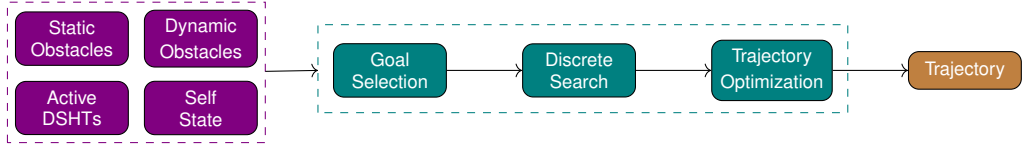

There are three stages of our algorithm (Fig.~\ref{Figure:PlanningPipeline}): i) goal selection, which selects a goal position on the desired trajectory to plan to, ii) discrete search, which computes a spatiotemporal discrete path to the goal position, minimizing the probability of collision with two classes of obstacles, DSHT violations, distance, duration, and rotations using a multi-objective search method, and iii) trajectory optimization, which safely computes a dynamically feasible trajectory by smoothing the discrete path while preserving the collision probabilities computed and DSHT hyperplanes not violated during the search.

Discrete planning needs a goal position because it utilizes a goal-directed search algorithm, which is provided by goal selection.
Then, discrete planning determines the homotopy class of the final plan in terms of collision probabilities and DSHT violations.
Last, trajectory optimization smooths the plan within the homotopy class determined by discrete planning.\looseness=-1

The planner might fail during trajectory optimization, the reasons for which are described in Sec.~\ref{Section:TrajectoryOptimization}.
If planning fails, the robot continues using its previous plan, and the best effort probabilistic collision avoidance ensured in the previous plan holds up to the accuracy of sensing and predictions.\looseness=-1

\subsection{Goal Selection}\label{Section:GoalSelection}

\begin{figure}[t]
    \centering
    \includegraphics[width=\linewidth]{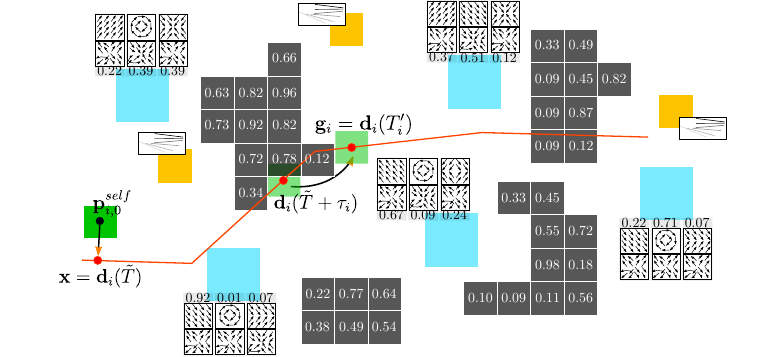}
    \vspace{-0.25in}
    \caption{\textbf{Goal selection.} The goal selection stage selects the goal position $\vg_i$ to plan to on the desired trajectory $\vd_i$ ({\color{red}red}) and the time $T_i'$ at which it should be (or should have been) reached. It finds the closest point $\vx$ on $\vd_i$ to the current robot position $\vp_{i,0}^{self}$ and its time point $\tilde{T}$, and finds the smallest time point $T_i'$ that is greater than the time point that is one desired time horizon away from $\tilde{T}$, i.e., $\tilde{T}+\tau_i$, at which the robot is collision free against all static obstacles with existence probability greater than $p_i^{min}$.}
    \label{Figure:GoalSelection}
    \vspace{-0.25in}
\end{figure}

In the goal selection stage (Fig.~\ref{Figure:GoalSelection}), each robot $i$ chooses a goal position $\vg_i$ on the desired trajectory $\vd_i$ and the time $T_i'$ at which $\vg_i$ should be (or should have been) reached. This stage has two parameters: the desired time horizon $\tau_i$ and the static obstacle existence probability threshold $p_{i}^{min}$.\looseness=-1

First, the closest point $\vx$ on the desired trajectory $\vd_i$ to the robot's current position $\vp^{self}_{i, 0}$, is found by discretizing $\vd_i$. 
Let $\tilde{T}$ be the time point of $\vx$ on $\vd_i$, i.e., $\vx=\vd_i(\tilde{T})$.
Then, goal selection finds the smallest time point $T_i' \in [\min(\tilde{T} + \tau_i, T_i), T_i]$ on $\vd_i$ such that the robot is collision-free against static obstacles with existence probabilities at least $p^{min}_i$ when placed on $\vd_i(T_i')$ using collision checks with small increments in time.
The goal position $\vg_i$ is set to $\vd_i(T_i')$, and the time at which it should be (or should have been) reached is $T_i'$. We assume that the robot placed at $\vd_i(T_i)$ is collision-free; hence such a $T_i'$ always exists.\looseness=-1

The selected goal position $\vg_i$ and the time point $T_i'$ are used during the discrete search stage, which uses a goal-directed search algorithm.
Note that goal selection chooses the goal position on the desired trajectory without considering its reachability; the actual trajectory the robot follows is planned by the rest of the algorithm.\looseness=-1

\subsection{Discrete Search}\label{Section:DiscreteSearch}

In the discrete search stage, we plan a path to the goal position $\vg_i$ using cost algebraic $\text{A}^*$ search (Section~\ref{Preliminaries:CostAlgebraic}).
We conduct a multi-objective search with six cost terms, define the cost of an action as the vector of the computed cost terms, in which each individual cost term is a strictly isotone cost algebra, and optimize over their Cartesian product, i.e., their lexicographical ordering.
Cost algebraic $\text{A}^*$ finds an optimal action sequence according to the lexicographical ordering of our cost vectors.
\looseness=-1

The individual cost terms are defined over two cost algebras, namely $(\mathbb{R}_{\geq 0} \cup \{\infty\}, min, +, \leq, \infty, 0)$, i.e., non-negative real number costs with standard addition and comparison, and $(\mathbb{N} \cup \{\infty\}, min, +, \leq, \infty, 0)$, natural numbers with standard addition and comparison, both of which are strictly isotone.
Therefore, any number of their Cartesian products are also cost algebras by Proposition~\ref{Proposition:CartesianProductOfCostAlgebras}.\looseness=-1


\begin{figure*}
    \centering
    \includegraphics[width=\linewidth]{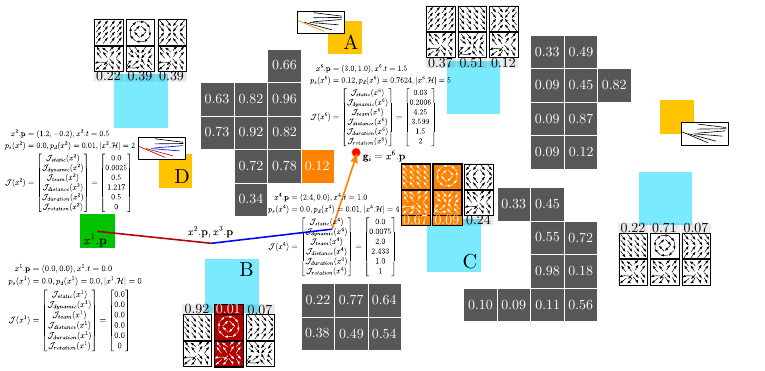}
    \vspace{-0.25in}
    \caption{\textbf{Discrete search.} A sample discrete state sequence, the associated meta data, and computed cost terms are shown. The computed state sequence has six states: $x^{1:6}$. $x^3$ and $x^5$ are expanded with ROTATE actions from $x^2$ and $x^4$ respectively, therefore, their information is not shown here to reduce clutter. The robot initially does not collide with any static or dynamic obstacle and does not violate any DSHT hyperplane at $x^1$. While traversing the first segment from $x^1$ to $x^2$ ({\color{red}red}), it collides with dynamic obstacle B's second behavior model and violates two hyperplanes in the DSHT against teammate D. While traversing the second segment from $x^3$ to $x^4$ ({\color{blue}blue}), it violates $2$ more hyperplanes from the DSHT against teammate D. While traversing the last segment from $x^5$ to $x^6$ ({\color{orange}orange}), it collides with the static obstacle with existence probability $0.12$, first two behavior models of dynamic obstacle C and violates a hyperplane in the DSHT against teammate A.}
    \label{Figure:DiscreteSearch}
    \vspace{-0.25in}
\end{figure*}

We explain the discrete search for an arbitrary robot $i$ in the team; each robot runs the same algorithm.
The planning horizon of the search is $\tau_i' = \max(\tilde{\tau}_i, T_i' - T_{cur}, \alpha_i\frac{\normtwo{\vp_{i,0}^{self} - \vg_i}}{\tilde{\gamma}_i^1})$ where $\tilde{\tau}_i$ is the minimum search horizon and $\tilde{\gamma}_i^1$ is the maximum speed parameter for the search stage. 
In other words, the planning horizon is set to the maximum of minimum search horizon, the time difference between the goal time point and the current time point, and a multiple of the minimum required time to reach the goal position $\vg_i$ from the current position $\vp_{i,0}^{self}$ applying maximum speed $\tilde{\gamma}^1_i$ where multiplier $\alpha_i \geq 1$.
The planning horizon $\tau_i'$ is used as a suggestion in the search and is exceeded if necessary, as explained later in this section.\looseness=-1

\textbf{States.} The states $x$ in our search formulation have six components: i) $x.\vp \in \mathbb{R}^d$ is the position of the state, ii) $x.\vDelta \in \{-1, 0, 1\}^d \setminus \{\vzero\}$ is the direction of the state on a grid oriented along robot's current velocity $\vp_{i,1}^{self}$ with a rotation matrix $R_{rot} \in SO(d)$ such that $R_{rot}(1, 0, \ldots, 0)^\top = \frac{\vp_{i, 1}^{self}}{\normtwo{\vp_{i,1}^{self}}}$, iii) $x.t \in [0, \infty)$ is the time of the state, iv) $x.\mO\subseteq\mO_i$ is the set of static obstacles that collide with the robot $i$ following the path from start state to $x$, v) $x.\mD$ is the set of dynamic obstacle behavior model--position pairs $(\mB_{i,j,k}, \vp_{i,j,k}^{dyn})$ such that dynamic obstacle $j$ moving according to $\mB_{i,j,k}$ does not collide the robot $i$ following the path from start state to $x$, and the dynamic obstacle ends up at position $\vp_{i,j,k}^{dyn}$, and vi) $x.\mH \subseteq \tilde{\mH}_{i}^{active}$ is the set of active DSHT hyperplanes that the robot $i$ violates following the path from start state to $x$.\looseness=-1

The start state of the search is $x^1$ with components $x^1.\vp = \vp_{i, 0}^{self}$, $x^1.\vDelta = (1, 0, \ldots, 0)^\top$, $x^1.t = 0$, $x^1.\mO$ are set of all obstacles that intersect with $\mR_i^{robot}(\vp_{i, 0}^{self})$, $x^1.\mD$ contains behavior model--position pairs $(\mB_{i, j, k}, \vp_{i,j}^{dyn})$ of dynamic obstacles $j$ that do not initially collide with robot, i.e. $\mR_i^{robot}(\vp_{i,0}^{self}) \cap \mR_{i,j}^{dyn}(\vp_{i,j}^{dyn}) = \emptyset$, one for each $k\in\{1,\ldots,\#^B_{i,j}\}$, and $x^1.\mH$ contains all hyperplanes in $\tilde{\mH}_i^{active}$ that the robot $i$ violates initially at $\vp_{i,0}^{self}$.
The goal states are all states $x^g$ with position $x^g.\vp = \vg_i$.\looseness=-1

\textbf{Actions.} There are three action types in our search. Let $x$ be the current state and $x^+$ be the state after applying an action.\looseness=-1

\begin{itemize}
\item FORWARD($s$, $t$) moves the current state $x$ to $x^+$ by applying constant speed $s$ along current direction $x.\vDelta$ for time $t$.
The state components change as follows.\looseness=-1

\begin{itemize}
    \item $x^+.\vp = x.\vp + R_{rot}\frac{x.\vDelta}{\normtwo{x.\vDelta}}st$
    \item $x^+.\vDelta = x.\vDelta$
    \item $x^+.t = x.t + t$.
    \item We compute static obstacles $\mO^+$ colliding with the robot with shape $\mR_i^{robot}$ travelling from $x.\vp$ to $x^+.\vp$ and set $x^+.\mO = x.\mO \cup \mO^+$.
    \item 
    Let $(\mB_{i,j,k}, \vp_{i,j,k}^{dyn}) \in x.\mD$ be a dynamic obstacle behavior model--position pair that does not collide with the state sequence from the start state to $x$.
    Note that robot applies velocity $\vv =  \frac{x^+.\vp - x.\vp}{x^+.t - x.t}$ from state $x$ to $x^+$.
    We get the desired velocity $\tilde{\vv}_{i,j,k}^{dyn}$ of the dynamic obstacle at time $x.t$ using its movement model: $\tilde{\vv}_{i,j,k}^{dyn} = \mM_{i,j,k}(\vp_{i,j,k}^{dyn})$.
    The velocity $\vv_{i,j,k}^{dyn}$ of the dynamic obstacle can be computed using the interaction model: $\vv_{i,j,k}^{dyn} = \mI_{i,j,k}(\vp_{i,j,k}^{dyn}, \tilde{\vv}_{i,j,k}^{dyn}, x.\vp, \vv)$.
    We check whether the dynamic obstacle shape $\mR_{i,j}^{dyn}$ swept between $\vp_{i,j,k}^{dyn}$ and $\vp_{i,j,k}^{dyn} + \vv_{i,j,k}^{dyn} t$ collides with robot shape $\mR_i^{robot}$ swept between $x.\vp$ and $x^+.\vp$.
    If not, we add not colliding dynamic obstacle behavior model by $x^+.\mD = x^+.\mD\ \cup\ \{(\mB_{i,j,k}, \vp_{i,j,k}^{dyn} + \vv_{i,j,k}^{dyn} t)\}$.
    Otherwise, we discard the behavior model.
    \item We compute the hyperplanes $\mH^+\subseteq \tilde{\mH}_i^{active}$ the robot $i$ violates at $x^+.\vp$, and set $x^+.\mH = x.\mH \cup \mH^+$.
\end{itemize}

\item ROTATE($\vDelta'$) changes the current state $x$ to $x^+$ by changing its direction to $\vDelta'$.
It is only available if $x.\vDelta \neq \vDelta'$.
The rotate action is added to penalize turns during search as discussed in the description of costs.
The state components remain the same except $x^+.\vDelta$ is set to  $\vDelta'$.\looseness=-1

\item REACHGOAL changes the current state $x$ to $x^+$ by connecting $x.\vp$ to the goal position $\vg_i$.
The remaining search horizon for the robot to reach its goal position is given by $\tau_i' - x.t$.
Recall that the maximum speed of the robot during the search is $\tilde{\gamma}^1_i$; hence
the robot needs at least $\frac{\normtwo{\vg_i - x.\vp}}{\tilde{\gamma}^1_i}$ seconds to reach the goal position from state $x$.
We set the duration of this REACHGOAL action to the maximum of these two values: $\max(\tau_i' - x.t, \frac{\normtwo{\vg_i - x.\vp}}{\tilde{\gamma}^1_i})$.
Therefore, the search horizon $\tau_i'$ is merely a suggestion during search and is exceeded whenever it is not dynamically feasible to reach the goal position within the search horizon.
The state components change as follows.\looseness=-1
\begin{itemize}
    \item $x^+.\vp = \vg_i$
    \item $x^+.\vDelta = x.\vDelta$
    \item $x^+.t = x.t + \max(\tau_i' - x.t, \frac{\normtwo{\vg_i - x.\vp}}{\tilde{\gamma}^1_i})$
    \item $x^+.\mO$, $x^+.\mD$, and $x^+.\mH$ are computed in the same way as FORWARD.
\end{itemize}
\end{itemize}
Note that we run interaction models only when a robot applies a time-changing action (FORWARD or REACHGOAL), which is an approximation of reality because dynamic objects can potentially change their velocities between robot actions.
We also conduct \emph{conservative collision checks} against dynamic obstacles because we do not include the time domain in the collision check.
This conservatism allows us to preserve collision probability upper bounds against dynamic obstacles during trajectory optimization as discussed in Sec.~\ref{Section:TrajectoryOptimization}.\looseness=-1

We compute the probability of not colliding with static obstacles and a lower bound on the probability of not colliding with dynamic obstacles for each state of the search tree recursively.
We interleave the computation of sets $x.\mO$ and $x.\mD$ with the probability computation.\looseness=-1

\subsubsection{Computing the Probability of Not Colliding with Static Obstacles} 

Let $x^{1:n} = x^1, \ldots, x^n$ be a state sequence in the search tree.
Let $\mC_{s}(x^{l:m})$ be the proposition that the robot following timed path $(x^l.\vp, x^l.t), \ldots, (x^m.\vp, x^m.t)$ collides with any of the static obstacles in $\mO_i$.
The event of not colliding with any of the static obstacles while following a prefix of $x^{1:n}$ admits a recursive definition: $\neg\ \mC_{s}(x^{1:l}) = \neg\ \mC_{s}(x^{1:m}) \bigwedge \neg\ \mC_{s}(x^{m:l})\ \forall l\in\{1,\ldots,n\}\ \forall m\in \{1, \ldots, l\}$.\looseness=-1

We compute the probability $p(\neg\ \mC_{s}(x^{1:l}))$ of not colliding with any of the static obstacles for each prefix $x^{1:l}$ of $x^{1:n}$ during search and store it as metadata of each state.
$p(\neg\ \mC_{s}(x^{1:l}))$ is given by:\looseness=-1
\begin{align*}
    p(&\neg\ \mC_{s}(x^{1:l})) = p(\neg\ \mC_{s}(x^{1:l-1}) \wedge \neg\ \mC_{s}(x^{l-1:l}))\\
    &= p(\neg\ \mC_{s}(x^{1:l-1})) p(\neg\ \mC_{s}(x^{l-1:l})\ |\ \neg\ \mC_{s}(x^{1:l-1}))
\end{align*}

The first term $p(\neg\ \mC_{s}(x^{1:l-1}))$ is the recursive term that can be obtained from the parent state during search.\looseness=-1

The second term $p(\neg\ \mC_{s}(x^{l-1:l}) | \neg\ \mC_{s}(x^{1:l-1}))$ is the conditional term that we compute during state expansion.
Let $\mO^{l:m}_i \subseteq \mO_i$ be the set of static obstacles that collide with robot $i$ traversing $x^{l:m}$.
Given that the robot has not collided while traversing $x^{1:l-1}$ means that no static obstacle that collides with the robot traversing $x^{1:l-1}$ exists.
Therefore, we compute the conditional probability as the probability that none of the obstacles in $\mO^{l-1:l}_i \setminus \mO^{1:l-1}_i$ exists as ones in $\mO^{l-1:l}_i \cap \mO^{1:l-1}_i$ do not exist as presumption.
Let $E(j)$ be the event that static obstacle $j\in\mO_i$ exists.
Assuming independent non-existence events, we have\looseness=-1
\begin{align*}
p(&\neg\ \mC_{s}(x^{l-1:l}) \ |\  \neg\ \mC_{s}(x^{1:l-1})) = p\left(\bigwedge_{j \in \mO^{l-1:l}_i \setminus \mO^{1:l-1}_i} \neg\ E(j)\right)\\
&= \prod_{j \in \mO^{l-1:l}_i \setminus \mO^{1:l-1}_i} p(\neg E(j)) = \prod_{j \in \mO^{l-1:l}_i \setminus \mO^{1:l-1}_i} (1-p_{i,j}^{stat})
\end{align*}

The key operation for computing  the conditional is computing the set $\mO^{l-1:l}_i \setminus \mO^{1:l-1}_i$.
We obtain $\mO^{1:l-1}_i$ from the parent state's $x^{l-1}.\mO$, by using the fact that $x^l.\mO = \mO^{1:l}_i$ by definition for all $l$.
During node expansion, we compute $\mO^{l-1:l}_i$ by querying the static obstacles for collisions against the region swept by $\mR_i^{robot}$ from position $x^{l-1}.\vp$ to $x^l.\vp$.
The probability of not colliding is computed according to obstacles in $\mO^{l-1:l}_i\setminus \mO^{1:l-1}_i$.\looseness=-1

The recursive term $p(\neg\ \mC_{s}(x^{1:1}))$ is initialized for the start state $x^1$ by computing the non-existence probability of obstacles in $x^{1}.\mO$, i.e., $p(\neg\ \mC_{s}(x^{1:1})) = \prod_{j\in x^1.\mO}(1-p_{i,j}^{stat})$.\looseness=-1

\subsubsection{Computing a Lower Bound on the Probability of not Colliding with Dynamic Obstacles}
Let $C_d(x^{l:m})$ be the proposition, conditioned on the full state sequence $x^{1:n}$, that the robot following the $(x^l.\vp, x^l.t), \ldots, (x^m.\vp, x^m.t)$ portion of $x^{1:n}$ collides with any of the dynamic obstacles in $\mD_i$.
Similar to the static obstacles, the event of not colliding with any of the dynamic obstacles while following a prefix of the path $x^{1:n}$ is recursive: $\neg\ C_d(x^{1:l}) = \neg\ C_d(x^{1:m}) \bigwedge \neg\ C_d(x^{m:l})\ \forall l\in\{1,\ldots,n\}\ \forall m\in \{1, \ldots, l\}$.\looseness=-1

The formulation of the probability of not colliding with dynamic obstacles is identical to that developed for static obstacles:
\begin{align*}
    p(&\neg\ \mC_d(x^{1:l})) = p(\neg\ \mC_d(x^{1:l-1}) \wedge \neg\ \mC_d(x^{l-1: l})) \\
    &= p(\neg\ \mC_d(x^{1:l-1}))p(\neg\ \mC_d(x^{l-1:l})\ |\ \neg\ \mC_d(x^{1:l-1}))
\end{align*}

The first term $p(\neg\ \mC_d(x^{1:l-1}))$ is the recursive term that can be can be obtained from the parent state during search.\looseness=-1

The conditional term $p(\neg\ \mC_d(x^{l-1:l})\ |\ \neg\ \mC_d(x^{1:l-1}))$ is computed during state expansion.
Let $C_{d, j}(x^{l:m})$ be the proposition, conditioned on the full state sequence $x^{1:n}$, that the robot following the $(x^l.\vp, x^l.t), \ldots, (x^m.\vp, x^m.t)$ portion of $x^{1:n}$ collides with dynamic obstacle $j\in\mD_i$.
We assume independence between not colliding with different dynamic obstacles; hence, the conditional term simplifies as follows.\looseness=-1
\begin{align*}
    &p(\neg\ \mC_d(x^{l-1:l})\ |\ \neg\ \mC_d(x^{1:l-1})) \\
    &= p\left(\bigwedge_{j \in \mD_i} \neg \ \mC_{d, j}(x^{l-1:l})\ |\ \bigwedge_{j\in\mD_i} \neg\ \mC_{d, j}(x^{1:l-1})\right)\\
    &= \prod_{j \in \mD_i} p(\neg\ C_{d, j}(x^{l-1:l}))\ |\ \neg\ C_{d, j}(x^{1:l-1}))
\end{align*}

The computation of the terms $p(\neg\ C_{d, j}(x^{l-1:l})\ |\ \neg\ C_{d, j}(x^{1:l-1}))$ for each obstacle $j\in\mD_i$ is done by using $x^{l-1}.\mD$ and $x^{l}.\mD$.
Given that robot following states $x^{1:l-1}$ has not collided with dynamic obstacle $j$ means that no behavior model of $j$ that resulted in a collision while traversing $x^{1:l-1}$ is realized.
We store all not colliding dynamic obstacles behavior models in $x^{l-1}.\mD$.
Within these, all dynamic obstacle modes that do not collide with the robot while traversing from $x^{l-1}$ to $x^{l}$ are stored in $x^{l}.\mD$.
Let $x^l.\mD_j$ be the set of all behavior model indices of dynamic obstacle $j\in\mD_i$ that has not collided with $x^{1:l}$.
The probability that the robot does not collide with dynamic obstacle $j$ while traversing from $x^{l-1}$ to $x^{l}$ given that it has not collided with it while traveling from $x^1$ to $x^{l-1}$ is given by\looseness=-1
\begin{align*}
    p(\neg\ C_{d, j}(x^{l-1:l})\ |\ \neg\ C_{d,j}(x^{1:l-1})) &=\frac{\sum_{k \in x^l.\mD_j} p_{i,j,k}^{dyn}}{\sum_{k \in x^{l-1}.\mD_j}p_{i,j,k}^{dyn}}.
\end{align*}

The computed probabilities for not colliding are lower bounds because \emph{collision checks against dynamic obstacles are done conservatively}, i.e., the time domain is not considered during sweep to sweep collision checks.
Conservative collision checks never miss collisions but may over-report them.\looseness=-1

\textbf{Costs.} 
Let $p_{s}(x^l) = 1-p(\neg\ C_s(x^{1:l}))$ be the probability of collision with any of the static obstacles and $p_{d}(x^l) = 1-p(\neg\ C_d(x^{1:l}))$ be an upper bound for the probability of collision with any of the dynamic obstacles while traversing state sequence $x^{1:l}$.
We define $P_{s}(t): [0, x^n.t] \rightarrow [0,1]$ of state sequence $x^{1:n}$ as the linear interpolation of $p_{s}$:
\begin{align*}
    P_{s}(t) &= 
    \begin{cases}
        \frac{x^2.t - t}{x^2.t-x^1.t}p_{s}(x^1) \\\ \ \ +\frac{t-x^1.t}{x^2.t-x^1.t}p_{s}(x^2) & x^1.t\leq t <x^2.t\\
        \ldots\ \\
        \frac{x^n.t - t}{x^n.t-x^{n-1}.t}p_{s}(x^{n-1}) \\\ \ \ +\frac{t-x^{n-1}.t}{x^n.t-x^{n-1}.t}p_{s}(x^n) & x^{n-1}.t\leq t \leq x^n.t
        \end{cases}
\end{align*}
We define $P_{d}(t): [0, x^n.t] \rightarrow [0,1]$ of a state sequence $x^{1:n}$ in a similar way using $p_{d}$.
We define $P_c(t):[0, x^n.t]\rightarrow[0, \infty)$ as the linear interpolation of the number of violated hyperplanes in active DSHTs of the state sequence $x^{1:n}$, i.e., the points $(x^1.t, |x^1.\mH|), \ldots, (x^n.t, |x^n.\mH|)$.\looseness=-1


We associate six different cost terms to each state $x^l$ in state sequence $x^{1:n}$: i) $\mJ_{static}(x^l) \in [0, \infty)$ is the cumulative static obstacle collision probability defined as $\mJ_{static}(x^l) = \int_0^{x^l.t}P_s(\tau)d\tau$, ii) $\mJ_{dynamic}(x^l)\in [0, \infty)$ is the cumulative dynamic obstacle collision probability defined as $\mJ_{dynamic}(x^l) = \int_0^{x^l.t}P_d(\tau)d\tau$, iii) $\mJ_{team}(x^l) \in [0, \infty)$ is the cumulative number of violated active DSHT hyperplanes defined as $\mJ_{team}(x^l) = \int_0^{\min(x^l.t, T^{team}_i)}P_c(\tau)d\tau$, in which violation cost accumulation is cut off at $T_i^{team}$ parameter, iv) $\mJ_{distance}(x^l) \in [0, \infty)$ is the distance traveled from start state $x^1$ to state $x^l$, v) $\mJ_{duration}(x^l) \in [0, \infty)$ is the time elapsed from start state $x^1$ to state $x^l$, and vi) $\mJ_{rotation}(x^l) \in \mathbb{N}$ is the number of rotations from start state $x^1$ to state $x^l$.\looseness=-1

We cut off violation cost accumulation of DSHTs because of the conservative nature of using separating hyperplanes for teammate safety: they divide the space into two disjoint sets linearly without considering the robots' intents.
The robots need to be safe until the next successful planning iteration because of the receding horizon planning, and overly constraining a large portion of the plan at each planning iteration with conservative constraints decreases agility.
We investigate the effects of $T^{team}_i$ on navigation performance in Sec.~\ref{Section:TeammateSafetyEnforcementDurationEvaluation}.\looseness=-1

We compute the cost terms of the new state $x^+$ after applying actions to the current state $x$ as follows.
\begin{itemize}
    \item $\mJ_{static}(x^+) = \mJ_{static}(x) + \int_{x.t}^{x^+.t}P_s(\tau)\tau$
    \item $\mJ_{dynamic}(x^+) = \mJ_{dynamic}(x) + \int_{x.t}^{x^+.t}P_d(\tau)\tau$
    \item $\mJ_{team}(x^+) = \mJ_{team}(x) + \int_{\min(x.t, T_i^{team})}^{\min(x^+.t, T_i^{team})}P_c(\tau)d\tau$
    \item $\mJ_{distance}(x^+) = \mJ_{distance}(x) + \normtwo{x^+.\vp - x.\vp}$
    \item $\mJ_{duration}(x^+) = \mJ_{duration}(x) + (x^+.t - x.t)$
    \item $\mJ_{rotation}(x^+) = \mJ_{rotation}(x) + \mathbbm{1}_{\neq}(x.\vDelta, x^+.\vDelta)$
\end{itemize}
where $\mathbbm{1}_{\neq}$ is the indicator function with value $1$ if its arguments are unequal, and $0$ otherwise.\looseness=-1

Lower cost (with respect to standard comparison operator $\leq$) is better in all cost terms.
All cost terms have the minimum of $0$ and upper bound of $\infty$.
All cost terms are additive using the standard addition.
$\mJ_{static}, \mJ_{dynamic}, \mJ_{team}, \mJ_{distance}, $ and $\mJ_{duration}$ are cost algebras ($\mathbb{R}_{\geq 0} \cup \{\infty\}$, min, $+$, $\leq$, $\infty$, $0$) and $\mJ_{rotation}$ is cost algebra ($\mathbb{N} \cup \{\infty\}$, min, $+$, $\leq$, $\infty$, $0$), both of which are strictly isotone.
Therefore, their Cartesian product is also a cost algebra, which is what we optimize over.
The cost $\mJ(x)$ of each state $x$ is:\looseness=-1
\begin{align*}
    \mJ(x) = \begin{bmatrix}\mJ_{static}(x)\\ \mJ_{dynamic}(x)\\ \mJ_{team}(x) \\\mJ_{distance}(x)\\ \mJ_{duration}(x)\\ \mJ_{rotation}(x)\end{bmatrix}.
\end{align*} 
We order cost terms lexicographically.
A sample state sequence and computed costs are shown in Fig.~\ref{Figure:DiscreteSearch}.\looseness=-1

This induces an ordering between cost terms: we first minimize cumulative static obstacle collision probability, and among the states that minimize that, we minimize cumulative dynamic obstacle collision probability, and so on.\footnote{Note that we do not explicitly find such solutions; but this behavior is naturally provided by the cost algebraic A* as it finds the optimal plan according to the lexicograpical ordering of the cost terms.\looseness=-1}
Hence, safety is the primary; distance, duration, and rotation optimality are the secondary concerns.
Out of safety with respect to static and dynamic obstacles and teammates, we prioritize static obstacles over dynamic obstacles, because static obstacles can be considered a special type of dynamic ones, i.e., with $\vzero$ velocity, and hence, prioritizing dynamic obstacles would make the static obstacle avoidance cost unnecessary.
This ordering allows us to optimize the special case first, and then attempt the harder one.
The reason we prioritize dynamic obstacles over teammates is the conservative nature of using DSHTs for teammates.
Violating a separating hyperplane does not necessarily result in a collision because each hyperplane divides the space into two between robots, and the robots occupy a very small portion of their side of each hyperplane.\looseness=-1

The heuristic $H(x)$ we use for each state $x$ during search is as follows.
\resizebox{\linewidth}{!}{
\begin{minipage}{\linewidth}
\begin{align*}
H(x) &= \begin{bmatrix}
H_{static}(x) \\
H_{dynamic}(x)\\
H_{team}(x)\\
H_{distance}(x)\\
H_{duration}(x)\\
H_{rotation}(x)
\end{bmatrix} 
= \begin{bmatrix}
P_s(x.t)H_{duration}(x)\\
P_d(x.t)H_{duration}(x)\\
P_c(x.t)\max(0, \min(H_{duration}(x), T_i^{team} - x.t))\\
\normtwo{x.\vp - \vg_i}\\
\max(\tau_i'-x.t, \frac{H_{distance}(x)}{\tilde{\gamma_1}})\\
0
\end{bmatrix}
\end{align*}
\end{minipage}
}
\vspace{0.05in}

We first compute $H_{distance}(x)$, which we use in the computation of $H_{duration}(x)$.
Then, we use $H_{duration}(x)$ during the computation of $H_{static}(x)$, $H_{dynamic}(x)$, and $H_{team}(x)$.\looseness=-1

\begin{proposition}
    All individual heuristics are admissible.
\end{proposition}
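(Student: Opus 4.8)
The plan is to show, coordinate by coordinate, that each $H_{\bullet}(x)$ is at most the cost-to-go $\mJ_{\bullet}(x^g)-\mJ_{\bullet}(x)$ accumulated along \emph{every} path from $x$ to a goal state $x^g$; since all six terms are additive over non-negative reals (or naturals), this scalar underestimate is exactly admissibility in the corresponding cost algebra, and the six proofs may be carried out independently. Two terms are immediate. $H_{rotation}(x)=0$ underestimates the non-negative rotation count of any remaining path. $H_{distance}(x)=\normtwo{x.\vp-\vg_i}$ underestimates the remaining distance, because $\mJ_{distance}$ sums Euclidean segment lengths and, by the triangle inequality, their total is at least the straight-line distance from $x.\vp$ to $\vg_i$.

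Next I would treat $H_{duration}$, since the three probability heuristics are built on it. Any path reaching a goal state does so via a REACHGOAL action, whose duration formula forces $x^g.t\geq\tau_i'$, so the elapsed duration is at least $\tau_i'-x.t$. Independently, no FORWARD or REACHGOAL segment exceeds the search speed limit $\tilde{\gamma}_i^1$, so the total duration is at least (total distance)$/\tilde{\gamma}_i^1\geq\normtwo{x.\vp-\vg_i}/\tilde{\gamma}_i^1=H_{distance}(x)/\tilde{\gamma}_i^1$. The true cost-to-go therefore dominates the maximum of the two bounds, which is precisely $H_{duration}(x)$.

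The three remaining heuristics share a structure: $H_{static}(x)=P_s(x.t)H_{duration}(x)$, $H_{dynamic}(x)=P_d(x.t)H_{duration}(x)$, and the clipped analogue for $H_{team}$. The crux is a \emph{monotonicity lemma}: along any path the relevant sets only grow, i.e. $x^+.\mO\supseteq x.\mO$, behavior models in $x.\mD$ are only discarded, and $x^+.\mH\supseteq x.\mH$; hence $p_s$, $p_d$, and $|x.\mH|$ are non-decreasing in the path index, so their piecewise-linear interpolants $P_s$, $P_d$, $P_c$ are non-decreasing in $t$. Consequently $P_s(\tau)\geq P_s(x.t)$ for all $\tau\geq x.t$ on any continuation, and the remaining static cost satisfies $\mJ_{static}(x^g)-\mJ_{static}(x)=\int_{x.t}^{x^g.t}P_s(\tau)\,d\tau\geq P_s(x.t)\,(x^g.t-x.t)\geq P_s(x.t)\,H_{duration}(x)=H_{static}(x)$, where the last step uses $P_s(x.t)\geq 0$ and the duration bound $x^g.t-x.t\geq H_{duration}(x)$. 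The identical argument settles $H_{dynamic}$.

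For $H_{team}$ I would repeat the same estimate while accounting for the cutoff $T_i^{team}$ in $\mJ_{team}$. If $x.t>T_i^{team}$, both the remaining integral and the clipped factor $\max(0,\min(H_{duration}(x),T_i^{team}-x.t))$ vanish, so admissibility is trivial. If $x.t\leq T_i^{team}$, the remaining cost is $\int_{x.t}^{\min(x^g.t,T_i^{team})}P_c(\tau)\,d\tau\geq P_c(x.t)\,(\min(x^g.t,T_i^{team})-x.t)$, and since $\min(x^g.t,T_i^{team})-x.t=\min(x^g.t-x.t,\,T_i^{team}-x.t)\geq\min(H_{duration}(x),\,T_i^{team}-x.t)\geq 0$, the desired bound $H_{team}(x)=P_c(x.t)\,\max(0,\min(H_{duration}(x),T_i^{team}-x.t))$ follows. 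The main obstacle is establishing and cleanly exploiting the monotonicity of $P_s,P_d,P_c$ together with their coupling to $H_{duration}$; once monotonicity is in hand, every bound reduces to the same move, ``replace the integrand by its smallest value $P(x.t)$ and the interval length by the admissible duration estimate,'' with only the $T_i^{team}$ clipping requiring the small case split above.
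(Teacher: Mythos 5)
Your proposal is correct and takes essentially the same route as the paper's proof: distance via the triangle inequality, duration via the two bounds forced by the REACHGOAL duration formula ($x^g.t \geq \tau_i'$) and the speed cap $\tilde{\gamma}_i^1$, the three probability heuristics via monotonicity of the nondecreasing nonnegative interpolants $P_s$, $P_d$, $P_c$ (replacing the integrand by its value at $x.t$ and the interval by the duration underestimate), and the same case split on $T_i^{team}$ for the team term. The only substantive difference is that you assert that every path to a goal state must end with REACHGOAL, whereas the paper supports this with an explicit argument (FORWARD and ROTATE reach only a countable set of positions, so an arbitrary goal position $\vg_i \in \mathbb{R}^d$ is reached by them with probability zero); your monotonicity lemma phrased through set growth ($x.\mO$ and $x.\mH$ only grow, $x.\mD$ only shrinks) is just a cleaner statement of the same fact the paper uses.
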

\begin{proof}
    
\textbf{Admissibility of $\boldsymbol{H_{distance}}$:} $H_{distance}(x)$ is the Euclidean distance from $x.\vp$ to $\vg_i$, and never overestimates the true distance.\looseness=-1

\textbf{Admissibility of $\boldsymbol{H_{duration}}$:} The goal position $\vg_i$ can be any position in $\mathbb{R}^d$, which is an uncountable set.
The FORWARD and ROTATE actions can only move the robot to a discrete set of positions, which is countable, as any discrete subset of a Euclidean space is countable. 
Therefore, the probability that the robot reaches $\vg_i$ by only executing FORWARD and ROTATE actions is zero.
The robot cannot execute any action after REACHGOAL action in an optimal path to a goal state, because the REACHGOAL action already ends in the goal position and any subsequent actions would only increase the total cost.
Hence, the last action in an optimal path to a goal state should be REACHGOAL.
There are two cases to consider.\looseness=-1

If the last action while arriving at state $x$ is REACHGOAL, $x.t \geq \tau_i'$ holds (as REACHGOAL enforces this, see the descriptions of actions). Since $x.\vp = \vg_i$, $H_{distance}(x) = 0$. Therefore, $H_{duration}(x) = \max(\tau_i' -x.t, \frac{H_{distance}(x)}{\tilde{\gamma}_i^1}) = 0$, which is trivially admissible, as $0$ is the lowest cost.\looseness=-1

If the last action while arriving at state $x$ is not REACHGOAL, the search should execute REACHGOAL action to reach to the goal position in the future, which enforces that goal position will not be reached before $\tau_i'$. 
Also, since the maximum speed that can be executed during search is $\tilde{\gamma}_i^1$, robot needs at least $\frac{H_{distance}(x)}{\tilde{\gamma}_i^1}$ seconds to reach to the goal position as $H_{distance}(x)$ is an admissible heuristic for distance to goal position.
Hence, $H_{duration}(x) = \max(\tau_i' - x.t, \frac{H_{distance}(x)}{\tilde{\gamma}_i^1})$ is admissible.\looseness=-1

\textbf{Admissibility of $\boldsymbol{H_{static}}$ and $\boldsymbol{H_{dynamic}}$:}
We prove the admissibility of $H_{static}$.
Proof of admissibility of $H_{dynamic}$ follows identical steps.
$P_s$ is a nondecreasing nonnegative function as it is the accumulation of linear interpolation of probabilities, which are defined over $[0, 1]$.
Therefore, $P_s(x.t) \leq P_s(t)$ for $t \geq x.t$ in an optimal path to a goal state traversing $x$.
The robot needs at least $H_{duration}(x)$ seconds to reach a goal state from $x$, since $H_{duration}$ is an admissible heuristic.
Let $T^g(x) \geq H_{duration}(x)$ be the actual duration needed to reach to a goal state from $x$ on an optimal path.
The actual cumulative static obstacle collision probability to a goal on an optimal path from $x$ is $\int_{x.t}^{x.t + T^g(x)}P_s(\tau)d\tau$.
We have
\begin{align*}
    H_{static}(x) &= P_s(x.t)H_{duration}(x) \\
    &= \int_{x.t}^{x.t+H_{duration}(x)}P_s(x.t)d\tau \\
    &\leq \int_{x.t}^{x.t + T^g(x)}P_s(x.t)d\tau \leq \int_{x.t}^{x.t + T^g(x)}P_s(\tau)d\tau.
\end{align*}
In other words, $H_{static}(x)$ does not overestimate the true static obstacle cumulative collision probability from $x$ to a goal state.\looseness=-1

\textbf{Admissibility of $\boldsymbol{H_{team}}$:} 
Let $x^{1:n}$ be an optimal state sequence from start state $x^1$ to a goal state $x^n$ traversing $x$.
If $x.t \geq T_i^{team}$, $P_c(t)$ will not be accumulated in the future because of the cut-off.
If $x.t \leq T_i^{team}$, $P_c(t)$ will be accumulated for a duration at least $\min(H_{duration}(x), T_i^{team} - x.t)$ because $H_{duration}$ never overestimates the true duration to a goal and accumulation is cut off at $T_i^{team}$.
Therefore, $P_c(t)$ will be accumulated for at least $\max(0, \min(H_{duration}(x), T_i^{team} - x.t))$ after state $x$.
Let $T^c(x) \geq \max(0, \min(H_{duration}(x), T_i^{team} - x.t))$ be the actual duration $P_c(t)$ will be accumulated after state $x$.
The actual cumulative number of violated active DSHT hyperplanes is given by $\int_{x.t}^{x.t+T^c(x)}P_c(\tau)d\tau$.\looseness=-1

$|x^l.\mH|\geq|x^{l-1}.\mH|$ for all $l\in\{2, \ldots, n\}$ because if a hyperplane is violated while traversing $x^1,\ldots, x^{l-1}$, it is also violated while traversing $x^1, \ldots, x^l$.
Therefore, linear interpolation $P_c(t)$ of the number of violated hyperplanes is a nondecreasing function, i.e., $P_c(x.t) \leq P_c(t)\ \forall t\in[x.t, x^n.t]$.
In addition, $P_c(t)$ is a nonnegative function as it is a linear interpolation of set cardinalities.
Hence, we have
\begin{align*}
    H_{team}&(x) = P_c(x.t) \times\\
    &\ \ \ \ \ \ \ \ \max(0, \min(H_{duration}(x), T_i^{team} - x.t))\\
    &=\int_{x.t}^{x.t + \max(0, \min(H_{duration}(x), T_i^{team} - x.t))}P_c(x.t)d\tau\\
    &\leq \int_{x.t}^{x.t+T^c(x)}P_c(x.t)d\tau \leq \int_{x.t}^{x.t+T^c(x)}P_c(\tau)d\tau.
\end{align*}
In other words, $H_{team}(x)$ never overestimates true accumulated $P_c(t)$ in an optimal path to the goal state $x^n$ from $x$.\looseness=-1

\textbf{Admissibility of $\boldsymbol{H_{rotation}}$:} $H_{rotation} = 0$ is trivially admissible because $0$ is the lowest cost.

\end{proof}

As each individual cost term is admissible, their Cartesian product is also admissible.
Hence, cost algebraic A* with re-openings minimizes $\mJ$ with the given heuristics $H$.\looseness=-1


\textbf{Time limited best effort search.} 
Finding the optimal state sequence induced by the costs $\mJ$ to the goal position $\vg_i$ can take a lot of time.
Because of this, each robot $i$ limits the duration of the search using a maximum search duration parameter $T^{search}_i$.
When the search gets cut off because of the time limit, we return the lowest cost state sequence to a goal state so far.
During node expansion, A* applies all actions to the state with the lowest cost.
One of those actions is always REACHGOAL.
Therefore, we connect all expanded states to the goal position using REACHGOAL action.
Hence, when the search is cut off, there are many candidate plans to the goal position, which are already sorted according to their costs by A*.\looseness=-1

We remove the states generated by ROTATE actions from the search result and provide the resulting sequence to the trajectory optimization stage.
Note that ROTATE changes only the direction $x.\vDelta$.
The trajectory optimization stage does not use $x.\vDelta$, therefore we remove repeated states for the input of trajectory optimization.\looseness=-1

\subsection{Trajectory Optimization}\label{Section:TrajectoryOptimization}

Let $x^1,\ldots, x^N$ be the state sequence provided to trajectory optimization.
In the trajectory optimization stage, each robot $i$ fits a B\'ezier curve $\vf_{i,l}(t): [0, T_{i,l}] \rightarrow \mathbb{R}^d$ of degree $h_{i,l}$ (which are parameters) where $T_{i,l} = x^{l+1}.t - x^l.t$ to each segment from $(x^l.t, x^l.\vp)$ to $(x^{l+1}.t, x^{l+1}.\vp)$ for all $l \in \{1, \ldots, N-1\}$ to compute a piecewise trajectory $\vf_i(t):[0, x^N.t]\rightarrow \mathbb{R}^d$ where each piece is the fitted B\'ezier curve, i.e.
\begin{align*}
    \vf_i(t) = \begin{cases}
        \vf_{i,1}(t - x^1.t) & x^1.t = 0 \leq t < x^2.t\\
        \ldots&\ \\
        \vf_{i,N-1}(t - x^{N-1}.t) & x^{N-1}.t\leq t \leq x^{N}.t
    \end{cases}
\end{align*}
A B\'ezier curve $\vf_{i,l}(t)$ of degree $h_{i,l}$ has $h_{i,l}+1$ control points $\vP_{i, l, 0}, \ldots, \vP_{i, l, h_{i,l}}$ and is defined as
\begin{align*}
    \vf_{i,l}(t) = \sum_{k = 0}^{h_{i,l}} \vP_{i, l, k}\binom{h_{i,l}}{k}\left(\frac{t}{T_{i,l}}\right)^k\left(1-\frac{t}{T_{i,l}}\right)^{h_{i,l} - k}
\end{align*}

B\'ezier curves are contained in the convex hull of their control points~\cite{farouki2012bernstein}.
This allows us to easily constrain B\'ezier curves to be inside convex sets by constraining their control points to be in the same convex sets.
Let $\mP_i = \{\vP_{i, l, k}\ |\ l\in\{1, \ldots, N-1\}, k\in \{0, \ldots, h_{i,l}\} \}$ be the set of the control points of all pieces of robot $i$.\looseness=-1

During the trajectory optimization stage, we construct a QP where decision variables are control points $\mP_i$ of the trajectory.\looseness=-1

We preserve the cumulative collision probabilities $P_{s}$ and $P_{d}$ and the cumulative number of violated active DSHT hyperplanes $P_{c}$ of the state sequence $x^{1:N}$, by ensuring that robot following $\vf_{i,l}(t)$ i) avoids the same static obstacles and dynamic obstacle behavior models robot traveling from $x^1$ to $x^{l+1}$ avoids and ii) does not violate any active DSHT hyperplane that the robot traveling from $x^{1}$ to $x^{l+1}$ does also not violate for all $l\in\{1,\ldots,N-1\}$.
The constraints generated for these are always feasible.\looseness=-1

In order to encourage dynamic obstacles to determine their behavior using the interaction models in the same way they determine it in response to $x^{1:N}$, we add cost terms that match the position and the velocity at the start of each B\'ezier piece $\vf_{i,l}(t)$ to the position and the velocity of the robot at $x^{l}$ for each $l\in\{1, \ldots, N-1\}$.\looseness=-1

\subsubsection{Constraints}
There are five types of constraints we impose on the trajectory, all of which are linear in control points $\mP_i$.


\begin{figure}
    \centering
    \includegraphics[width=\linewidth]{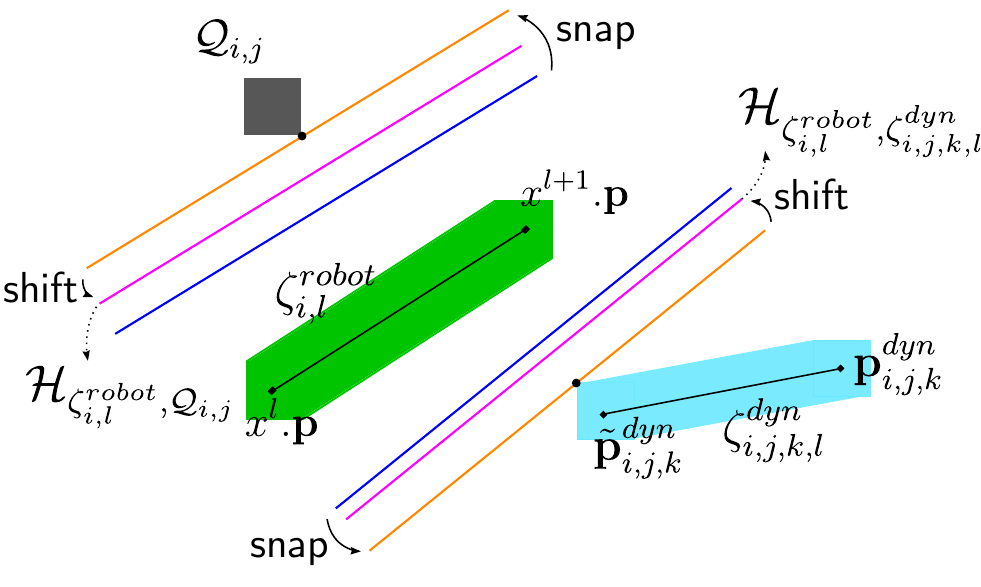}
    \vspace{-0.3in}
    \caption{\textbf{Static and dynamic obstacle collision constraints.} Given the {\color{darkgray}gray} static obstacle $j\in\mO_i$ with shape $\mQ_{i,j}$ and the {\color{ForestGreen} green} sweep $\zeta_{i,l}^{robot}$ of $\mR_i^{robot}$ from $x^l.\vp$ to $x^{l+1}.\vp$, we compute the {\color{blue}blue} support vector machine hyperplane between them. We compute the {\color{orange}orange} separating hyperplane by snapping it to $\mQ_{i,j}$. The robot should stay in the safe side of the {\color{orange}orange} hyperplane. We shift {\color{orange}orange} hyperplane to account for robot's collision shape $\mR_i^{robot}$ and compute the {\color{magenta}magenta} hyperplane $\mH_{\zeta_{i,l}^{robot},\mQ_{i,j}}$. The B\'ezier curve $\vf_{i,l}(t)$ is constrained by $\mH_{\zeta_{i,l}^{robot},\mQ_{i,j}}$ to avoid $\mQ_{i,j}$. To avoid the dynamic obstacle $j\in \mD_i$ moving from $\tilde{\vp}^{dyn}_{i, j, k}$ to $\vp^{dyn}_{i, j, k}$, the {\color{blue}} support vector machine hyperplane between the region $\zeta^{dyn}_{i,j,k,l}$ swept by $\mR_{i,j}^{dyn}$ and $\zeta_{i,l}^{robot}$ is computed. The same snap and shift operations are conducted to compute the {\color{magenta} magenta} hyperplane $\mH_{\zeta_{i,l}^{robot},\zeta^{dyn}_{i,j,k,l}}$, constraining $\vf_{i,l}(t)$.}
    \label{Figure:SeparatingHyperplanes}
    \vspace{-0.3in}
\end{figure}

\textbf{Static obstacle avoidance constraints.}
Let $j \in \mO_i \setminus x^{l+1}.\mO$ be a static obstacle that robot $i$ travelling from $x^1$ to $x^{l+1}$ avoids for an $l\in\{1, \ldots, N-1\}$.
Let $\zeta_{i,l}^{robot}$ be the space swept by the robot traveling the straight line from $x^l.\vp$ to $x^{l+1}.\vp$.
Since the shape of the robot is convex and it is swept along a straight line segment, $\zeta_{i,l}^{robot}$ is also convex~\cite{senbaslar2023rlss}.
Static obstacle $j$ is also convex by definition.
Since robot avoids $j$, $\mQ_{i,j} \cap \zeta_{i,l}^{robot} = \emptyset$.
Hence, they are linearly separable by the separating hyperplane theorem.
We compute the support vector machine (SVM) hyperplane between $\zeta_{i,l}^{robot}$ and $\mQ_{i,j}$, snap it to $\mQ_{i,j}$ by shifting it along its normal so that it touches $\mQ_{i,j}$, and shift it back to account for robot's collision shape $\mR_i^{robot}$ similarly to~\cite{senbaslar2023rlss} (Fig.~\ref{Figure:SeparatingHyperplanes}).
Let $\mH_{\zeta_{i,l}^{robot}, \mQ_{i,j}}$ be this hyperplane.
We constrain $\vf_{i,l}$ with $\mH_{\zeta_{i,l}^{robot}, \mQ_{i,j}}$ for it to avoid static obstacle $j$, which is a feasible linear constraint as shown in~\cite{senbaslar2023rlss}.\looseness=-1

These constraints enforce that robot traversing $\vf_{i,l}(t)$ avoids the same obstacles robot traversing from $x^1$ to $x^{l+1}$ avoids, not growing the set $x^{l+1}.\mO$ between $[x^l.t, x^{l+1}.t]\ \forall l\in\{1, \ldots, N-1\}$, and hence preserving $P_{s}(t)$ $\forall t\in[0, x^N.t]$.\looseness=-1

\textbf{Dynamic obstacle avoidance constraints.}
Let $(\mB_{i,j,k}, \vp_{i,j,k}^{dyn}) \in x^{l+1}.\mD$ be a dynamic obstacle behavior model--position pair that does not collide with robot travelling from $x^{1}$ to $x^{l+1}$ for an $l\in\{1, \ldots, N-1\}$.
$\mB_{i, j, k}$ should be in $x^l.\mD$ as well, because the behavior models in $x^{l+1}.\mD$ are a subset of behavior models in $x^l.\mD$ by definition.
Let $\tilde{\vp}_{i, j, k}^{dyn}$ be the position of the dynamic obstacle $j$ moving according to behavior model $\mB_{i,j,k}$ at state $x^l$.
Let $\zeta^{dyn}_{i, j, k, l}$ be the region swept by the dynamic object $j$ from  $\tilde{\vp}_{i, j, k}^{dyn}$ to $ \vp_{i,j,k}^{dyn}$.
During collision check of state expansion from $x^l$ to $x^{l+1}$, we check whether $\zeta^{dyn}_{i, j, k, l}$ intersects with $\zeta_{i,l}^{robot}$ and add the model to $x^{l+1}.\mD$ if they do not.
Since these sweeps are convex sets (because they are sweeps of convex sets along straight line segments), they are linearly separable.
We compute the SVM hyperplane between them, snap it to the region swept by dynamic obstacle and shift it back to account for the robot shape $\mR_i^{robot}$ (Fig.~\ref{Figure:SeparatingHyperplanes}).
Let $\mH_{\zeta_{i,l}^{robot}, \zeta^{dyn}_{i, j, k, l}}$ be this hyperplane.
We constrain $\vf_{i,l}$ with $\mH_{\zeta_{i,l}^{robot}, \zeta^{dyn}_{i, j, k, l}}$, which is a feasible linear constraint as shown in~\cite{senbaslar2023rlss}.\looseness=-1

These constraints enforce that robot traversing $\vf_{i,l}(t)$ avoids same dynamic obstacle behavior models robot travelling from $x^1$ to $x^{l+1}$ avoids, not shrinking the set $x^{l+1}.\mD\ \forall l\in\{1,\ldots, N-1\}$, and hence preserving $P_{d}(t)$ $\forall t\in[0, x^N.t]$.\looseness=-1

The reason we perform \emph{conservative collision checks} for dynamic obstacle avoidance during discrete search is to use the separating hyperplane theorem.
Without the conservative collision check, there is no proof of linear separability, and SVM computation might fail.\looseness=-1

\textbf{Teammate avoidance constraints.}
Let $\mH \in \tilde{\mH}^{active}_i\setminus x^{l+1}.\mH$ be an active DSHT hyperplane that is not violated while traversing states from $x^1$ to $x^{l+1}$.
If $x^l.t < T_i^{team}$, i.e., the segment from $x^l$ to $x^{l+1}$ is within the teammate safety enforcement period, we constrain $\vf_{i, l}$ with $\mH$ by shifting it to account for robot's collision shape, and enforcing $\vf_{i, l}$ to be in the safe side of the shifted hyperplane, which is a feasible constraint~\cite{senbaslar2023rlss}.
Otherwise, we do not constrain the piece $\vf_{i, l}$ with active DSHT hyperplanes.\looseness=-1

Within the safety enforcement period $T_i^{team}$, any $\vf_{i, l}$ does not violate any active DSHT hyperplane that is not violated while traversing the state sequence $x^{1:l+1}$, preserving the cardinality of sets $x^{l+1}.\mH$, and hence $P_c(t)$.\looseness=-1

\textbf{Continuity constraints.}
We enforce continuity up to the desired degree $c_i$ between pieces by
\begin{align*}
    \frac{d^k\vf_{i,l}(T_{i,l})}{dt^k} = \frac{d^k\vf_{i,l+1}(0)}{dt^k}\ &\forall l \in \{1,\ldots, N-2\}\\
    &\forall k\in\{0,\ldots,c_i\}.
\end{align*}

We enforce continuity up to desired degree $c_i$ between planning iterations by
\begin{align*}
    \frac{d^k\vf_{i}(0)}{dt^k} = \vp_{i,k}^{self}\ \forall k\in\{0, \ldots, c_i\}.
\end{align*}

\textbf{Dynamic limit constraints.}
Derivative of a B\'ezier curve is another B\'ezier curve with a lower degree, control points of which are linearly related to the control points of the original curve~\cite{farouki2012bernstein}.
Let $\mP_i^k = \vD^k(\mP_i)$ be the control points of the $k^{th}$ derivative of $\vf_i$, where $\vD^k$ is the linear transformation relating $\mP_i$ to $\mP_i^k$.
We enforce dynamic constraints uncoupled among dimensions by limiting maximum $k^{th}$ derivative magnitude in each dimension by $\frac{\gamma_i^k}{\sqrt{d}}$ so that they are linear.
Utilizing the convex hull property of B\'ezier curves, we enforce
\begin{align*}
 -\frac{\gamma_i^k}{\sqrt{d}} \preceq\vP \preceq \frac{\gamma_i^k}{\sqrt{d}}\ \forall\vP\in\mP_i^k   
\end{align*}
which limits $k^{th}$ derivative magnitude with $\gamma_i^k$ along the trajectory $\vf_i$ where $\preceq$ between a vector and a scalar is true if and only if all elements of the vector are less than or equal to the scalar.\looseness=-1

While collision avoidance constraints are always feasible, we do not have a general proof of feasibility for continuity and dynamic limit constraints, which may cause the planner to fail.
If the planner fails, the robot continues using the last successfully planned trajectory.\looseness=-1


\subsubsection{Objective Function}
We use a linear combination of three cost terms as our objective function, all of which are quadratic in control points $\mP_i$.\looseness=-1

\textbf{Energy term.}
We use the sum of integrated squared derivative magnitudes as a metric for energy usage similar to~\cite{senbaslar2023rlss, honig2018quadswarms, richter2013planning, senbaslar2018rte}. The energy usage cost term $\mJ_{energy}(\mP_i)$ is
\begin{align*}
\mJ_{energy}(\mP_i) = \sum_{\lambda_{i,k} \in \vlambda_i} \lambda_{i,k} \int_0^{x^N.t}\normtwo{\frac{d^k\vf_i(t)}{dt^k}}^2dt
\end{align*}
where $\lambda_{i,k}$s are parameters.\looseness=-1

\textbf{Position matching term.}
We add a position matching term $J_{position}(\mP_i)$ that penalizes distance between piece endpoints and state sequence positions $x^{2}.\vp, \ldots, x^{N}.\vp$.\looseness=-1

\begin{align*}
\mJ_{position}(\mP_i) = \sum_{l\in\{1,\ldots,N-1\}} \theta_{i,l}\normtwo{\vf_{i,l}(T_{i,l}) - x^{l+1}.\vp}^2
\end{align*}
where $\theta_{i,l}$s are weight parameters.\looseness=-1

\textbf{Velocity matching term.}
We add a velocity matching term $J_{velocity}$ that penalizes divergence from the velocities of the state sequence $x^{1:N}$ at piece start points.\looseness=-1

\begin{align*}
    \mJ_{velocity}(\mP_i) = \sum_{l\in\{1,\ldots,N-1\}} \beta_{i,l}\normtwo{\frac{d\vf_{i,l}(0)}{dt} - \frac{x^{l+1}.\vp - x^l.\vp}{x^{l+1}.t - x^l.t}}^2
\end{align*}
where $\beta_{i,l}$s are weight parameters.\looseness=-1

Position and velocity matching terms encourage matching the positions and velocities of the state sequence $x^{1:N}$. This causes dynamic obstacles to make similar interaction decisions against the robot following trajectory $\vf_i(t)$ to they do to the robot following the state sequence $x^{1:N}$.
One could also add constraints to the optimization problem to exactly match positions and velocities.
Adding position and velocity matching terms as constraints resulted in a high rate of optimization infeasibilities in our experiments.
Therefore, we choose to add them to the cost function of the optimization term in the final algorithm.\looseness=-1

\section{Evaluation}

We implement our algorithm in C++.
We use CPLEX 12.10 to solve the quadratic programs generated during the trajectory optimization stage, including the SVM problems.
We evaluate our planner's behavior in simulations in 3D.
All simulation experiments are conducted on a computer with Intel(R) i7-8700K CPU @3.70GHz, running Ubuntu 20.04 as the operating system.
The planning pipeline is executed in a single core of the CPU in each planning iteration of each robot.
We compare our algorithm's performance with three state-of-the-art decentralized navigation decision-making algorithms, namely SBC~\cite{wang2017safety}, RLSS~\cite{senbaslar2023rlss}, and RMADER~\cite{kondo2023rmader}, in both single-robot and multi-robot scenarios in simulations and show that our algorithm achieves considerably higher success rate compared to the baselines.
We implement our algorithm for physical quadrotors and show its feasibility in the real world in single and multi-robot experiments.
The supplemental video includes recordings from both i) simulations, including some not covered in this paper, and ii) physical robot experiments.\looseness=-1


\subsection{Simulation Evaluation Setup}

\subsubsection{Obstacle Sensing}\label{Section:Mocking}
We use octrees~\cite{hornung2013octomap} to represent the static obstacles.
Each axis-aligned box with its stored existence probability is used as a static obstacle.
We model static obstacle sensing imperfections using three operations applied to the octree representation of the environment in static obstacle sensing uncertainty experiments (Sec.~\ref{Section:StatiObstacleSensingUncertainty}):
\begin{itemize}
\item \textbf{increaseUncertainty:} Increases the uncertainty of existing obstacles by moving their existence probabilities closer to $0.5$, by sampling a probability between the existence probability $p$ of an obstacle and $0.5$ uniformly.
\item \textbf{leakObstacles($\boldsymbol{p_{leak}}$):} Leaks each obstacle to a neighbouring region with probability $p_{leak}$.
\item \textbf{deleteObstacles:} Deletes obstacles randomly according to their non-existence probabilities.
\end{itemize}\looseness=-1

We model dynamic obstacle shapes $\mR_{i,j}^{dyn}$ as axis-aligned boxes.
For each robot, $i$, to simulate imperfect sensing of $\mR_{i,j}^{dyn}$, we inflate or deflate it in each axis randomly according to a one dimensional $0$ mean Gaussian noise with standard deviation $\sigma_i$ in experiments with dynamic obstacle sensing uncertainty (Sec.~\ref{Section:DynamicObstacleSensingUncertainty})\footnote{Note that, we do not explicitly account for the dynamic obstacle shape sensing uncertainty during planning, yet we still show our algorithm's performance under such uncertainty.}.
Each robot $i$ is assumed to be noisily sensing the position and velocity of each dynamic obstacle $j\in \mD_i$ according to a $2d$ dimensional $0$ mean Gaussian noise with positive definite covariance $\Sigma_i \in \mathbb{R}^{2d \times 2d}$.
The first $d$ terms of the noise are applied to the real position and the second $d$ terms of the noise are applied to the real velocity of the obstacle to compute sensed position and velocity at each simulation step.\looseness=-1

\subsubsection{Predicting Behavior Models of Dynamic Obstacles}\label{Section:Prediction}
We introduce three simple model-based online dynamic obstacle behavior model prediction methods to use during evaluation\footnote{More sophisticated behavior prediction methods can be developed and integrated with our planner, which might potentially use domain knowledge about the environment objects exists or handle position and velocity sensing uncertainties explicitly.}.\looseness=-1

\begin{figure}
    \centering
    \hfill
    \subfloat[Goal attractive movement]{\includegraphics[width=0.24\linewidth]{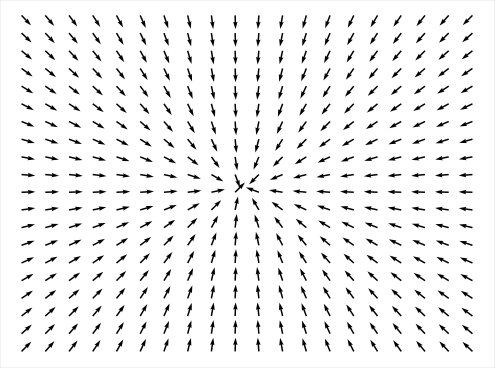}\label{Figure:GoalAttractive}}\hfill
    \subfloat[Constant velocity movement]{\includegraphics[width=0.24\linewidth]{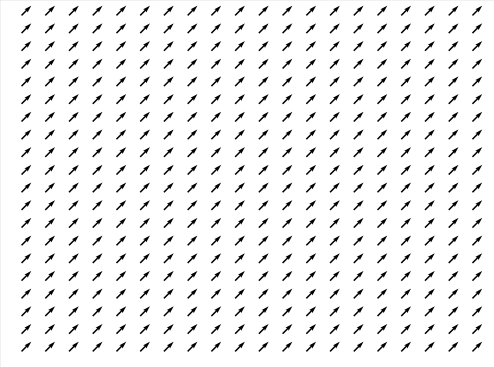}\label{Figure:ConstantVelocity}}\hfill
    \subfloat[Rotating movement]{\includegraphics[width=0.24\linewidth]{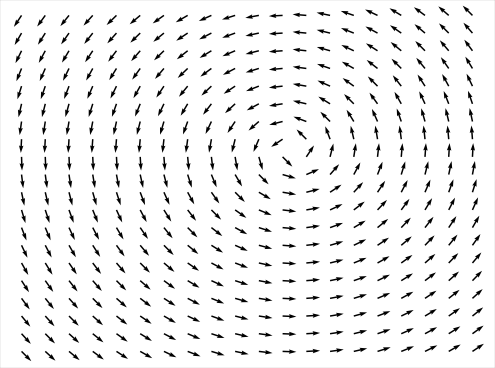}\label{Figure:Rotating}}\hfill
    \subfloat[Repulsive interaction]{\includegraphics[width=0.24\linewidth]{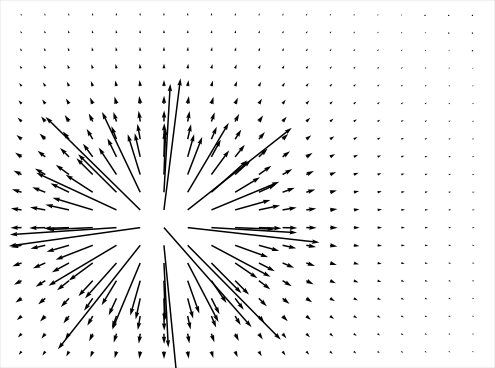}\label{Figure:Repulsive}}\hfill
    \caption{The movement and interaction models we define for dynamic obstacles. Each model has associated parameters described in Sec.~\ref{Section:Prediction}.}
    \vspace{-0.25in}
\end{figure}

Let $\vp^{dyn}$ be the position of a dynamic obstacle. We define three movement models:
\begin{itemize}
\item \textbf{Goal attractive movement model $\boldsymbol{\mM_g(\vp^{dyn} | \hat{\vg}, \hat{s})}$  (Fig.~\ref{Figure:GoalAttractive}):} Attracts the dynamic obstacle to the goal position $\hat{\vg}$ with desired speed $\hat{s}$.
The desired velocity $\tilde{\vv}^{dyn}$ of the dynamic obstacle is computed as $\tilde{\vv}^{dyn} = \mM_g(\vp^{dyn} | \hat{\vg}, \hat{s}) = \frac{\hat{\vg} - \vp^{dyn}}{\normtwo{\hat{\vg} - \vp^{dyn}}}\hat{s}$.
\item \textbf{Constant velocity movement model $\boldsymbol{\mM_c(\vp^{dyn} | \hat{\vv})}$ (Fig.~\ref{Figure:ConstantVelocity}):} Moves the dynamic obstacle with constant velocity $\hat{\vv}$.
The desired velocity $\tilde{\vv}^{dyn}$ of the dynamic obstacle is computed as $\tilde{\vv}^{dyn} = \mM_c(\vp^{dyn} | \hat{\vv}) = \hat{\vv}$.
\item \textbf{Rotating movement model $\boldsymbol{\mM_r(\vp^{dyn} | \hat{\vc}, \hat{s})}$ (Fig.~\ref{Figure:Rotating}):} Rotates the robot around the rotation center $\hat{\vc}$ with desired speed $\hat{s}$.
The desired velocity $\tilde{\vv}^{dyn}$ of the dynamic obstacle is computed as $\tilde{\vv}^{dyn} = \mM_r(\vp^{dyn} | \hat{\vc}, \hat{s}) = \frac{\vr}{\normtwo{\vr}}\hat{s}$ where $\vr\perp(\vp^{dyn}-\hat{\vc})$\footnote{During prediction, we assume that we have access to the algorithm computing $\vr$ from $\vp^{dyn}$ and $\hat{\vc}$ as there are infinitely many vectors perpendicular to $\vp^{dyn}-\vc$ when $d \geq 3$.}.
\end{itemize}
Let $\vp^{robot}$ be the current position and $\vv^{robot}$ be the current velocity of a robot.
We define one interaction model:
\begin{itemize}
\item \textbf{Repulsive interaction model $\boldsymbol{\mI_r(\vp^{dyn}, \tilde{\vv}^{dyn}, \vp^{robot}, \vv^{robot} | \hat{f})}$ (Fig.~\ref{Figure:Repulsive}):} Causes dynamic obstacle to be repulsed from the robot with repulsion strength $\hat{f}$.
The velocity of the dynamic obstacle is computed as $\vv^{dyn} = \mI_r(\vp^{dyn}, \tilde{\vv}^{dyn}, \vp^{robot}, \vv^{robot} | \hat{f}) = \tilde{\vv}^{dyn} + \frac{\left(\vp^{dyn} - \vp^{robot}\right)\hat{f}}{\normtwo{\vp^{dyn} - \vp^{robot}}^3}$.
The dynamic obstacle gets repulsed away from the robot linearly proportional to repulsion strength $\hat{f}$, and quadratically inversely proportional to the distance from the robot\footnote{Note that the interaction model we use does not utilize the velocity $\vv^{robot}$ of the robot, while our planner allows it. We choose to use this interaction model for easier online prediction of model parameters as our paper is not focused on prediction algorithms.}.
\end{itemize}

We implement three online prediction methods to predict the behavior models of dynamic obstacles from the sensed position and velocity histories of dynamic obstacles and the robot, one for each combination of movement and interaction models.
Here, we explain only one of the predictors for brevity since prediction is not the focus of our paper.
Each robot runs the prediction algorithms for each dynamic obstacle.
Let $\vp^{robot}_{hist}$ be the position and $\vv_{hist}^{robot}$ be the velocity history of the robot collected synchronously with $\vp^{dyn}_{hist}$ and $\vv^{dyn}_{hist}$ for the dynamic obstacle.\looseness=-1

\paragraph{Goal attractive repulsive predictor}
Assuming the dynamic obstacle moves according to goal attractive movement model $\mM_g(\vp^{dyn} | \hat{\vg}, \hat{s})$ and repulsive interaction model $\mI_r(\vp^{dyn}, \tilde{\vv}^{dyn}, \vp^{robot}, \vv^{robot} |\hat{f})$, we estimate parameters $\hat{\vg}$, $\hat{s}$ and $\hat{f}$.
We solve two consecutive quadratic programs (QP): i) one for goal $\hat{\vg}$ estimation, ii) one for desired speed $\hat{s}$ and repulsion strength $\hat{f}$ estimation\footnote{While joint estimation of $\hat{\vg}$, $\hat{s}$ and $\hat{f}$ would be more accurate, we choose to estimate the parameters in two steps so that the individual problems are QPs, and can be solved quickly.\looseness=-1}.\looseness=-1

\textbf{Goal estimation.} 
Let $\vp^{dyn}_{hist,k}$ and $\vv^{dyn}_{hist,k}$ be the $k$th elements of $\vp^{dyn}_{hist}$ and $\vv^{dyn}_{hist}$ respectively.
$\vp^{dyn}_{hist,k} + t_k\vv^{dyn}_{hist,k}, t_k \geq 0$ is the ray the dynamic obstacle would have followed if it did not change its velocity after the $k$th sample.
We estimate the goal position $\hat{\vg}$ of the dynamic obstacle by computing the point whose average squared distance to these rays 
is minimal:
\begin{align*}
    \min_{\hat{\vg}, t_1, \ldots, t_K} &\frac{1}{K} \sum_{k=1}^{K} \normtwo{\vp^{dyn}_{hist,k} + t_k\vv^{dyn}_{hist,k} - \hat{\vg}}^2, \text{s.t.}\\
    &t_k \geq 0\ \forall k\in \{1, \ldots, K\}
\end{align*}
where $K$ is the number of recorded position/velocity pairs.\looseness=-1

\textbf{Desired speed and repulsion strength estimation.} Assuming the dynamic obstacle moves according to the goal attractive repulsive behavior model, its estimated velocity at step $k$ is:

\begin{align*}
    \hat{\vv}^{dyn}_k &= \mI_r(\vp^{dyn}_{hist,k}, \mM_g(\vp^{dyn}_{hist,k} | \hat{\vg}, \hat{s}), \vp^{robot}_{hist,k}, \vv^{robot}_{hist,k} | \hat{f})\\
    &= \frac{\hat{\vg} - \vp^{dyn}_{hist,k}}{\normtwo{\hat{\vg}-\vp^{dyn}_{hist,k}}}\hat{s} + \frac{\left(\vp^{dyn}_{hist,k} - \vp^{robot}_{hist,k}\right)\hat{f}}{\normtwo{\vp^{dyn}_{hist,k} - \vp^{robot}_{hist,k}}^3}
\end{align*}
We minimize the average squared distance between estimated and sensed dynamic obstacle velocities to estimate $\hat{s}$ and $\hat{f}$:
\begin{align*}
    \min_{\hat{s},\hat{f}} \frac{1}{K}\sum_{k=1}^K \normtwo{\hat{\vv}^{dyn}_k - \vv^{dyn}_{hist,k}}^2.
\end{align*}\looseness=-1

\paragraph{Assigning probabilities to behavior models}
Each robot runs the three predictors for each dynamic obstacle.
For each predicted behavior model $(\mM_j, \mI_j)$, $j \in \{1,2,3\}$, we compute the average estimation error $E_j$ as the average $L^2$ norm between the predicted and the actual velocities:
\begin{align*}
    \frac{1}{K}\sum_{k=1}^K \lVert &\vv^{dyn}_{hist,k}  -\mI_j(\vp^{dyn}_{hist,k}, \mM_j(\vp^{dyn}_{hist,k}), \vp^{robot}_{hist,k}, \vv^{robot}_{hist,k})\rVert_2
\end{align*}
We compute the softmax of errors $E_j$ with base $b$ where $0 < b < 1$, and use them as probabilities, i.e., $p^{dyn}_{j} = \frac{b^{E_j}}{\sum_{k=1}^3 b^{E_k}}$.\looseness=-1

\subsubsection{Metrics}
We run each single robot simulation experiment $1000$ times and each multi-robot simulation experiment $100$ times in randomized environments in performance evaluation of our algorithm (Sec.~\ref{Section:PerfUnderDiffConfandEnv}).
In baseline comparisons, we run each single-robot simulation experiment $250$ times and each multi-robot simulation experiment $100$ times (Sec.~\ref{Section:BaselineComparison}).
In each experiment, the robots are tasked with navigating from their start to goal positions through an environment with static and dynamic obstacles.
There are nine metrics that we report for each experiment, averaged over all robots in all runs.\looseness=-1
\begin{itemize}
    \item \textbf{Success rate:} Ratio of robots that navigate from their start positions to their goal positions successfully without any collisions.
    \item \textbf{Collision rate:} Ratio of robots that collide with a static or a dynamic obstacle or a teammate at least once.
    \item \textbf{Deadlock rate:} Ratio of robots that deadlock, i.e., do not reach its goal position.\footnote{Note that, under this definition, livelocks, i.e., moving but not reaching to goal, and deadlocks in the classical sense, i.e., not moving at all, are both defined as deadlocks. We extend the definition of the deadlocks in order to decrease the number of terms we use during our discussion.}
    \item \textbf{Static obstacle collision rate:} Ratio of robots that collide with a static obstacle at least once.
    \item \textbf{Dynamic obstacle collision rate:} Ratio of robots that collide with a dynamic obstacle at least once.
    \item \textbf{Teammate collision rate:} Ratio of robots that collide with a teammate at least once.
    \item \textbf{Average navigation duration:} Average time it takes for a robot to navigate from its start position to its goal position across no-deadlock no-collision robots.
    \item \textbf{Planning fail rate:} Ratio of failing planning iterations over all planning iterations of all robots in all runs.
    \item \textbf{Average planning duration:} Average planning duration over all planning iterations of all robots in all runs.
\end{itemize}\looseness=-1

\subsubsection{Fixed Parameters and Run Randomization}

Here, we describe fixed parameters across all experiments and the parameters that are randomized in all experiments.
Fixed parameters are shared by each robot $i$, and randomized parameters are randomized the same way for all robots $i$.\looseness=-1

\textbf{Fixed parameters.} We set $p_i^{min}=0.1$, $\tau_i=\SI{2.5}{s}$, $\tilde{\gamma}_i^1=\SI{5.0}{\frac{m}{s}}$, $\tilde{\tau}_i=\SI{2.0}{s}$, $\alpha_i=1.5$,  $T^{search}_i=\SI{75}{ms}$, $h_{i,l}=13$ for all $l$,  $\theta_{i,l}$ and $\beta_{i,l}$ $10, 20, 30$ for the first three pieces, and $40$ for the remaining pieces, $\lambda_{i,1} = 2.8$, $\lambda_{i,2} = 4.2$, $\lambda_{i,4} = 0.2$, and $\lambda_{i,l} = 0$ for all other degrees, $c_i=2$,  maximum velocity $\gamma_i^1=\SI{10}{\frac{m}{s}}$ and maximum acceleration $\gamma_i^2 = \SI{15}{\frac{m}{s^2}}$ for all robots $i$.
The FORWARD actions of search are ($\SI{2.0}{\frac{m}{s}}$, $\SI{0.5}{s}$), ($\SI{3.5}{\frac{m}{s}}$, $\SI{0.5}{s}$), and ($\SI{4.5}{\frac{m}{s}}$, $\SI{0.5}{s}$).\looseness=-1

The desired trajectory of each robot is set to the shortest path connecting its start to its goal position, avoiding only the static obstacles.
The duration of the desired trajectory assumes the robot follows it at $\frac{1}{3}$ of its maximum speed $\tilde{\gamma}_i^1$ for search.\looseness=-1

In all runs of all experiments, robots navigate in random forest environments, i.e., static obstacles are tree-like objects.
The forest has $\SI{15}{m}$ radius, and trees are $\SI{6}{m}$ high and have a radius of $\SI{0.5}{m}$.
The forest is centered around the origin.
The octree structure has a resolution of $\SI{0.5}{m}$.
The density $\rho$ of the forest, i.e., the ratio of occupied cells in the octree within the forest, is set differently in each experiment.\looseness=-1

\textbf{Run randomization.}
We randomize the following parameters in all runs of each experiment in the same way.\looseness=-1



\textit{Dynamic obstacle randomization.} 
We randomize the axis-aligned box collision shape of each dynamic obstacle by randomizing its size in each dimension uniformly in $[\SI{1}{m}, \SI{4}{m}]$. 
The dynamic obstacle's initial position is uniformly sampled in the box $A$ with minimum corner $\begin{bmatrix}\SI{-12}{m} & \SI{-12}{m} & \SI{-2}{m}\end{bmatrix}^\top$ and maximum corner $\begin{bmatrix}\SI{12}{m} & \SI{12}{m} & \SI{6}{m}\end{bmatrix}^\top$.
We sample the movement model of the obstacle among goal attractive, constant velocity, and rotating models.
If the goal attractive movement model is sampled, we sample its goal position $\hat{\vg}$ uniformly in the box $A$.
If rotating model is sampled, we sample the rotation center $\hat{\vc}$ in the box with minimum corner $\begin{bmatrix}\SI{-0.5}{m} & \SI{-0.5}{m} & \SI{0.0}{m}\end{bmatrix}^\top$ and maximum corner $\begin{bmatrix}\SI{0.5}{m} & \SI{0.5}{m} & \SI{6.0}{m}\end{bmatrix}^\top$.
The desired speed $\hat{s}$ of the obstacles is sampled uniformly in $[\SI{0.5}{\frac{m}{s}}, \SI{1.0}{\frac{m}{s}}]$.
If the constant velocity model is sampled, the velocity $\hat{\vv}$ is set by uniformly sampling a unit direction vector and multiplying it with $\hat{s}$.
The interaction model is always the repulsive model. 
The repulsion strength $\hat{f}$ is set/randomized differently in each experiment.\footnote{Note that, in reality, there is no necessity that the dynamic obstacles move according to the behavior models that we define.
The reason we choose to move dynamic obstacles according to these behavior models is so that at least one of our predictors assumes the correct model for the obstacle.
The prediction is still probabilistic in the sense that we generate three hypotheses for each dynamic obstacle and assign probabilities.\looseness=-1
}
Each dynamic obstacle changes its velocity every decision-making period, which is sampled uniformly from $[\SI{0.1}{s}, \SI{0.5}{s}]$.
Each dynamic obstacle runs its movement model to get its desired velocity and runs its interaction model for each robot, and executes the average velocity, i.e., dynamic obstacles interact with all robots, while an individual robot models only the interactions with itself.
The number of dynamic obstacles $\#^D$ is set differently in each experiment.\looseness=-1


\textit{Robot randomization.}
We randomize the axis-aligned box collision shape of each robot by randomizing its size in each dimension uniformly in $[\SI{0.2}{m}, \SI{0.3}{m}]$.
We sample the replanning period of each robot uniformly in $[\SI{0.2}{s}, \SI{0.4}{s}]$.
Robot start positions are selected randomly around the forest on a circle with radius $\SI{21.5}{m}$ at height $\SI{2.5}{m}$.
They are placed on the circle with equal arc length distance between them.
The goal positions are set to the antipodal points of the start positions on the circle.
The number of robots $\#^R$ is set/randomized differently in each experiment.\looseness=-1


Sample environments with varying static obstacle densities and the number of dynamic obstacles are shown in Fig.~\ref{Figure:SampleEnvironments}.

\begin{figure}
    \centering
    \subfloat[$\rho = 0.1, \#^D = 0$]{\includegraphics[width=0.45\linewidth, height=3cm]{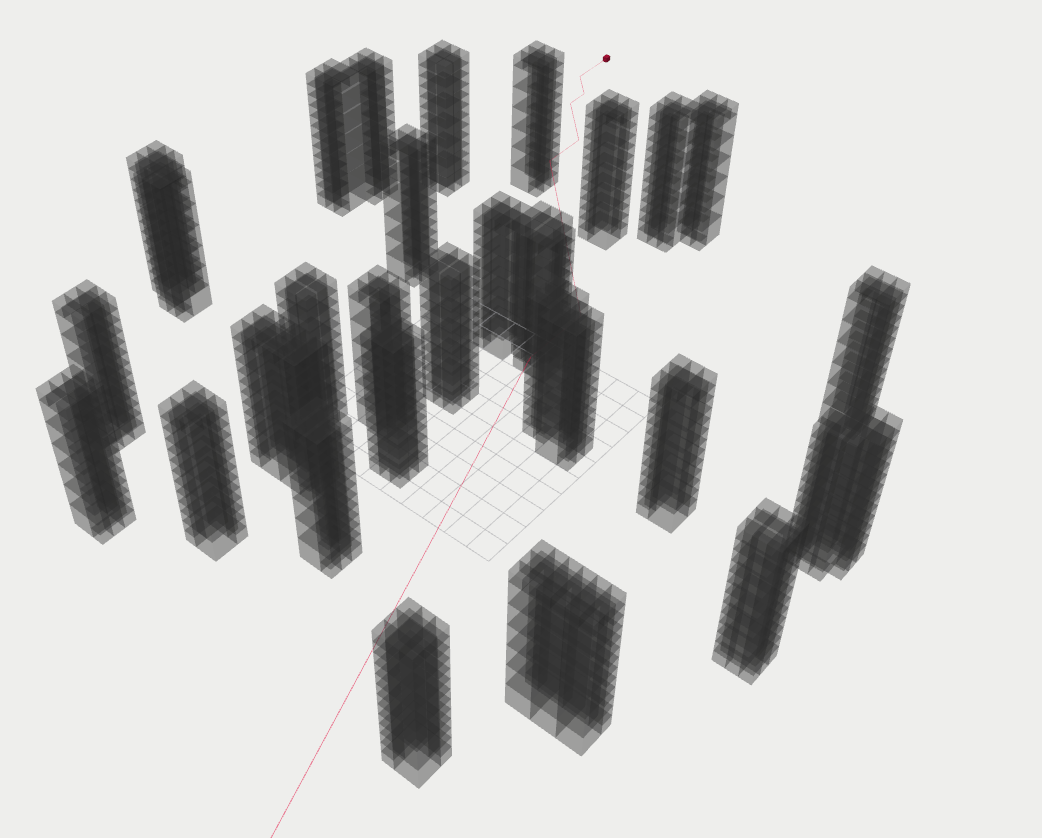}}
    \hfill
    \subfloat[$\rho = 0.2, \#^D = 100$]{\includegraphics[width=0.45\linewidth, height=3cm]{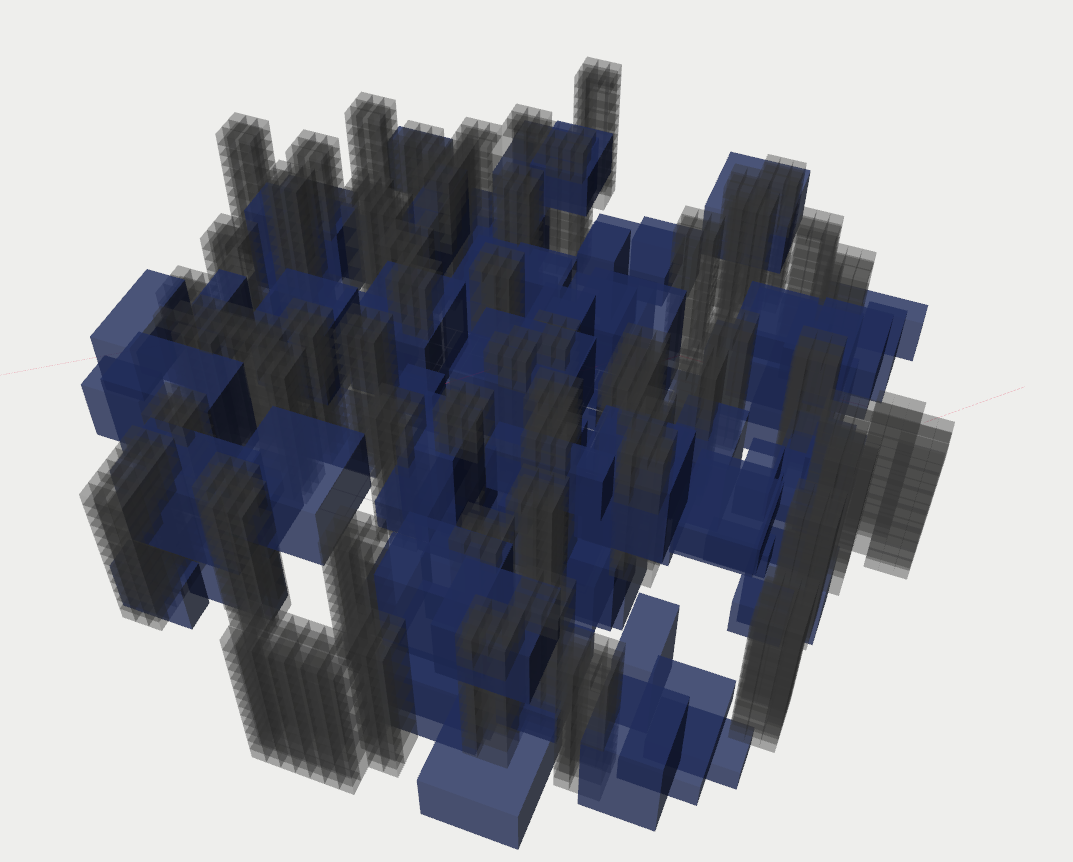}}
    \hfill
    \caption{Sample environments generated during run randomization with varying static obstacle density $\rho$ and number of dynamic obstacles $\#^D$. {\color{darkgray}Gray} objects are static obstacles representing the forest. {\color{blue}Blue} objects are dynamic obstacles.}
    \vspace{-0.1in}
    \label{Figure:SampleEnvironments}
\end{figure}




\subsubsection{Simulating Communication Imperfections}\label{Section:SimulatingCommunicationImperfections}

Robots communicate with each other in order to update tail time points, whenever they successfully plan.
We simulate imperfections in the communication medium.
We model message delays as an exponential distribution with mean $\delta$ seconds and message drops with a Bernoulli distribution with drop probability $\kappa$ similar to~\cite{senbaslar2022async}.
Each message a robot broadcasts is dropped with probability $\kappa$, and applied a delay sampled from the delay distribution if it is not dropped.
Message re-orderings are naturally introduced by random delays.
\looseness=-1

\subsection{Performance under Different Configurations and Environments in Simulations}\label{Section:PerfUnderDiffConfandEnv}

We evaluate the performance of DREAM when it is used in different environments and configurations.\looseness=-1

\subsubsection{Repulsive Dynamic Obstacle Interactivity}

\begin{table}[t]
    \centering
    \caption{Effects of repulsion strength}
    \vspace{-0.05in}
    \label{Table:RepulsionStrength}
    \resizebox{\columnwidth}{!}{\begin{tabular}{|c|c|c|c|c|c|c|}
         \hline $\hat{f}$ & {\color{cyan}$-0.5$} & $0$ & $0.5$ & $1.5$ & $3.0$ & $6.0$  \\
         \hline 
         \textbf{succ. rate} & 0.818 & 0.897 & 0.915 & 0.936 & 0.970 & 0.991\\ 
         \textbf{coll. rate} & 0.182 & 0.102  & 0.084 & 0.064 & 0.030  & 0.009\\
         \textbf{deadl. rate} & 0.001 & 0.004 & 0.002 & 0.000  &  0.000  & 0.000 \\
         {\color{lightgray}\textbf{s. coll. rate}} & {\color{lightgray}0.000} & {\color{lightgray}0.000} & {\color{lightgray}0.000} & {\color{lightgray}0.000} &   {\color{lightgray}0.000} &{\color{lightgray}0.000} \\
         \textbf{d. coll. rate}  & 0.182 & 0.102  & 0.084  & 0.064 &   0.030 & 0.009\\
          {\color{lightgray}\textbf{t. coll. rate}} &  {\color{lightgray}0.000} &  {\color{lightgray}0.000} &  {\color{lightgray}0.000} &  {\color{lightgray}0.000} &  {\color{lightgray}0.000} &  {\color{lightgray}0.000} \\
         \textbf{avg. nav. dur. [s]}   & 26.71 & 26.72 & 26.71 & 26.71 & 26.69  & 26.68\\
         \textbf{pl. fail rate} & 0.029 & 0.030 & 0.026 & 0.022 & 0.019  & 0.014\\
         \textbf{avg. pl. dur. [ms]}  & 49.72 & 50.63 & 50.55  & 46.98  &  43.17 & 37.76\\
         \hline
    \end{tabular}}
    \vspace{-0.25in}
\end{table}

We experiment with different levels of repulsive interactivity of dynamic obstacles and we compare navigation metrics when dynamic obstacles use different repulsion strengths $\hat{f}$ in single-robot experiments.
During these experiments, we set $\rho_i = 0$, and $\#^D = 50$.
The results are summarized in Table~\ref{Table:RepulsionStrength}.\looseness=-1

In general, as the repulsive interactivity increases, the collision and deadlock rates decrease, and the success rate increases.
In addition, the average planning duration decreases as the repulsive interactivity increases, because the problem gets easier for the robot if dynamic obstacles take some responsibility for collision avoidance even with a simple repulsion rule.
In the first experiment, repulsion strength is set to $-0.5$ ({\color{cyan}cyan} value), causing dynamic obstacles to get attracted to the robot, i.e., they move towards the robot.
Even in that case, the robot can achieve a high success rate, i.e., 0.818, as it models dynamic obstacle interactivity.\looseness=-1

In the remaining experiments, we sample $\hat{f}$ in $[0.2, 0.5]$, unless explicitly stated otherwise.

\subsubsection{Dynamic Obstacle Sensing Uncertainty}\label{Section:DynamicObstacleSensingUncertainty}

\begin{table}[t]
    \centering
    \caption{Effects of dynamic obstacle sensing uncertainty}
    \vspace{-0.05in}
    \label{Table:DynamicObstacleSensingUncertainty}
    \resizebox{\columnwidth}{!}{\begin{tabular}{|c|c|c|c|c|c|c|}
         \hline $\Sigma_i [\times I_{2d}]$ & $0.0$ & $0.1$ & $0.1$ & $0.2$ & $0.2$ & $0.5$  \\
         $\sigma$ & $0.0$ & $0.0$ & $0.1$ & $0.1$ & $0.2$ & $0.5$  \\
         \hline 
         \textbf{succ. rate} & 0.905 & 0.879 & 0.832 & 0.797 & {\color{cyan}0.661} & 0.410\\ 
         \textbf{coll. rate} &  0.094 & 0.119 &  0.168 & 0.203 & 0.339 &  0.590\\
         \textbf{deadl. rate} &  0.001& 0.003 & 0.002 & 0.001&0.004  & 0.009 \\
         {\color{lightgray}\textbf{s. coll. rate}} & {\color{lightgray}0.000} & {\color{lightgray}0.000} & {\color{lightgray}0.000} & {\color{lightgray}0.000} & {\color{lightgray}0.000}  &  {\color{lightgray}0.000}\\
         \textbf{d. coll. rate}& 0.094 & 0.119 & 0.168 & 0.203& 0.339 &  0.590\\
          {\color{lightgray}\textbf{t. coll. rate}} &  {\color{lightgray}0.000} &  {\color{lightgray}0.000} &  {\color{lightgray}0.000} &  {\color{lightgray}0.000} &  {\color{lightgray}0.000} &  {\color{lightgray}0.000} \\
         \textbf{avg. nav. dur. [s]} & 26.71 & 26.70 & 26.70 &26.71 &26.69  &  26.72\\
         \textbf{pl. fail rate}& 0.026 & 0.030 & 0.028 & 0.029& 0.031 & 0.037 \\
         \textbf{avg. pl. dur. [ms]}&  59.76& 55.53 & 54.54 & 54.83& 54.46 & 56.15 \\
         \hline
    \end{tabular}}
    \vspace{-0.2in}
\end{table}

The dynamic obstacle sensing uncertainty is modeled by i) applying a zero mean Gaussian with covariance $\Sigma_i$ to the sensed positions and velocities of dynamic obstacles and ii) randomly inflating dynamic obstacle shapes by a zero mean Gaussian with standard deviation $\sigma_i$.
These create three inaccuracies reflected to our planner: i) dynamic obstacle shapes provided to our planner become wrong, ii) prediction inaccuracy increases, and iii) current positions $\vp^{dyn}_{i,j}$ of dynamic obstacles provided to the planner become wrong.
Our planner models uncertainty across behavior models by using realization probabilities but does not explicitly account for the current position or shape uncertainty.\looseness=-1

To evaluate the performance of DREAM under different levels of dynamic obstacle sensing uncertainty, we control $\Sigma_i$ and $\sigma_i$ and report the metrics in single-robot experiments.
During these experiments, we set $\rho = 0$ to evaluate the effects of dynamic obstacles only. 
We set $\#^D = 50$.\looseness=-1

The results are given in Table~\ref{Table:DynamicObstacleSensingUncertainty}.
$\Sigma_i$ is set to a constant multiple of identity matrix $I_{2d}$ of size $2d \times 2d$ in each experiments.
Expectedly, as the uncertainty increases, the success rate decreases.
An increase in collision and deadlock rates is also seen.
Similarly, the planning failure rate tends to increase as well.\looseness=-1

\begin{table}[t]
    \centering
    \caption{Effects of dynamic obstacle inflation under dynamic obstacle sensing uncertainty of $\Sigma_i = 0.2I_{2d}, \sigma_i = 0.2$.}
    \vspace{-0.05in}
    \label{Table:InflationUnderDynamicObstacleSensingUncertainty}
    \resizebox{\columnwidth}{!}{\begin{tabular}{|c|c|c|c|c|c|c|}
         \hline inflation $[m]$ & $0.2$ & $0.5$ & $1.0$ & $1.5$ & $2.0$ & $4.0$  \\
         \hline 
         \textbf{succ. rate} & {\color{cyan}0.779} & {\color{cyan}0.818} & 0.747 & 0.617 & 0.417 & {\color{red}0.057}\\ 
         \textbf{coll. rate} & 0.220 & 0.182 & 0.250 & 0.382 & 0.581 &  0.943\\
         \textbf{deadl. rate}& 0.003 & 0.001 &  0.005& 0.007 & 0.003 &  0.000\\
          {\color{lightgray}\textbf{s. coll. rate}} &  {\color{lightgray}0.000} &  {\color{lightgray}0.000} &  {\color{lightgray}0.000} &  {\color{lightgray}0.000}  &  {\color{lightgray}0.000} &   {\color{lightgray}0.000}\\
         \textbf{d. coll. rate} & 0.220 & 0.182 & 0.250 & 0.382 & 0.581 &  0.943\\
          {\color{lightgray}\textbf{t. coll. rate}} &  {\color{lightgray}0.000} &  {\color{lightgray}0.000} &  {\color{lightgray}0.000} &  {\color{lightgray}0.000} &  {\color{lightgray}0.000} &  {\color{lightgray}0.000} \\
         \textbf{avg. nav. dur. [s]} & 26.70 & 26.72 & 26.93 & 27.16 & 27.79 & 26.81 \\
         \textbf{pl. fail rate} & 0.032 & 0.035 & 0.047 & 0.058 &0.067 &  0.021\\
         \textbf{avg. pl. dur. [ms]} & 57.80 &  63.014& 70.92 & 74.63 & 72.66 & 23.13 \\
         \hline
    \end{tabular}}
    \vspace{-0.25in}
\end{table}

One common approach to tackling unmodeled uncertainty for obstacle avoidance is artificially inflating the shapes of obstacles.
To show the effectiveness of this approach, we set $\Sigma_i = 0.2I_{2d}$ and $\sigma_i = 0.2$, and inflate the shapes of obstacles with different amounts during planning.
The results of these experiments are given in Table~\ref{Table:InflationUnderDynamicObstacleSensingUncertainty}.
Inflation clearly helps when done in reasonable amounts.
When inflation is set to $\SI{0.2}{m}$, success rate increases from $0.661$ as reported in Table~\ref{Table:DynamicObstacleSensingUncertainty} ({\color{cyan}cyan} value) to $0.779$.
It further increases to $0.818$ when the inflation amount is set to $\SI{0.5}{m}$, which can be seen in {\color{cyan}cyan} values in Table~\ref{Table:InflationUnderDynamicObstacleSensingUncertainty}.
However, as the inflation amount increases, metrics start to degrade as the planner becomes overly conservative.
Success rate decreases down to $0.057$ when the inflation amount is set to $\SI{4.0}{m}$ as it can be seen in the {\color{red}red} value in Table~\ref{Table:InflationUnderDynamicObstacleSensingUncertainty}.\looseness=-1
 

\subsubsection{Static Obstacle Sensing Uncertainty}\label{Section:StatiObstacleSensingUncertainty}

\begin{table}
    \centering
    \caption{Effects of static obstacle sensing uncertainty}
    \vspace{-0.05in}
    \label{Table:StaticObstacleSensingUncertainty}
    
    \resizebox{\linewidth}{!}{%
    \begin{tabular}{|c|c|c|c|c|c|c|}
         \hline imperfections & $L(0.2)$ & $L(0.2)^2$ & $L(0.3)^2I$ &$L(0.3)^2ID$& $L(0.3)^2IDL(0.2)$ & $L(0.3)^2IDL(0.5)$   \\
         \hline 
         \textbf{succ. rate} & 0.994 & 0.992 & 0.986 & 0.956 & 0.971 & 0.949 \\
         \textbf{coll. rate} & {\color{cyan}0.001}  & {\color{cyan}0.003}&{\color{cyan}0.005} &  {\color{red}0.037}&{\color{orange}0.020} & {\color{magenta}0.039}\\
         \textbf{deadl. rate}&{\color{cyan}0.005} &{\color{cyan}0.005} &{\color{cyan}0.009} & 0.007& 0.009 & 0.015\\
         \textbf{s. coll. rate}&0.001 &0.003 & 0.005& 0.037&0.020 &0.039 \\
         {\color{lightgray}\textbf{d. coll. rate}}&{\color{lightgray}0.000} & {\color{lightgray}0.000}& {\color{lightgray}0.000}&{\color{lightgray}0.000} & {\color{lightgray}0.000}&{\color{lightgray}0.000} \\
          {\color{lightgray}\textbf{t. coll. rate}} &  {\color{lightgray}0.000} &  {\color{lightgray}0.000} &  {\color{lightgray}0.000} &  {\color{lightgray}0.000} &  {\color{lightgray}0.000} &  {\color{lightgray}0.000} \\
         \textbf{avg. nav. dur. [s]}&27.49 & 27.53& 27.73&27.49 & 27.55& 28.18\\
         \textbf{pl. fail rate}&0.029 &0.041 & 0.053& 0.038 & 0.047 & 0.063\\
         \textbf{avg. pl. dur. [ms]}& 36.16& 41.76&53.39 &40.58 & 46.41 &67.65 \\
         \hline
    \end{tabular}}
    \vspace{-0.15in}
\end{table}

As we describe in Sec.~\ref{Section:Mocking}, we use three operations to model sensing imperfections of static obstacles: i) increaseUncertainty, ii) leakObstacles($p_{leak}$), and iii) deleteObstacles.
We evaluate the effects of static obstacle sensing uncertainty by applying a sequence of these operations to the octree representation of static obstacles, and provide the resulting octree to our planner in single-robot experiments.
Application of leakObstacles increases the density of the map, but the resulting map contains the original obstacles. 
increaseUncertainty does not change the density of the map but increases the uncertainty associated with the obstacles.
deleteObstacles decreases the density of the map, but the resulting map may not contain the original obstacles, leading to unsafe behavior. In these experiments, we set $\rho = 0.1$, and $\#^D = 0$.\looseness=-1

The results are given in Table~\ref{Table:StaticObstacleSensingUncertainty}.
In the imperfections row of the table, we use $L(p_{leak})$ as an abbreviation for leakObstacles($p_{leak}$), and $L(p_{leak})^n$ as an abbreviation for repeated application of leakObstacles to the octree.
We use $I$ for increaseIncertainty, and $D$ for deleteObstacles.
Leaking obstacles or increasing the uncertainty associated with them does not increase the collision and deadlock rates significantly as seen in the {\color{cyan}cyan} values.
The planning duration and failure rates increase as the number of obstacles increases.
Deleting obstacles causes a sudden jump of the collision rate as it can be seen between {\color{cyan}cyan} and {\color{red}red} values because the planner does not know about existing obstacles.
Leaking obstacles back with $p_{leak} = 0.2$ after deleting them decreases the collision rate back as it can be seen between {\color{red}red} and {\color{orange}orange} values.
However, leaking obstacles with high probability increases the collision rate back as it can be seen between {\color{orange}orange} and {\color{magenta}magenta} values.
This happens because the environments get significantly more complicated because the number of obstacles increases.
Environment complexity is also reflected in the increased deadlock rate.\looseness=-1

\subsubsection{Imperfect Communication Medium}

\begin{table}
    \centering
    \caption{Effects of imperfect communication medium}
    \vspace{-0.05in}
    \label{Table:ImperfectCommunicationMedium}
    
    \resizebox{\linewidth}{!}{%
    \begin{tabular}{|c|c|c|c|c|c|c|c|c|}
         \hline $\delta[s]$ & $0.0$ & $0.0$ & $0.25$ &$0.25$& $1.0$ & $1.0$ & $5.0$ & $5.0$   \\
         $\kappa$ & $0.0$ & $0.0$ & $0.1$ &$0.1$& $0.25$ & $0.25$ & $0.75$ & $0.75$   \\
         $\#^D$ & $0$ & $50$ & $0$ &$50$& $0$ & $50$ & $0$ & $50$   \\
         \hline 
         \textbf{succ. rate} & 1.000 & 0.952 & 1.000 & 0.956 & 0.995 & 0.923 & 0.490 & 0.702\\
         \textbf{coll. rate}& 0.000 & 0.048 & 0.000 & 0.044 & 0.000 & 0.076 & 0.000 & 0.289\\
         \textbf{deadl. rate} & 0.000 & 0.000 & 0.000 & 0.000 &0.005  &0.001 &{\color{magenta}0.510} & 0.013\\
         {\color{lightgray}\textbf{s. coll. rate}}& {\color{lightgray}0.000} & {\color{lightgray}0.000} & {\color{lightgray}0.000} & {\color{lightgray}0.000} & {\color{lightgray}0.000} & {\color{lightgray}0.000} & {\color{lightgray}0.000} & {\color{lightgray}0.000} \\
         \textbf{d. coll. rate}& 0.000 & 0.045 & 0.000 & 0.043 & 0.000 & 0.072& 0.000 & 0.250\\
          \textbf{t. coll. rate}& {\color{teal}0.000} & {\color{cyan}0.004} & {\color{teal}0.000} & {\color{cyan}0.003} & {\color{teal}0.000} & {\color{cyan}0.006} & {\color{teal}0.000} & 0.067\\
         \textbf{avg. nav. dur. [s]}& 60.45 & 45.35 & 65.97 & 50.18 & 82.49 & 59.21 & {\color{magenta}128.97} & 79.06\\
         \textbf{pl. fail rate}& $<$0.001 & 0.005 & $<$0.001 & 0.006 & $<$0.001 & 0.008 & $<$0.001 & 0.037\\
         \textbf{avg. pl. dur. [ms]}& 168.90 & 164.06 & 249.36 & {\color{red}208.89} & {\color{orange}382.10} & 297.25 & {\color{orange}1042.66} & {\color{orange}768.56}\\
         \hline
    \end{tabular}}
    \vspace{-0.2in}
\end{table}

We evaluate the performance of DREAM in multi-robot experiments with or without dynamic obstacles under different levels of communication imperfections. 
In these experiments, we set static object density $\rho = 0$, and teammate safety enforcement duration $T^{team}_i = \infty$, i.e., we enforce DSHTs for the full plans.
The obstacles are not interactive, i.e., $\hat{f} = 0$.
The number of robots is $\#^R = 16$.
We control the average delay $\delta$, message drop probability $\kappa$, and number of dynamic obstacles $\#^D$.
The results are given in Table~\ref{Table:ImperfectCommunicationMedium}.\looseness=-1

DREAM results in no collisions between teammates when there are no obstacles regardless of the imperfection amount of the communication medium as seen in {{\color{teal}teal} values.
However, when dynamic obstacles are present, teammates may collide with each other as it can be seen in {\color{cyan}cyan} values, since we prioritize dynamic obstacle avoidance to teammate avoidance.
As communication imperfections increase, DREAM becomes more and more conservative for teammate safety.
When there are no dynamic obstacles, this causes conservative behavior, increasing average navigation duration as well as the deadlock rate as it can be seen in {\color{magenta}magenta} values.
As cardinalities of $\tilde{\mH}^{active}_i$ increase in these scenarios, the number of constraints increase substantially in trajectory optimization, slowing down the algorithm such that it is not real-time anymore as it can be seen in {\color{orange}orange} values.
When obstacles are present, deadlocks occur less frequently as the primary objective becomes obstacle avoidance, causing robots to disperse in the environment, and hence, avoid teammates easily.\looseness=-1



\subsubsection{Teammate Safety Enforcement Duration}\label{Section:TeammateSafetyEnforcementDurationEvaluation}

\begin{table}[t]
    \centering
    \caption{Effects of teammate safety duration}
    \vspace{-0.05in}
    \label{Table:SafetyEnforcementDuration}
    \resizebox{\columnwidth}{!}{\begin{tabular}{|c|c|c|c|c|c|c|}
         \hline $\boldsymbol{T_{team}}$ & $0.0$ & $0.5$ & $1.0$ & $1.5$ & $2.0$ & $2.5$\\
        \hline 
         \textbf{succ. rate} &0.300&0.748& {\color{red}0.964} &0.963&0.961&0.933\\ 
         \textbf{coll. rate} &0.700&0.252& 0.036&0.038&0.039&0.068\\ 
         \textbf{deadl. rate} &0.000&0.001& 0.000&0.000&0.000&0.000\\ 
         \textbf{s. coll. rate} &{\color{lightgray}0.000}&{\color{lightgray}0.000}&{\color{lightgray}0.000}&{\color{lightgray}0.000}&{\color{lightgray}0.000}&{\color{lightgray}0.000}\\ 
         \textbf{d. coll. rate} &0.068&0.098&0.036&0.038&0.036&0.058\\ 
         \textbf{t. coll. rate} &0.677&0.183&0.001&0.000&0.005&0.010\\ 
         \textbf{avg. nav. dur. [s]} &{\color{orange}27.57}&{\color{orange}30.55}&{\color{orange}37.60}&{\color{orange}44.71}&{\color{orange}51.01}&{\color{orange}49.38}\\ 
         \textbf{pl. fail rate} &0.011&0.025&0.008&0.005&0.005&0.006\\ 
         \textbf{avg. pl. dur. [ms]} &46.58&77.87&{\color{magenta}115.30}&153.47&195.93&205.35\\ 
         \hline
    \end{tabular}}
    \vspace{-0.2in}
\end{table}

Enforcing teammate safety for the full trajectories causes planning to be not real-time and results in a high rate of deadlocks when communication imperfections are high. We investigate the effects of relaxing teammate constraints by controlling safety enforcement durations $T^{team}_i$.
In these experiments, we set $\rho = 0$, $\#^D=50$, $\hat{f}=0$, $\delta = \SI{0.25}{s}$, $\kappa = 0.1$, and $\#^R=16$.
Hence, these experiments can be compared with forth column in Table~\ref{Table:ImperfectCommunicationMedium}.
The results are given in Table~\ref{Table:SafetyEnforcementDuration}.\looseness=-1

The average navigation duration of the robots tends to increase as $T_i^{team}$ increases, because the planner becomes more and more conservative, as it can be seen in {\color{orange}orange} values.
Setting $T_{team}^i=1.0$ results in the best success rate, $0.964$, as seen in the {\color{red}red} value.
The average planning duration decreases by $\approx 50\%$ when $T_{team}^i$ is set to $1.0$ compared to setting it $\infty$, greatly increasing the cases our algorithm is real-time as it can be seen in the {\color{magenta}magenta} value compared to the {\color{red}red} value in Table~\ref{Table:ImperfectCommunicationMedium}.
The success rate of $0.964$ is higher than that of forth column in Table~\ref{Table:ImperfectCommunicationMedium}, which is $0.956$.
In addition, the average navigation duration decreases to $37.60$ compared to $50.18$ in Table~\ref{Table:ImperfectCommunicationMedium}.
Relaxing safety with respect to teammates not only decreases planning and navigation durations but improves the success rate of the algorithm.
In the remaining experiments, we set $T_i^{team} = 1.0$.\looseness=-1

\subsection{Baseline Comparisons in Simulations}\label{Section:BaselineComparison}

We summarize the collision avoidance and deadlock-free navigation comparisons between state-of-the-art decentralized navigation decision-making algorithms in Fig.~\ref{Figure:BaselineComparison}.
The comparison graph suggests that SBC~\cite{wang2017safety}, RLSS~\cite{senbaslar2023rlss}, RMADER~\cite{kondo2023rmader}, and TASC~\cite{toumieh2022tasc,toumieh2023tasc} are the best performing algorithms.
TASC and RMADER are shown to have a similar collision avoidance performance in~\cite{toumieh2023tasc}.
Therefore, we compare our algorithm DREAM to SBC, RLSS, and RMADER, and establish that it results in a better performance than them.
SBC is a short horizon algorithm: it computes the next safe acceleration to execute to drive the robot to its goal position each iteration.
RMADER, RLSS, and DREAM are medium horizon algorithms: they compute long trajectories, which they execute for a shorter duration in a receding horizon fashion.\looseness=-1

All of our baselines drive robots to their given goal positions.
We add support for desired trajectories by running our goal selection algorithm in every planning iteration and providing the selected goal position as intermediate goal positions.
We do not introduce static obstacle sensing uncertainty during baseline comparisons as none of the baselines explicitly account for it.
\looseness=-1

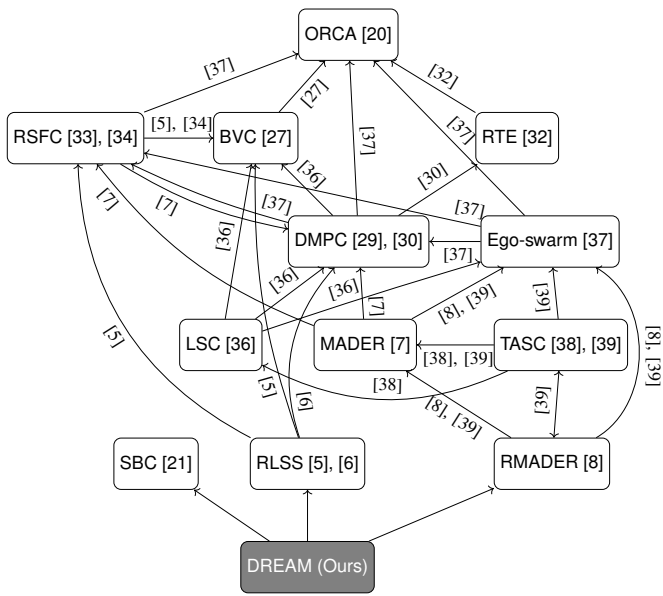
\begin{figure}[t]
\centering
\resizebox{\linewidth}{!}{%
\begin{tikzpicture}

    \node[algorithm] (ORCA) {\shortstack{ORCA~\cite{van2011reciprocal}}};
    \node[algorithm, below left=1.0cm and 3.0cm of ORCA] (RSFC) {\shortstack{RSFC~\cite{park2021rsfc,park2020rsfc}}};
    \node[algorithm, below left=1.0cm and 0.0cm of ORCA] (BVC) {\shortstack{BVC~\cite{zhou2017bvc}}};
    \node[algorithm, below right=1.0cm and 1.5cm of ORCA] (RTE) {\shortstack{RTE~\cite{senbaslar2018rte}}};
    \node[algorithm, below right=1.0cm and -0.2cm of BVC] (DMPC) {\shortstack{DMPC~\cite{luis2019dmpc,luis2020dmpc}}};
    \node[algorithm, right=1.0cm of DMPC] (Ego-swarm) {\shortstack{Ego-swarm~\cite{zhou2021ego}}};
    \node[algorithm, below left=1.0cm and 0.5cm of DMPC] (LSC) {\shortstack{LSC}~\cite{park2022lsc}};
    \node[algorithm, right=1.0cm of LSC] (MADER) {\shortstack{MADER~\cite{tordesillas2020mader}}};
    \node[algorithm, below left=1.3cm and -1.0cm of MADER] (RLSS) {\shortstack{RLSS~\cite{senbaslar2021rlss,senbaslar2023rlss}}};
    \node[algorithm, below right=1.3cm and 1.5cm of MADER] (RMADER) {\shortstack{RMADER~\cite{kondo2023rmader}}};
    \node[algorithm, left=1.0cm of RLSS] (SBC) {\shortstack{SBC~\cite{wang2017safety}}};
    \node[algorithm, right=1.5cm of MADER] (TASC) {\shortstack{TASC~\cite{toumieh2022tasc,toumieh2023tasc}}};
    \node[ours, below=1.0cm of RLSS] (DREAM) {\shortstack{DREAM (Ours)}};

    \path[->] (RTE) edge node[sloped,above] {\cite{senbaslar2018rte}} (ORCA);
    \path[->] (BVC) edge node[sloped,below] {\cite{zhou2017bvc}} (ORCA);
    \path[->] (RSFC) edge node[sloped,above] {\cite{zhou2021ego}} (ORCA);
    \path[->] (RMADER) edge node[sloped,below] {\cite{kondo2023rmader,toumieh2023tasc}} (MADER);
    \path[->, bend right=1500] (RMADER) edge node[sloped,above] {\cite{kondo2023rmader,toumieh2023tasc}} (Ego-swarm);
    \path[->] (MADER) edge node[sloped,below] {\cite{kondo2023rmader,toumieh2023tasc}} (Ego-swarm);
    \path[->] (RSFC) edge node[sloped,above] {\cite{park2021rsfc,senbaslar2023rlss}} (BVC);
    \path[->] (DMPC) edge node[sloped,above] {\cite{luis2020dmpc}} (RTE);
    \path[->, bend left=15] (MADER) edge node[sloped,anchor=north,pos=0.9] {\cite{tordesillas2020mader}} (RSFC);
    \path[->] (MADER) edge node[sloped,anchor=south,pos=0.3] {\cite{tordesillas2020mader}} (DMPC);
    \path[->,bend right=10] (RSFC) edge node[sloped,anchor=north,pos=0.3] {\cite{tordesillas2020mader}} (DMPC);
    \path[->,bend left=30] (RLSS) edge node[sloped,anchor=south,pos=0.2] {\cite{senbaslar2021rlss}} (DMPC);
    \path[->,bend left=30] (RLSS) edge node[sloped,below] {\cite{senbaslar2023rlss}} (RSFC);
    \path[->, bend left=10] (RLSS) edge node[sloped,anchor=north,pos=0.2] {\cite{senbaslar2023rlss}} (BVC);
    \path[->] (Ego-swarm) edge node[sloped,below] {\cite{zhou2021ego}} (DMPC);
    \path[->] (Ego-swarm) edge node[sloped,anchor=south,pos=0.05] {\cite{zhou2021ego}} (RSFC);
    \path[->] (Ego-swarm) edge node[sloped,above] {\cite{zhou2021ego}} (ORCA);
    \path[->,bend left=5] (DMPC) edge node[sloped,anchor=south,pos=0.1] {\cite{zhou2021ego}} (RSFC);
    \path[->] (DMPC) edge node[sloped,above] {\cite{zhou2021ego}} (ORCA);
    \path[->] (LSC) edge node[sloped,above] {\cite{park2022lsc}} (DMPC);
    \path[->] (LSC) edge node[sloped,above] {\cite{park2022lsc}} (BVC);
    \path[->] (DMPC) edge node[sloped,anchor=south,pos=0.65] {\cite{park2022lsc}} (BVC);
    \path[->,bend left=1] (LSC) edge node[sloped,anchor=south,pos=0.37] {\cite{park2022lsc}} (Ego-swarm);
    \path[->] (TASC) edge node[sloped,below] {\cite{toumieh2022tasc,toumieh2023tasc}} (MADER);
    \path[->] (TASC) edge node[sloped,below] {\cite{toumieh2023tasc}} (Ego-swarm);
    \path[<->] (TASC) edge node[sloped,above] {\cite{toumieh2023tasc}} (RMADER);
    \path[->, bend left=26] (TASC) edge node[sloped,above] {\cite{toumieh2022tasc}} (LSC);
    \path[->] (DREAM) edge node[sloped, above] {} (SBC);
    \path[->] (DREAM) edge node[sloped, above] {} (RLSS);
    \path[->] (DREAM) edge node[sloped, above] {} (RMADER);

\end{tikzpicture}
}
\caption{\textbf{Collision avoidance and deadlock-free navigation comparisons of listed state-of-the-art decentralized multi-robot navigation decision-making algorithms.} A directed edge means that the source algorithm is shown to be better than the destination algorithm in experiments with multiple robots in some environments. TASC and RMADER are shown to have a similar collision avoidance performance in~\cite{toumieh2023tasc}. We compare our algorithm DREAM to SBC~\cite{wang2017safety}, RLSS~\cite{senbaslar2023rlss,senbaslar2021rlss}, and RMADER~\cite{kondo2023rmader} and establish that it results in a better performance than the listed state-of-the-art baselines.}
\label{Figure:BaselineComparison}
\vspace{-0.25in}
\end{figure}

\subsubsection{Using RMADER in Comparisons}

RMADER is a real-time decentralized trajectory planning algorithm for static and dynamic obstacle and multi-robot collision avoidance.
It explicitly accounts for asynchronous planning between robots using communication and accounts for communication delays with known bounds.
It models dynamic obstacle movements using predicted trajectories.
It does not explicitly model robot--dynamic obstacle interactions.\looseness=-1

Dynamic obstacles move according to movement and interaction models in our formulation.
We convert movement models to predicted trajectories by propagating dynamic obstacles' positions according to the desired velocities from the movement models.
Since the interactive behavior of dynamic obstacles depend on the trajectory that the robot is going to follow, which is computed by the planner, their effect on the future trajectories is unknown prior to planning.
Since RMADER does not model interactions, we do not use interactive obstacles during baseline comparisons.
RMADER supports uncertainty associated with predicted trajectories using axis aligned boxes, such that it requires the samples of the real trajectory dynamic obstacle is going to follow are contained in known bounding boxes around the samples of the predicted trajectory.
However, using uncertainty boxes resulted in a lower success rate for RMADER for different choices of uncertainty box sizes in all of our experiments, because of which we run RMADER with no uncertainty box.\looseness=-1



We use the code of RMADER published by its authors~\cite{githubGitHubMitaclrmader} and integrate it to our simulation system.
We set the desired maximum planning time of RMADER to $\SI{500}{ms}$ in each planning iteration, which it exceeds if needed, even when the simulated replanning period is smaller.
We freeze the environment until RMADER is done replanning to cancel the effects of exceeding the replanning period.
Our prediction system generates three behavior models for each dynamic obstacle.
RMADER does not support multiple behavior hypotheses explicitly.
Therefore, it has the choice of avoiding the most likely or all behavior models of dynamic obstacles, modeling each behavior model as a separate obstacle.
We provide only the most likely behavior models to RMADER during evaluation because avoiding all behavior models resulted in highly conservative behavior.\looseness=-1

\subsubsection{Using RLSS in Comparisons}

RLSS is a real-time decentralized trajectory planning algorithm for static obstacles and multi-robot collision avoidance.
It does not account for asynchronous planning between teammates.
It does not utilize communication and depends on position/shape sensing only.
When using RLSS in comparisons, we model dynamic obstacles as robots and provide their current positions and shapes to the planning algorithm.
We use our own implementation of RLSS during our comparisons.\looseness=-1

\subsubsection{Using SBC in Comparisons}

SBC is a safety barrier certificates-based safe controller for static and dynamic obstacle and multi-robot collision avoidance.
SBC runs at a high frequency ($>\SI{50}{Hz}$), therefore it does not need to account for asynchronous planning.
It does not utilize communication and depends on position, velocity, and shape sensing.
When simulating SBC, we do not use the predicted behavior models and feed the current positions and velocities of the dynamic obstacles to the algorithm, which assumes that the dynamic obstacles execute the same velocity for the short future ($<\SI{20}{ms}$).\looseness=-1

SBC models shapes of obstacles and robots as hyperspheres.
We provide shapes of objects to SBC as the smallest hyperspheres containing the axis-aligned boxes.
We sample robot sizes in the interval $[\SI{0.1}{m}, \SI{0.2}{m}]$ instead of $[\SI{0.2}{m}, \SI{0.3}{m}]$ in SBC runs so that the robot can fit between static obstacles easily when its collision shape is modeled using bounding hyperspheres. (Since the resolution of octrees we generate is $\SI{0.5}{m}$, the smallest possible gap between static obstacles is $\SI{0.5}{m}$.)
We use our own implementation of SBC.\looseness=-1

\subsubsection{Single Robot Experiments}

\begin{table*}
    \centering
    \caption{Baseline comparisons in single robot scenarios}
    \vspace{-0.05in}
    \label{Table:SingleRobotComparisons}
    \resizebox{0.8\linewidth}{!}{\begin{tabular}{|c|c|c|c|c|c|c|c|c|c|c|c|}
        \hline $\rho$ & $\#^D$ & Alg. & \textbf{succ. rate} & \textbf{coll. rate} & \textbf{deadl. rate} & \textbf{s. coll. rate} & \textbf{d. coll. rate} & \textbf{t. coll. rate} & \textbf{avg. nav. dur. [s]} & \textbf{pl. fail rate} & \textbf{avg. pl. dur. [ms]}\\
        \hline \multirow{4}{*}{$0.0$} & \multirow{4}{*}{$15$} & SBC & {\color{red}0.992} & 0.008 & 0.000 & {\color{lightgray}0.000} & 0.008 & {\color{lightgray}0.000} & 33.96 & 0.001 & 0.26\\
        &&RMADER & 0.960 & 0.036 & 0.008 & {\color{lightgray}0.000} & 0.036  &  {\color{lightgray}0.000} & 25.31 & 0.100 & 19.24\\
        &&RLSS & 0.776 & 0.224 & 0.012 & {\color{lightgray}0.000} & 0.224 & {\color{lightgray}0.000} & 26.80 & 0.052 & 6.69\\
        &&DREAM & 0.984 & 0.016 & {\color{Dandelion}0.000} & {\color{lightgray}0.000} & 0.016 & {\color{lightgray}0.000} & 26.67 & 0.008 & 13.93\\
        \hline  \multirow{4}{*}{$0.1$} & \multirow{4}{*}{$15$} & 
        SBC & {\color{red}0.352} & 0.648 & {\color{ForestGreen}0.236} & {\color{cyan}0.648} & {\color{orange}0.004} & {\color{lightgray}0.000} & 34.46 & 0.006 & 0.73\\
        &&RMADER & 0.756 &  0.096 &  {\color{ForestGreen}0.212} & {\color{magenta}0.000} & 0.096 & {\color{lightgray}0.000} & 38.17 & {\color{brown}0.313} & 341.14\\
        &&RLSS & 0.816 & 0.184 & 0.008  & {\color{magenta}0.000} &  0.184 & {\color{lightgray}0.000} & 27.26 & 0.034 & 28.37\\
        &&DREAM & 0.984 & 0.016 & {\color{Dandelion}0.000} & {\color{magenta}0.000} &0.016 & {\color{lightgray}0.000} & 27.47 & 0.025 &  21.09\\
        \hline \multirow{4}{*}{$0.2$} & \multirow{4}{*}{$15$} & SBC & 0.072 & 0.928 & {\color{ForestGreen}0.564} & {\color{cyan}0.928} & {\color{orange}0.044}  & {\color{lightgray}0.000} & 35.25 & 0.011 & 1.47\\
        &&RMADER&  0.456 & 0.204 & {\color{ForestGreen}0.516} & {\color{magenta}0.000} & 0.204 & {\color{lightgray}0.000} & 49.74 & {\color{brown}0.470} & 769.83\\
        &&RLSS & 0.780 & 0.204 & 0.036 & {\color{magenta}0.000} & 0.204 &  {\color{lightgray}0.000}  &  28.36 & 0.088 & 48.07\\
        &&DREAM & 0.988 & 0.012  & {\color{Dandelion}0.000} & {\color{magenta}0.000} & 0.012 & {\color{lightgray}0.000}  & 28.31 & 0.043 & 25.56\\
        \hline \multirow{4}{*}{$0.2$} & \multirow{4}{*}{$25$} & SBC & 0.080 & 0.920 & {\color{ForestGreen}0.560}  & {\color{cyan}0.920}  & {\color{orange}0.068} & {\color{lightgray}0.000} & 37.09 & 0.019 & 1.47\\
        &&RMADER & 0.400 & 0.264 & {\color{ForestGreen}0.568} & {\color{magenta}0.000} & 0.264 &  {\color{lightgray}0.000} &  50.77 & {\color{brown}0.535} & 751.45\\
        &&RLSS & 0.752 & 0.228 & 0.044 & {\color{magenta}0.000} & 0.228 & {\color{lightgray}0.000}  & 28.66 & 0.095 & 35.59 \\
        &&DREAM & 0.956 & 0.040  & {\color{Dandelion}0.008} & {\color{magenta}0.000} & 0.040 & {\color{lightgray}0.000} & 28.28 & 0.059 & 31.54\\
        \hline \multirow{4}{*}{$0.2$} & \multirow{4}{*}{$50$} & SBC & 0.056 & 0.944 & {\color{ForestGreen}0.556} & {\color{cyan}0.944} & {\color{orange}0.144} &  {\color{lightgray}0.000} & 42.23 &  0.032 & 1.42\\
        &&RMADER& 0.116 & 0.632 & {\color{ForestGreen}0.844} &  {\color{magenta}0.000} & 0.632 & {\color{lightgray}0.000}   & 49.79 & {\color{brown}0.755} & 890.89\\
        &&RLSS & 0.536 & 0.448 & 0.064 & {\color{magenta}0.000} & 0.448 & {\color{lightgray}0.000}& 29.33 & 0.151 & 52.93\\
        &&DREAM & 0.900 & 0.100 & {\color{Dandelion}0.008} & {\color{magenta}0.000} & 0.100 & {\color{lightgray}0.000} & 28.46 & 0.073 & 47.02 \\
        \hline \multirow{4}{*}{$0.3$} & \multirow{4}{*}{$50$} & SBC & {\color{RedViolet}0.008} & 0.992 & {\color{ForestGreen}0.788} & {\color{cyan}0.992} & {\color{orange}0.212} & {\color{lightgray}0.000}  & 48.75 & 0.041 & 1.92\\
        &&RMADER & {\color{RedViolet}0.092} & 0.536 & {\color{ForestGreen}0.904} & {\color{magenta}0.000} & 0.536 & {\color{lightgray}0.000} & 56.68 & {\color{brown}0.745} & 1217.24\\
        &&RLSS & {\color{RedViolet}0.536} & 0.436 & {\color{RoyalPurple}0.160} & {\color{magenta}0.000} & 0.436 & {\color{lightgray}0.000} &  31.28 & 0.237 & 171.81\\
        &&DREAM & {\color{RedViolet}0.884} & 0.116 & {\color{Dandelion}0.000} & {\color{magenta}0.000} & 0.116 & {\color{lightgray}0.000} & 30.25 & 0.080 & 48.10\\
        \hline
    \end{tabular}}
    \vspace{-0.2in}
\end{table*}

We compare DREAM with the baselines in environments with different static obstacle densities $\rho$ and number of dynamic obstacles $\#^D$ in single-robot experiments. The results are summarized in Table~\ref{Table:SingleRobotComparisons}.\looseness=-1


SBC's performance decreases sharply when static obstacles are introduced to the environment as can be seen in {\color{red}red} values in \textbf{succ. rate} column.
SBC mainly suffers from collisions with static obstacles compared to dynamic obstacles ({\color{cyan}cyan} vs {{\color{orange}orange} values).
All medium horizon algorithms can avoid static obstacles perfectly ({\color{magenta}magenta} values).\looseness=-1

SBC and RMADER result in a high deadlock rate as it can be seen in {\color{ForestGreen}green} values.
The reason SBC results in a high ratio of deadlocks is its short horizon decision-making setup.
Since it does not consider the longer horizon effects of the generated actions, as the environment density increases, it tends to deadlock.
RMADER results in a high ratio of planning failures as the density increases, as can be seen in {\color{brown}brown} values in the \textbf{pl. fail rate} column, which causes it to consume the generated plans and not be able to generate new ones, which results in deadlocks.
RLSS has better deadlock avoidance compared to SBC and RMADER, but it too results in deadlocks as the environment density increases, as can be seen in {\color{RoyalPurple}purple} value in \textbf{deadl. rate} column.
DREAM results in little to no deadlocks ({\color{Dandelion}yellow} values).\looseness=-1

In terms of the success rate, DREAM considerably improves the state-of-the-art, resulting in $\approx$110x improvement over SBC, $\approx$9.6x improvement over RMADER, and $\approx$1.65x improvement over RLSS in the hardest scenario ({\color{RedViolet}violet} values in \textbf{succ. rate} column}).\looseness=-1


\subsubsection{Multi Robot Experiments}

\begin{table*}
    \centering
    \caption{Baseline comparisons in multi robot scenarios with $\delta=\SI{1}{s}$ average delay and $\kappa=0.25$ message drop probability for our algorithm and $\delta=\SI{1}{s}$ maximum delay and no message drops for RMADER. SBC and RLSS do not depend on communication.}
    \vspace{-0.05in}
    \label{Table:MultiRobotComparisons}
    \resizebox{0.8\linewidth}{!}{\begin{tabular}{|c|c|c|c|c|c|c|c|c|c|c|c|c|}
        \hline  $\rho$ & $\#^D$ & $\#^R$ & Alg. & \textbf{succ. rate} & \textbf{coll. rate} & \textbf{deadl. rate} & \textbf{s. coll. rate} & \textbf{d. coll. rate} & \textbf{t. coll. rate} & \textbf{avg. nav. dur. [s]} & \textbf{pl. fail rate} & \textbf{avg. pl. dur. [ms]}\\
        \hline \multirow{4}{*}{$0.2$} & \multirow{4}{*}{$25$} & \multirow{4}{*}{$16$} & SBC  & {\color{red}0.059} & 0.941 & 0.483 & 0.941 & 0.126 & 0.004 & 38.82  & 0.021   & 1.58\\
        &&&RMADER& 0.116 & 0.551 & 0.870 & {\color{orange}0.000} & 0.551 & 0.001 & 60.90 & {\color{magenta}0.932} & 813.71 \\
        &&&RLSS & 0.488 & 0.508 & 0.039 & {\color{orange}0.000} & 0.428 & {\color{brown}0.134} & 34.95 & 0.259 & 97.78 \\
        &&&DREAM & \textbf{0.960} & 0.036 & 0.005 & 0.003 & {\color{RedViolet}0.035} & 0.000 & 40.77 &  0.014 & 123.68 \\
        \hline \multirow{4}{*}{$0.2$} & \multirow{4}{*}{$25$}& \multirow{4}{*}{$32$} & 
        SBC & {\color{red}0.061} & 0.938 & 0.450 & 0.937 & 0.121 & 0.003 & 39.13 & 0.018 & 1.62 \\
        &&&RMADER & 0.112 & 0.520 & 0.868 & {\color{orange}$<$ 0.001} & 0.514 & 0.008 &  62.82 & {\color{magenta}0.923} & 912.37\\
        &&&RLSS & 0.471 & 0.521 & 0.059 & {\color{orange}0.000} & 0.373 & {\color{brown}0.225} & 35.53 & 0.354  & 90.16\\
        &&&DREAM & \textbf{0.841} & 0.097 & 0.069 & 0.006 & {\color{RedViolet}0.074} & 0.026 & 60.31  & 0.014 & 185.87\\
        \hline \multirow{4}{*}{$0.3$} & \multirow{4}{*}{$25$}& \multirow{4}{*}{$16$} & SBC & {\color{red}0.006} & 0.994 & 0.691 & 0.994 &  0.184 & 0.001 & 42.68 & 0.028 & 1.78\\
        &&&RMADER & 0.054 & 0.534 & {\color{cyan}0.935} & {\color{orange}0.000} & 0.533 & 0.003 & 68.67  & {\color{magenta}0.952} & 1457.58\\
        &&&RLSS & 0.591 & 0.396 & 0.056 & {\color{orange}0.000} & 0.332 & {\color{brown}0.093} & 38.47 & 0.324  & 300.33 \\
        &&&DREAM& \textbf{0.941} & 0.051 & 0.011 & 0.005 & {\color{RedViolet}0.044} & 0.003  & 42.53 & 0.020  & 120.33 \\
        \hline  \multirow{4}{*}{$0.3$} & \multirow{4}{*}{$25$}& \multirow{4}{*}{$32$} & SBC & {\color{red}0.006} & 0.993 & 0.669 & 0.993 & 0.191 & 0.006 & 40.40 & 0.031 & 1.79 \\
        &&&RMADER & 0.056  & 0.484 & {\color{cyan}0.934} & {\color{orange}0.000} & 0.481 & 0.006 & 64.39   & {\color{magenta}0.889} & 1467.52\\
        &&&RLSS & 0.503 & 0.486 & 0.064 & {\color{orange}0.000} & 0.343 & {\color{brown}0.209} & 39.10 & 0.440  & 282.33 \\
        &&&DREAM & \textbf{0.782} & 0.108 & 0.127 & 0.014 & {\color{RedViolet}0.081} & 0.025 & 58.04 & 0.019  & 178.71 \\
        \hline \multirow{4}{*}{$0.3$} & \multirow{4}{*}{$50$}& \multirow{4}{*}{$16$} & SBC & {\color{red}0.006} & 0.994 & 0.636 & 0.993 & 0.318 & 0.003 & 47.60 & 0.046  & 1.96\\
        &&&RMADER & 0.041 & 0.673 & {\color{cyan}0.956} & {\color{orange}0.000} & 0.673 & 0.001 &  62.97  & {\color{magenta}0.920} & 1315.97\\
        &&&RLSS & 0.402 &  0.592 & 0.073 & {\color{orange}0.000} & 0.533 & {\color{brown}0.116} &  39.15 & 0.375   & 354.34\\
        &&&DREAM & \textbf{0.864} & 0.128 & 0.012 & 0.006 & {\color{RedViolet}0.122} & 0.008 & 43.30 & 0.023   & 128.87 \\
        \hline \multirow{4}{*}{$0.3$} & \multirow{4}{*}{$50$}& \multirow{4}{*}{$32$} & SBC & {\color{red}0.006} & 0.993 & 0.646 & 0.992 & 0.348 & 0.011 &  44.82 & 0.050 & 2.01\\
        &&&RMADER & 0.022 & 0.701 & {\color{cyan}0.974} & {\color{orange}$<$0.001} & 0.699 & 0.005 & 63.12   & {\color{magenta}0.916}  &1109.09 \\
        &&&RLSS & 0.357 & 0.634 & 0.098 & {\color{orange}0.000} & 0.540 & {\color{brown}0.197} & 39.77 & 0.475 & 325.69\\
        &&&DREAM & \textbf{0.712} & 0.216 & 0.093 & 0.018 & {\color{RedViolet}0.183} &  0.046 & 59.46 & 0.028   & 189.06 \\
        \hline
    \end{tabular}}
    \vspace{-0.2in}
\end{table*}

We compare DREAM with the baselines in highly cluttered environments with different $\rho$, $\#^D$ and $\#^R$.
During these experiments, we simulate communication imperfections.
SBC and RLSS do not depend on communication and hence communication imperfections do not affect them.
RMADER accounts for message delays with known bounds.
DREAM accounts for message delays with unknown bounds as well as message drops.\looseness=-1

For DREAM, we introduce mean communication delay $\delta = {1}{s}$ and message drop probability $\kappa=0.25$.
Since RMADER does not account for message drops, we set $\kappa = 0.0$ for RMADER.
In addition, in RMADER, we bound communication delays with the mean $\delta$ by generating a random sample from the distribution and setting it to $\delta$ if it is more than $\delta$.
Therefore, DREAM runs in considerably more challenging environments during these experiments compared to RMADER.
RMADER has a \emph{delay check} phase to account for communication delays, which should run for at least the maximum communication delay. We set its duration to $\SI{1.1}{s}$.
The environments used are more challenging compared to single-robot experiments, as not only $\rho$ and $\#^D$ are high, but also multiple teammates navigate under asynchronous decision making and considerable communication imperfections.\looseness=-1

The results are summarized in Table~\ref{Table:MultiRobotComparisons}.
SBC is ineffective for navigating in environments with high clutter ({\color{red}red} values in \textbf{succ. rate} column).
Since the communication imperfections are high, RMADER results in conservative behavior, resulting in deadlocks.
Given that it already results in a considerable rate of deadlocks in single-robot scenarios, almost all robots using RMADER deadlock in the hardest scenarios ({\color{cyan}cyan} values in \textbf{deadl. rate} column).
The high planning failure rate of RMADER ({\color{magenta}magenta} values in \textbf{pl. fail rate} column) is the main cause of the deadlocks: once plans are consumed and planning fails, it keeps failing until the end of the simulation.\looseness=-1

Both RLSS and RMADER result in no collisions with static obstacles as they i) avoid static obstacles using hard constraints unlike DREAM, and ii) enforce static obstacle avoidance for the full horizon unlike SBC ({\color{orange}orange} values in \textbf{s. coll. rate} column).
DREAM results in the lowest dynamic obstacle avoidance rate ({\color{RedViolet}violet} values in \textbf{d. coll. rate} column).
RLSS results in high teammate collisions, because it is the only algorithm that is affected by asynchronous planning but does not account for it ({\color{brown}brown} values in \textbf{t. coll. rate} column).\looseness=-1

In terms of success rate, DREAM considerably improves the state-of-the-art, resulting in $\approx$156.8x improvement over SBC, $\approx$32.36x improvement over RMADER, $\approx$2.15x improvement over RLSS (\textbf{bold} values in \textbf{succ. rate} column).\looseness=-1

\subsection{Physical Robot Experiments}

\begin{figure}
    \centering
    \subfloat[A teammate navigating through six rotating not interactive obstacles.]{\includegraphics[width=0.49\linewidth]{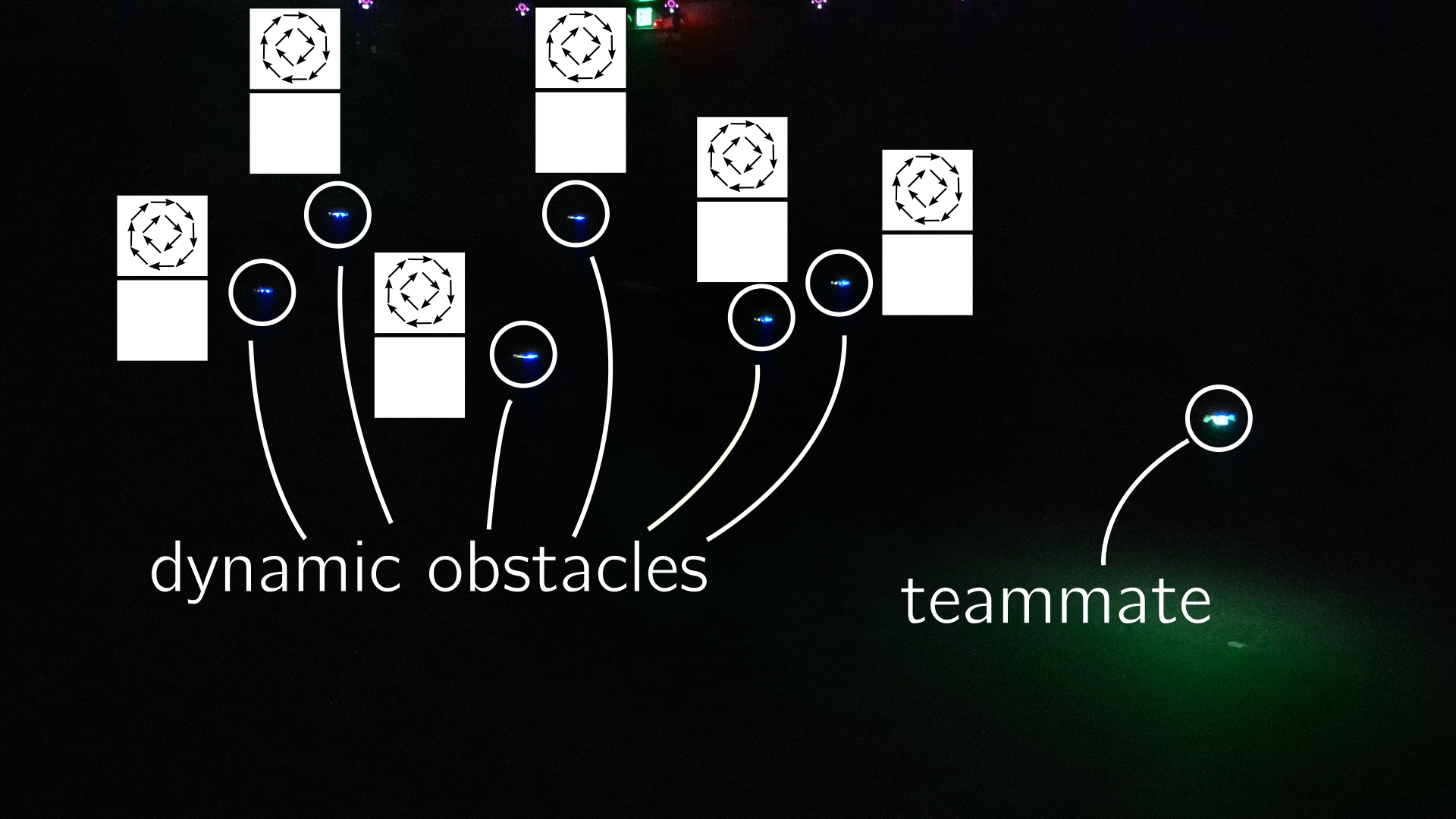}}\hfill
    \subfloat[A teammate navigating through six constant velocity repulsive obstacles.]{\includegraphics[width=0.49\linewidth]{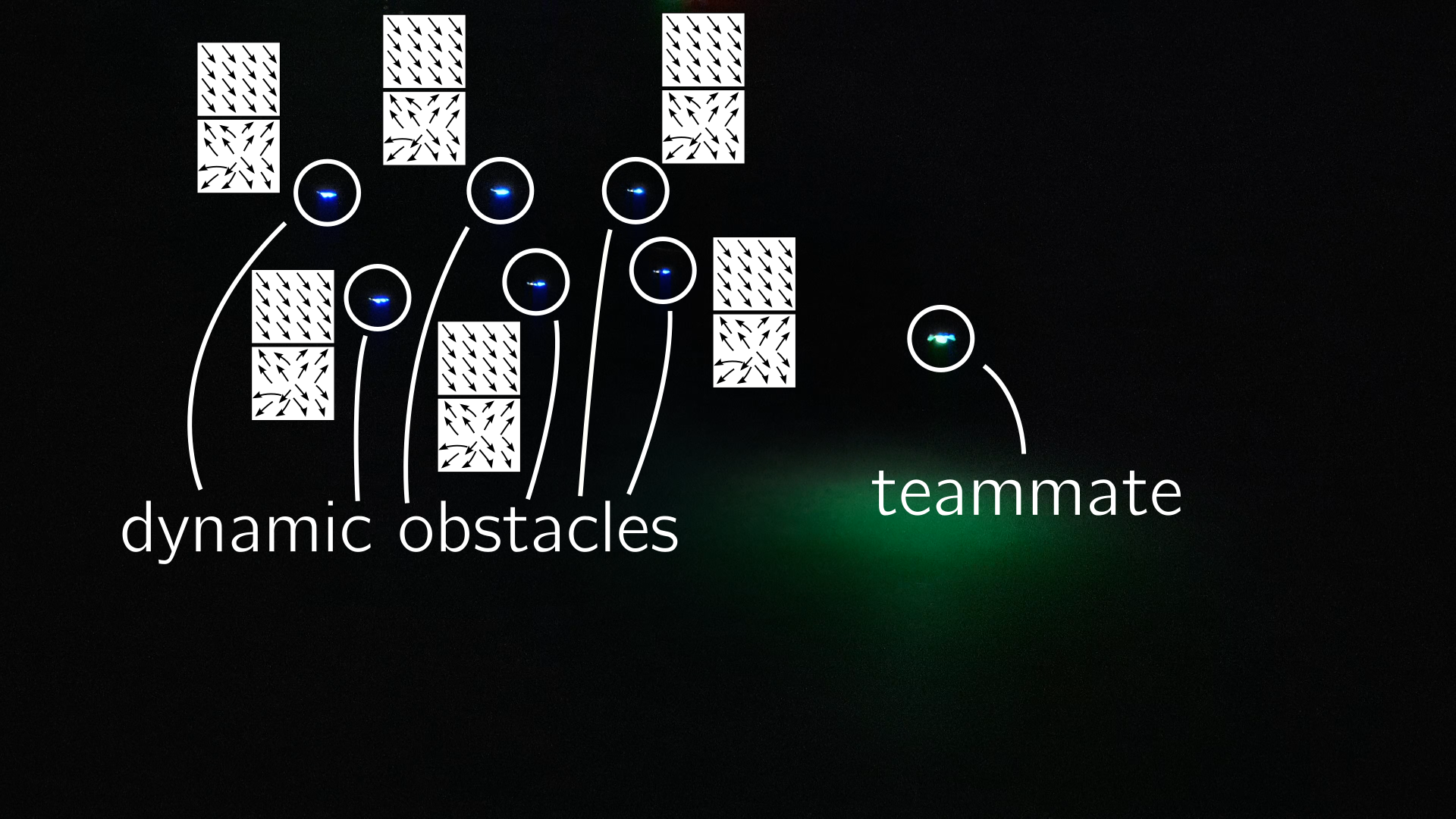}}\\
    \vspace{-0.1in}
    \subfloat[A teammate navigating through three goal attractive repulsive and three goal attractive not interactive obstacles.]{\includegraphics[width=0.49\linewidth]{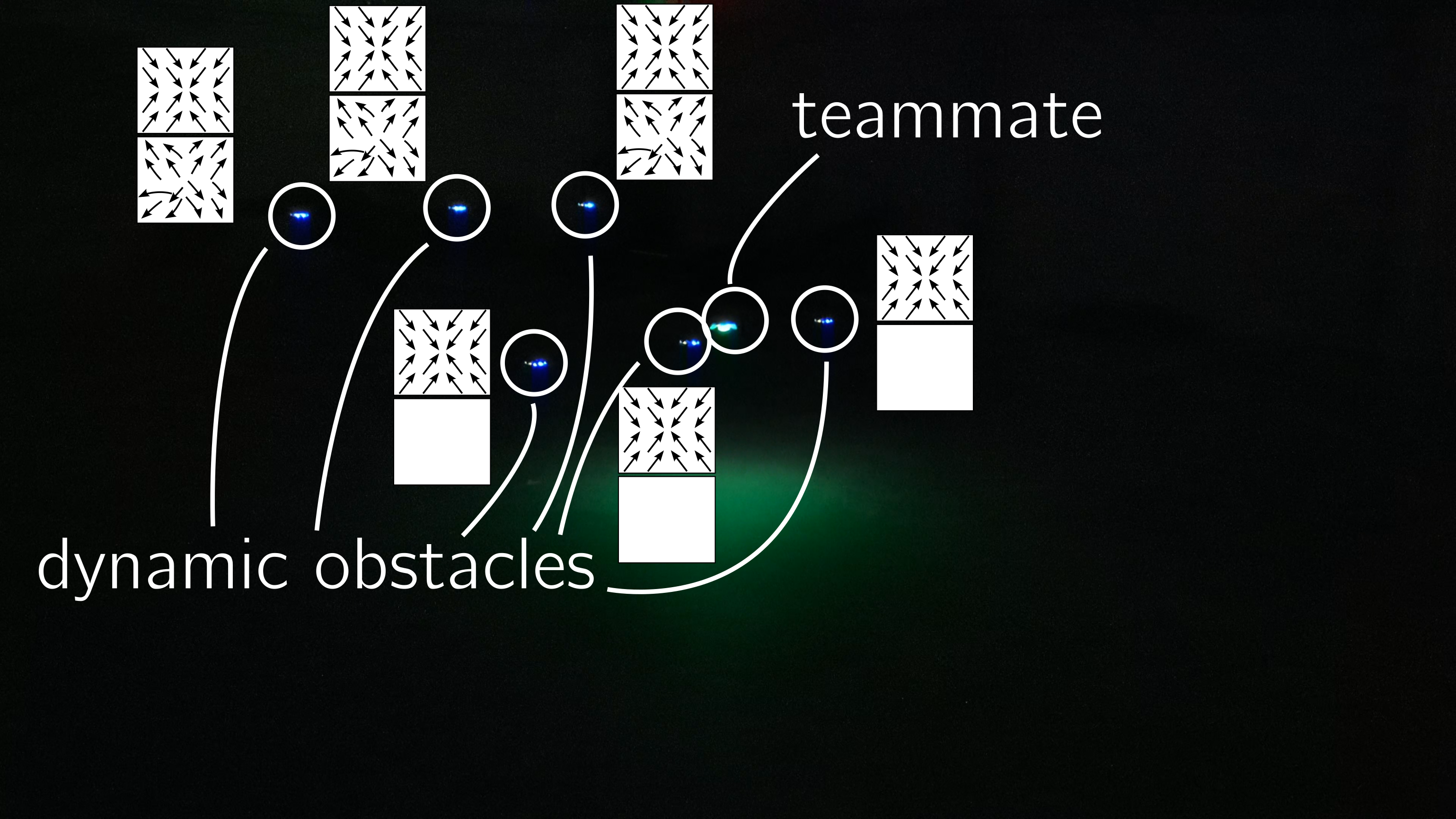}}\hfill
    \subfloat[Four teammates navigating through three rotating not interactive dynamic obstacles and a static obstacle.]{\includegraphics[width=0.49\linewidth]{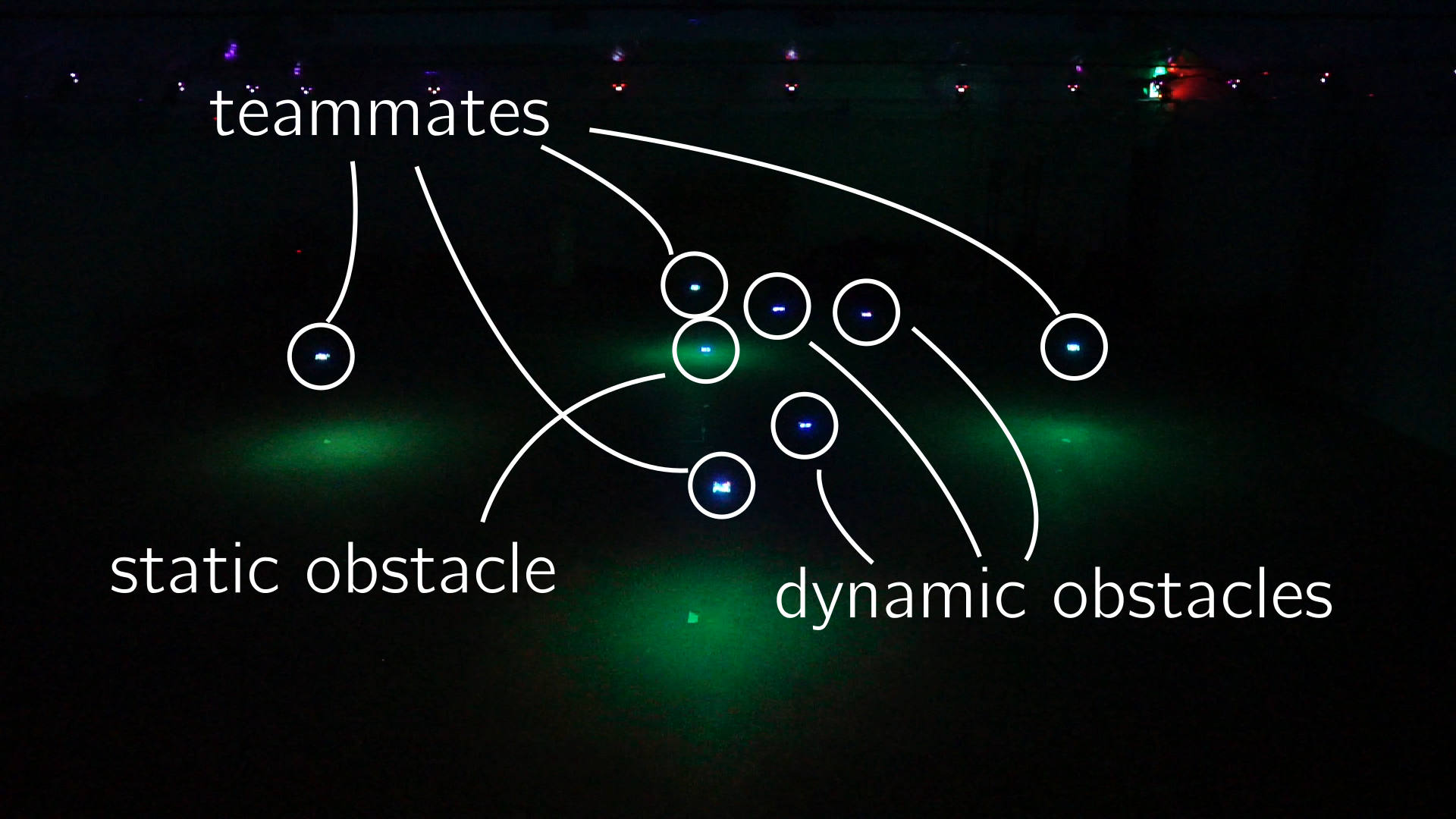}}
    \caption{\textbf{Pictures from physical robot experiments.} We implement our algorithm for physical quadrotor flight. We conduct single and multi-robot experiments to show the real-world applicability of our algorithm in various environments. In our physical robot experiments, teammates navigate through the environments without collisions or deadlocks. The recordings of the physical robot experiments can be found in the supplemental video.}
    \vspace{-0.2in}
    \label{Figure:PhysicalExperiments}
\end{figure}

We implement and run DREAM for Crazyflie 2.1 quadrotors.
We use quadrotors as i) dynamic obstacles moving according to goal attractive, rotating, or constant velocity movement models and repulsive interaction model, ii) static obstacles, and iii) teammates navigating to their goal positions using our planner.
Each planning quadrotor runs the predictors in real time to generate a probability distribution over behavior models of each dynamic obstacle.
Then, it runs our planner to compute trajectories in real time.\looseness=-1

For obstacle and robot localization, we use VICON motion tracking system, and we manage the Crazyflies using Crazyswarm~\cite{preiss2017crazyswarm}.
We use Robot Operating System (ROS) as the underlying software system.
Predictors and our algorithm run on a separate process for each robot in a base station computer with Intel(R) Xeon(R) CPU E5-2630 v4 @2.20GHz CPU, running Ubuntu 20.04 as the operating system.\looseness=-1

Pictures from the physical experiments can be seen in Fig.~\ref{Figure:PhysicalExperiments}.
The recordings from our physical experiments can be found in our supplemental video.
Our physical experiments show the feasibility of running DREAM in real-time in the real world.
\looseness=-1

\section{Conclusion}
We present DREAM--a decentralized multi-robot real-time trajectory planning algorithm for mobile robots navigating in environments with static and interactive dynamic obstacles.
DREAM explicitly minimizes the probabilities of collision with static and dynamic obstacles and violations of discretized separating hyperplane trajectories with respect to teammates as well as distance, duration, and rotations using a multi-objective search method; and energy usage during optimization.
The behavior of dynamic obstacles is modeled using two vector fields, namely movement and interaction models.
DREAM simulates the behaviors of dynamic obstacles during decision-making in response to the actions the planning robot is going to take using the interaction models.
We present three online model-based prediction algorithms to predict the behavior of dynamic obstacles and assign probabilities to them.
We extensively evaluate DREAM in different environments and configurations and compare with three state-of-the-art decentralized real-time multi-robot navigation decision-making methods. DREAM considerably improves the state-of-the-art, achieving up to 1.68x success rate using as low as 0.28x time in single-robot, and up to 2.15x success rate using as low as 0.36x time in multi-robot experiments compared to the best baseline.
We show its feasibility in the real-world by implementing and running it for quadrotors.\looseness=-1

Future work includes modeling inter-dynamic obstacle interactions during decision-making.\looseness=-1

\bibliographystyle{IEEEtran}
\bibliography{bibliography}



\end{document}